\newcommand{\genComments}[2]{\ifnum\comments=1{\color{#1}{#2}}\fi}
\newtheorem{theorem}{Theorem}
\newtheorem{definition}[theorem]{Definition}
\newtheorem{lemma}[theorem]{Lemma}
\newtheorem{corollary}[theorem]{Corollary}
\newtheorem{proposition}[theorem]{Proposition}
\DeclareMathOperator*{\argmax}{arg\,max}
\newcommand{\Ldim}{\mathrm{Ldim}}
\newcommand{\Pdim}{\mathrm{Pdim}}
\newcommand{\fat}{\mathrm{fat}}
\newcommand{\twr}{\mathrm{twr}}
\newcommand{\ord}{\mathrm{ord}}
\newcommand{\CNC}{\textsc{ColorAndChoose}}
\newcommand{\SOAt}{\textsc{SOA}$_{\tau}$}
\newcommand{\SOA}{\textsc{SOA}}
\newcommand{\prob}{\mathbb{P}}
\newcommand{\E}{\mathbb{E}}
\newcommand{\cX}{\mathcal{X}}
\newcommand{\cY}{\mathcal{Y}}
\newcommand{\cH}{\mathcal{H}}
\newcommand{\cHprime}{\mathcal{H}^{\prime}}
\newcommand{\cF}{\mathcal{F}}
\newcommand{\cP}{\mathcal{P}}
\newcommand{\cA}{\mathcal{A}}
\newcommand{\cO}{\mathcal{O}}
\newcommand{\cD}{\mathcal{D}}
\newcommand{\istar}{i^{*}}
\newcommand{\logstar}{\log^{*}}
\newcommand{\kprime}{k^{\prime}}
\newcommand{\uprime}{u^{\prime}}
\newcommand{\Tprime}{T^{\prime}}
\newcommand{\Dprime}{D^{\prime}}
\newcommand{\Sprime}{S^{\prime}}
\newcommand{\Tprimeprime}{T^{\prime\prime}}
\newcommand{\ubar}{\bar{u}}
\newcommand{\zeroone}{\ell^{0-1}}
\newcommand{\yhat}{\hat{y}}
\newcommand{\ind}{\mathbb{I}}
\title{On the Equivalence between Online and Private Learnability beyond Binary Classification}
\author{%
Young Hun Jung\thanks{Equal Contribution. Please note that this is a corrected version of the paper originally published in NeurIPS 2020. See Section~\ref{sec:correction}.} \\
  Department of Statistics\\
  University of Michigan\\
  Ann Arbor, MI 48109 \\
  \texttt{yhjung@umich.edu} \\
  \And
  Baekjin Kim$^*$ \\
  Department of Statistics\\
  University of Michigan\\
  Ann Arbor, MI 48109 \\
  \texttt{baekjin@umich.edu} \\
  \And 
  Ambuj Tewari \\
  Department of Statistics\\
  University of Michigan\\
  Ann Arbor, MI 48109 \\
  \texttt{tewaria@umich.edu} \\  
}
\begin{document}

\maketitle


\begin{abstract}
%

\citet{alon2019private} and \citet{bun2020equivalence} recently showed that online learnability and private PAC learnability are equivalent in binary classification.
We investigate whether this equivalence extends to multi-class classification and regression.
First, we show that private learnability implies online learnability in both settings. Our extension involves studying
a novel variant of the Littlestone dimension that depends on a tolerance parameter and on an appropriate generalization
of the concept of threshold functions beyond binary classification. Second, we show that while online learnability
continues to imply private learnability in multi-class classification, current proof techniques encounter significant hurdles
in the regression setting. While the equivalence for regression remains open, we provide non-trivial sufficient
conditions for an online learnable class to also be privately learnable.
\end{abstract}

\section{Introduction}

\textit{Online learning} and \textit{differentially-private (DP) learning} have been well-studied in the machine learning literature. 
While these two subjects are seemingly unrelated, recent papers have revealed a strong connection 
between online and private learnability via the notion of \textit{stability}
\citep{abernethy2019online, agarwal2017price, gonen2019private}. 
The notion of differential privacy is,
at its core, less about privacy and more about algorithmic stability since the output distribution of a 
DP algorithm should be robust to small changes in the input. 
Stability also plays a key role in developing online learning algorithms such as
follow-the-perturbed-leader (FTPL) and follow-the-regularized-leader (FTRL) \citep{abernethy2014online}.

Recently \citet{alon2019private} and \citet{bun2020equivalence} showed that 
online learnability and private PAC learnability are equivalent in binary classification. 
\citet{alon2019private} showed that private PAC learnability implies finite Littlestone dimension (Ldim) in two steps;
(i) every approximately DP learner for a class with Ldim $d$ 
requires $\Omega(\logstar d)$ thresholds (see Section \ref{sec:additional.notation} for the definition of $\logstar$), and (ii) the class of thresholds over $\mathbb{N}$ 
cannot be learned in a private manner. 
\citet{bun2020equivalence} proved the converse statement via a notion of algorithmic stability, called \textit{global stability}.
They showed (i) every class with finite Ldim 
can be learned by a globally-stable learning algorithm and (ii) they use global stability to derive a DP algorithm.
In this work, we investigate whether this equivalence extends to multi-class classification (MC) 
and regression, which is one of open questions raised by \citet{bun2020equivalence}.

In general, online learning and private learning for MC and regression
have been less studied. In binary classification
without considering privacy, the Vapnik-Chervonenkis dimension (VCdim) of hypothesis classes 
yields tight sample complexity bounds in the batch learning setting, 
and \citet{littlestone1988learning} defined Ldim as a combinatorial parameter that was later shown to fully characterize
hypothesis classes that are learnable in the online setting \cite{ben2009agnostic}.
Until recently, however, it was unknown what complexity measures for MC or regression classes characterize online or private learnability.
\citet{daniely2015multiclass} extended the Ldim to the MC setting, 
and \citet{rakhlin2015online} proposed the sequential fat- shattering dimension, an online counterpart of the fat-shattering dimension in the batch setting \citep{bartlett1996fat}.

\subsection{Related works}
DP has been extensively studied in the machine learning literature \citep{dwork2009differential, 
dwork2014algorithmic, sarwate2013signal}. Private PAC and agnostic learning 
were formally studied in the seminal work of \citet{kasiviswanathan2011can}, and the sample complexities of private learners were characterized in the later work of \citet{beimel2013characterizing}.

\citet{dwork2014algorithmic} identified stability as a common factor of learning and differential privacy. 
\citet{abernethy2019online} proposed a DP-inspired stability-based methodology to design 
online learning algorithms with excellent theoretical guarantees, and \citet{agarwal2017price} showed that
stabilization techniques such as regularization or perturbation in online learning preserve DP.
\citet{feldman2014sample} relied on communication complexity to show that every purely DP
learnable class has a finite Ldim.
Purely DP learnability is a stronger condition than online learnability, 
which means that there exist online learnable classes that are not purely DP learnable.
More recently, \citet{alon2019private} and \citet{bun2020equivalence} established the equivalence between 
online and private learnability in a non-constructive manner. 
\citet{gonen2019private} derived an efficient black-box reduction from purely DP learning to online learning.
In the paper we will focus on approximate DP instead of pure DP (see Definition~\ref{def:DP}).

\subsection{Main results and techniques}

Our main technical contributions are as follows.

\begin{itemize}[leftmargin = *]
\item In Section \ref{sec:link}, we develop a novel variant of the Littlestone dimension that depends on a 
tolerance parameter $\tau$, denoted by $\Ldim_{\tau}$. 
While online learnable regression problems do not naturally reduce to learnable MC problems by discretization, 
this relaxed complexity measure bridges online MC learnability 
and regression learnability in that it allows us to consider a regression problem as a relatively simpler MC problem (see Proposition \ref{prop:Ldim.fat}).

\item In Section \ref{sec:PL.OL}, we show that private PAC learnability implies online learnability in both MC and regression settings. 
We appropriately generalize the concept of threshold functions beyond the binary classification setting 
and lower bound the number of these functions using the complexity measures 
(see Theorem \ref{thm:alon.thm3}). Then the argument of \citet{alon2019private} 
that an infinite class of thresholds cannot be privately learned can be extended to 
both settings of interest.

\item In Section \ref{sec:OLPL}, we show that while online learnability continues to imply private learnability in MC (see Theorem \ref{thm:OLPL}), current proof techniques based on \textit{global stability} and \textit{stable histogram} encounter significant obstacles in the regression problem. While this direction for regression setting still remains open,
we provide non-trivial sufficient conditions for an online learnable class to also be privately learnable (see Theorem \ref{thm:sufficient}).
\end{itemize}

\section{Preliminaries}

We study multi-class classification and regression problems in this paper. In multi-class classification 
problems with $K \ge 2$ classes, we let $\cX$ be the input space and $\cY = [K] \triangleq \{ 1,2, \cdots, K\}$ 
be the output space, and the \textit{standard zero-one loss} $\zeroone(\yhat; y) = \ind(\yhat \neq y)$ is considered.

The regression problem is similar to the classification problem, except that the label becomes continuous, $\cY = [-1,1]$, and 
the goal is to learn a real-valued function $f : \cX \rightarrow \cY$ that approximates well labels of future instances. 
We consider the \textit{absolute loss} $\ell^{abs}(\hat{y}; y) = |\hat{y}-y|$ in this setting.
Results under the absolute loss can be generalized to any other Lipschitz losses with modified rates. 

\subsection{PAC learning}
Let $\cX$ be an input space, $\cY$ be an output space, and $\cD$ be an unknown distribution 
over $\cX \times \cY$. A \textit{hypothesis} is a function mapping from $\cX$ to $\cY$. 
The \textit{population loss} of a hypothesis $h : \cX \rightarrow \cY$ 
with respect to a loss function $\ell$ is defined by $\text{loss}_{\cD}(h) = \E_{(x,y) \sim \cD}\big[ \ell \big(h(x); y\big) \big]$.
We also define the \textit{empirical loss} of a hypothesis $h$ with respect to a 
loss function $\ell$ and a sample $S = \big((x_i, y_i)\big)_{1:n}$ as 
$\text{loss}_{S}(h) = \frac{1}{n} \sum_{i=1}^n \ell \big(h(x_i);y_i\big)$.
The distribution $\cD$ is said to be \textit{realizable} with respect to $\cH$ if there 
exists $h^{\star} \in \cH$ such that $\text{loss}_{\cD}(h^{\star})=0$.

\begin{definition}[PAC learning]
\label{def:PAC}
A hypothesis class $\cH$ is PAC learnable with sample complexity $m(\alpha, \beta)$ if 
there exists an algorithm $\cA$ such that for any $\cH$-realizable distribution $\cD$ over $\cX \times \cY$, 
an accuracy and confidence parameters $\alpha, \beta \in (0,1)$, if $\cA$ is given input 
samples $S =\big((x_i, y_i)\big)_{1:m} \sim \cD^m$ such that $m \ge m(\alpha, \beta)$, then 
it outputs a hypothesis $h : \cX \rightarrow \cY$ satisfying $\text{loss}_{\cD}(h) \le \alpha$ 
with probability at least $1-\beta$. A learner which always returns hypotheses inside the
class $\cH$ is called a proper learner, otherwise is called an improper learner.
\end{definition}

\subsection{Differential privacy}
\textit{Differential privacy} (DP) \citep{dwork2014algorithmic}, a standard notion of statistical data privacy, was introduced to study data analysis mechanism that do not reveal too much information 
on any single sample in a dataset.

\begin{definition}[Differential privacy \citep{dwork2014algorithmic}]
\label{def:DP}
Data samples $S, S^{\prime} \in (\cX \times \cY)^n$ are called neighboring 
if they differ by exactly one example. A randomized algorithm $\cA : (\cX \times \cY)^n \rightarrow \cY^{\cX}$ 
is $(\epsilon, \delta)$-differentially private if for all neighboring data samples $S, S^{\prime} \in (\cX \times \cY)^n$, 
and for all measurable sets $T$ of outputs,
\begin{align*}
\prob\big(\cA(S) \in T \big) \le e^{\epsilon} \cdot \prob\big(\cA(S^{\prime}) \in T \big) + \delta.
\end{align*}
The probability is taken over the randomness of $\cA$. When $\delta =0$ we say that $\cA$ preserves 
pure differential privacy, otherwise (when $\delta>0$) we say that $\cA$ preserves approximate 
differential privacy.
\end{definition}
Combining the requirements of PAC and DP learnability yields the definition of private PAC learner.
\begin{definition}[Private PAC learning \citep{kasiviswanathan2011can}]
\label{def:PPAC}
A hypothesis class $\cH$ is $(\epsilon,\delta)$-differentially private PAC learnable with sample complexity $m(\alpha, \beta)$ if 
it is PAC learnable with sample complexity $m(\alpha, \beta)$ by an algorithm $\cA$ 
which is $(\epsilon, \delta)$-differentially private.
\end{definition}

\subsection{Online learning}

The online learning problem can be viewed as a repeated game between a learner and an adversary. 
Let $T$ be a time horizon and $\cH \subset \cY^{\cX}$ be a class of predictors over a domain $\cX$. 
At time $t$, the adversary chooses a pair $(x_t, y_t) \in \cX \times \cY$, and
the learner observes the instance $x_t$, predicts a label $\hat{y}_t \in \cY$, and finally
observes the loss $\ell \big(\hat{y}_t ; y_t \big)$. 
This work considers the \textit{full-information setting} where the learner receives 
the true label information $y_t$. 
The goal is to minimize the \textit{regret}, 
namely the cumulative loss that the learner actually observed compared to the best prediction in hindsight:
\begin{align*}
\sum_{t=1}^T \ell \big(\hat{y}_t ; y_t \big) - \min_{h^{\star} \in \cH} \sum_{t=1}^T \ell \big(h^{\star}(x_t) ; y_{t}\big).
\end{align*}
A class $\cH$ is \textit{online learnable} if for every $T$, 
there is an algorithm that achieves sub-linear regret $o(T)$ against any sequence of $T$ instances. 

The \textit{Littlestone dimension} is a combinatorial parameter that exactly characterizes online learnability for binary 
hypothesis classes \citep{ben2009agnostic, littlestone1988learning}.
\citet{daniely2015multiclass} further extended this to the multi-class setting.
We need the notion of mistake trees to define this complexity measure. 
A \textit{mistake tree} is a binary tree whose internal nodes are labeled by elements of $\cX$.
Given a node $x$, its descending edges are labeled by distinct $k, \kprime \in \cY$. 
Then any root-to-leaf path can be expressed as a sequence of instances 
$\big((x_i,y_i)\big)_{1:d}$, where $x_i$ represents the $i$-th internal node in the path, and
$y_i$ is the label of its descending edge in the path.
We say that a tree $T$ is \textit{shattered} by $\cH$ if for any root-to-leaf path $\big((x_i,y_i)\big)_{1:d}$ of $T$,
there is $h \in \cH$ such that $h(x_i) =y_i$ for all $i \le d$.
The Littlestone dimension of multi-class hypothesis class $\cH$, 
$\Ldim (\cH)$, is the maximal depth of any $\cH$-shattered mistake tree.
Just like binary classification, a set of MC hypotheses $\cH$ is online learnable if and only if $\Ldim(\cH)$ is finite.
%

The (sequential) \textit{fat-shattering dimension} is the scale-sensitive complexity measure
for real-valued function classes \citep{rakhlin2015online}. 
A mistake tree for real-valued function 
class $\cF$ is a binary tree whose internal nodes are labeled by $(x, s) \in \cX \times \cY$,
where $s$ is called a \textit{witness to shattering}.  
Any root-to-leaf path in a mistake tree can be expressed as a sequence of tuples 
$\big((x_i, \epsilon_i)\big)_{1:d}$, where $x_i$ is the label of the $i$-th internal node in the path, 
and $\epsilon_i = +1$ if the $(i+1)$-th node is the right child of the $i$-th node, and otherwise $\epsilon_i= -1$ 
(for the leaf node, $\epsilon_{d}$ can take either value). 
A tree $T$ is $\gamma$-shattered by $\cF$ if for any root-to-leaf path $\big((x_i, \epsilon_i)\big)_{1:d}$ of $T$,
there exists $f \in \cF$ such that $\epsilon_i \left(f(x_i) - s_i\right) \ge \gamma/2 $ for all $i \le d$. 
The fat-shattering dimension at scale $\gamma$, denoted by $\fat_{\gamma}(\cF)$, 
 is the largest $d$ such that $\cF$ $\gamma$-shatters a mistake tree of depth $d$. 
For any function class $\cF \subset [-1,1]^{\cX}$, $\cF$ is online learnable in the supervised setting 
under the absolute loss if and only if $\fat_{\gamma}(\cF)$ is finite for any $\gamma >0$ \citep{rakhlin2015online}. 

The (sequential) \textit{Pollard pseudo-dimension} is a scale-free fat-shattering 
dimension for real-valued function classes. For every $f \in \cF$, 
we define a binary function $B_{f}:\cX \times \cY \rightarrow \{-1, +1\}$ by $B_f (x, s) = \textup{sign}\left(f(x) - s\right)$ and let $\cF^{+} = \{ B_f ~|~  f \in \cF \} $.
Then we define the Pollard pseudo-dimension by $\Pdim(\cF) = \Ldim(\cF^{+})$.
It is easy to check that $\fat_{\gamma}(\cF) \le \Pdim(\cF)$ for all $\gamma$. 
That being said, finite Pollard pseudo-dimension is a sufficient condition for online learnability but not a necessary condition (e.g., bounded Lipschitz functions on [0,1] separate the two notions).

\subsection{Additional notation}
\label{sec:additional.notation}
We define a few functions in a recursive manner. The \textit{tower function} $\twr_{t}$ and the \textit{iterated logarithm} $\log^{(m)}$ are defined respectively as
\begin{align*}
\twr_{t}(x) = 
\begin{cases}
	x &\text{ if } t = 0, \\
	2^{\twr_{t-1}(x)} &\text{ if } t > 0,
\end{cases}
\quad 
\log^{(m)}x=
\begin{cases}
	\log x &\text{ if } m = 1, \\
	\log^{(m-1)}\log x &\text{ if } m > 1.
\end{cases}
\end{align*}
Lastly, we use $\logstar x$ to denote the minimal number of recursions for the iterated logarithm to return the value less than or equal to one:
\begin{align*}
\logstar x=
\begin{cases}
	0 &\text{ if } x \le 1, \\
	1 + \logstar\log x &\text{ if } x > 1.
\end{cases}
\end{align*}


\section{A link between multi-class and regression problems}
\label{sec:link}

As a tool to analyze regression problems, we discretize the continuous space $\cY$ into intervals and consider the problem as a multi-class problem. 
Specifically, given a function $f \in [-1, 1]^{\cX}$ and a scalar $\gamma$, we split the interval $[-1, 1]$ into $\lceil\frac{2}{\gamma}\rceil$ intervals of length $\gamma$ and define $[f]_{\gamma}(x)$ to be the index of interval that $f(x)$ belongs to. 
We can also define $[\cF]_{\gamma} = \{[f]_{\gamma}~|~f \in \cF\}$. 
In this way, if the multi-class problem associated with $[\cF]_{\gamma}$ is learnable, 
we can infer that the original regression problem is learnable up to accuracy $O(\gamma)$. 
Quite interestingly, however, the fact that $\cF$ is (regression) learnable does not imply that $[\cF]_{\gamma}$ is (multi-class) learnable. 
For example, it is well known that a class $\cF$ of bounded Lipschitz functions on [0,1] is learnable, but $[\cF]_{1}$ includes all binary functions on $[0,1]$, which is not online learnable. 

In order to tackle this issue, we propose a generalized zero-one loss in multi-class problems.
In particular, we define a \textit{zero-one loss with tolerance $\tau$},
\[
\zeroone_{\tau}(\yhat;y) = \ind(|y-\yhat| > \tau).
\]
Note that the classical zero-one loss is simply $\zeroone_{0}$. 
This generalized loss allows the learner to predict labels that are not equal to the true label but close to it. 
This property is well-suited in our setting since as far as $|y-\yhat|$ is small, 
the absolute loss in the regression problem remains small. 

We also extend the Littlestone dimension with tolerance $\tau$. 
Fix a tolerance level $\tau$.
When we construct a mistake tree $T$, we add another constraint that 
each node's descending edges are labeled by two labels $k, \kprime \in [K]$ such that $\zeroone_{\tau}(k ; \kprime) = 1$. 
Let $\Ldim_{\tau}(\cH)$ be the maximal height of such binary shattered trees. 
(Again, $\Ldim_{0}(\cH)$ becomes the standard $\Ldim(\cH)$.)

\begin{algorithm}[t]
	\begin{algorithmic}[1]
	    \STATE \textbf{Initialize:} $V_{0} = \cH$
	    \FOR{$t = 1, \cdots, T$}
	    \STATE Receive $x_{t}$
	    \STATE For $k \in [K]$, let $V_{t}^{(k)} = \{h \in V_{t-1} 
	    ~|~
	    h(x_{t}) = k\}$
	    \STATE Predict $\yhat_{t} = \argmax_{k} \Ldim_{\tau}(V_{t}^{(k)})$
	    \STATE Receive true label $y_{t}$ and update $V_{t} = V_{t}^{(y_{t})}$
	    \ENDFOR
	\end{algorithmic}
	\caption{Standard optimal algorithm with tolerance $\tau$ (\SOAt)}
	\label{alg:SOAt}
\end{algorithm}

We record several useful observations.
The proofs can be found in Appendix \ref{sec:link.app}.
\begin{lemma}
\label{lem:Ldim.t}
Let $\cH \subset [K]^{\cX}$ be a class of multi-class hypotheses.
\newcounter{item:tauLdim1}
\newcounter{item:tauLdim2}
\newcounter{item:tauLdim3}
\begin{enumerate}[leftmargin = *]
\item $\Ldim_{\tau}(\cH)$ is decreasing in $\tau$.
\setcounter{item:tauLdim1}{\value{enumi}}
\item \SOAt~(Algorithm \ref{alg:SOAt}) makes at most $\Ldim_{\tau}(\cH)$ mistakes with respect to $\zeroone_{\tau}$.
\setcounter{item:tauLdim2}{\value{enumi}}
\item For any deterministic learning algorithm, an adversary can force $\Ldim_{2\tau}(\cH)$ mistakes with respect to $\zeroone_{\tau}$.
\setcounter{item:tauLdim3}{\value{enumi}}
\end{enumerate}
\end{lemma}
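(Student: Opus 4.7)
The plan is purely set-theoretic. Fix $\tau_1 \le \tau_2$ and let $T$ be any mistake tree that witnesses $\Ldim_{\tau_2}(\cH)$, so every pair of sibling-edge labels $k, \kprime$ satisfies $|k-\kprime|>\tau_2 \ge \tau_1$. Hence $T$ is also a valid mistake tree in the sense required by $\Ldim_{\tau_1}$, and it is shattered by the same $h$'s (shattering is about values of hypotheses on the path, which is independent of $\tau$). Taking a maximum-depth such $T$ gives $\Ldim_{\tau_1}(\cH) \ge \Ldim_{\tau_2}(\cH)$.

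\textbf{Item 2 (SOA$_\tau$ upper bound).} I would use $\Ldim_{\tau}(V_t)$ as a potential and show that each $\zeroone_{\tau}$-mistake strictly decreases it. Suppose the learner errs at round $t$, i.e.\ $|\yhat_t - y_t|>\tau$, and toward contradiction assume $\Ldim_{\tau}(V_t) = \Ldim_{\tau}(V_t^{(y_t)}) \ge \Ldim_{\tau}(V_{t-1})$. Since the algorithm picked $\yhat_t$ to maximize $\Ldim_{\tau}(V_t^{(k)})$ over $k \in [K]$, we also have $\Ldim_{\tau}(V_t^{(\yhat_t)}) \ge \Ldim_{\tau}(V_t^{(y_t)}) \ge \Ldim_{\tau}(V_{t-1})$. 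Take witnessing $\tau$-shattered trees for $V_t^{(y_t)}$ and $V_t^{(\yhat_t)}$ and attach them as the left and right subtrees of a new root labeled $x_t$ with descending edges labeled $y_t$ and $\yhat_t$; the edge-label constraint is satisfied since $|y_t-\yhat_t|>\tau$, and shattering carries over because any $h \in V_t^{(y)}$ satisfies $h(x_t)=y$ by construction. This produces a $\tau$-valid shattered tree for $V_{t-1}$ of depth $\Ldim_{\tau}(V_{t-1})+1$, the desired contradiction. Hence every mistake drops the potential by at least one, and since $\Ldim_{\tau}(V_0) = \Ldim_{\tau}(\cH)$ and $\Ldim_{\tau}(\cdot)\ge 0$, at most $\Ldim_{\tau}(\cH)$ mistakes occur.

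\textbf{Item 3 (Adversarial lower bound).} I would give a direct adversary that walks down a $2\tau$-shattered tree. Let $T$ be a tree of depth $d=\Ldim_{2\tau}(\cH)$ that is $2\tau$-shattered by $\cH$. At round $t$, the adversary presents the label $x_t$ of the current node. After the deterministic learner commits to $\yhat_t$, look at the two descending-edge labels $k,\kprime$ with $|k-\kprime|>2\tau$. By the triangle inequality, we cannot have both $|k-\yhat_t|\le\tau$ and $|\kprime-\yhat_t|\le\tau$, so at least one of them, call it $y_t$, satisfies $|y_t-\yhat_t|>\tau$, which is a $\zeroone_{\tau}$-mistake. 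The adversary sets the true label to $y_t$ and descends along that edge. After $d$ rounds the adversary has traced a root-to-leaf path, and by shattering there exists $h^\star \in \cH$ consistent with the labels on this path, so the transcript is realizable. This forces $\Ldim_{2\tau}(\cH)$ mistakes.

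\textbf{Main obstacle / comment.} None of the three parts is individually deep; the only nontrivial point is the mismatch between the $\tau$ in part (2) and the $2\tau$ in part (3). The triangle inequality in the adversary argument inherently needs the two edge labels to be at distance $>2\tau$ so that one of them is definitely $\tau$-far from any single prediction, which is why the lower bound degrades to $\Ldim_{2\tau}$ rather than $\Ldim_{\tau}$. Tightening this gap (or showing it is intrinsic) is not required here, but it is the one subtlety worth flagging in the proof.
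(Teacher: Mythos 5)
Your proposal is correct and follows essentially the same route as the paper's own proof: Part 1 by observing a $\tau$-valid tree is automatically $\tau'$-valid for $\tau'<\tau$; Part 2 by a potential argument on $\Ldim_\tau(V_t)$ with the same tree-concatenation contradiction; Part 3 by the adversary walking a $2\tau$-shattered tree. You are a bit more explicit in spots (the argmax chain of inequalities in Part 2, the triangle-inequality step and realizability of the transcript in Part 3) where the paper is terse, but there is no substantive difference in approach.
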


Equipped with the relaxed loss, the following proposition connects regression learnability to multi-class learnability with discretization. 
We emphasize that even though the regression learnability does not imply multi-class learnability with the standard zero-one loss, learnability under $\zeroone_{\tau}$ can be derived. 
In addition to that, it can be shown that finite $\Ldim_{\tau}([\cF]_{\gamma})$ implies finite $\fat_{\gamma}(\cF)$.

\begin{proposition}
\label{prop:Ldim.fat}
Let $\cF \subset [-1, 1]^{\cX}$ be a regression hypothesis class
and suppose $\fat_{\gamma}(\cF) = d$.
Then we have for any positive integer $n$,
\[
\Ldim_{n}([\cF]_{\gamma/2(n+1)}) \ge d \ge \Ldim_{n}([\cF]_{\gamma/n}).
\]
\end{proposition}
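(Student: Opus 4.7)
The plan is to prove the two inequalities separately through explicit tree-to-tree constructions; in both directions the tree structure and the $x$-labels are preserved, and only the witness values (for the fat-shattered tree) or the descending-edge labels (for the tolerance Littlestone tree) need to be translated.

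For the easier inequality $d \ge \Ldim_{n}([\cF]_{\gamma/n})$, I will start from a depth-$D$ tolerance-$n$ mistake tree for $[\cF]_{\gamma/n}$ and construct a $\gamma$-fat-shattered tree of depth $D$ for $\cF$ sharing the same structure and $x$-labels. At each internal node whose descending edges are labeled $k_-, k_+$ (WLOG $k_+ > k_-$), the tolerance constraint $\zeroone_n(k_-; k_+) = 1$ gives $k_+ - k_- \ge n+1$, so the two target intervals $I_{k_-}, I_{k_+}$ of width $\gamma/n$ are separated by a gap of width at least $(k_+ - k_- - 1) \cdot \gamma/n \ge \gamma$. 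I would set the witness $s$ to the midpoint of this gap, guaranteeing distance at least $\gamma/2$ from each interval on the correct side. For any root-to-leaf path, the matching function $h = [f]_{\gamma/n}$ satisfies $f(x) \in I_{y}$, which immediately yields $\epsilon(f(x) - s) \ge \gamma/2$, as required by fat-shattering.

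For the harder inequality $\Ldim_{n}([\cF]_{\gamma/2(n+1)}) \ge d$, I would start from a $\gamma$-fat-shattered tree $T$ of depth $d$ with node labels $(x_i, s_i)$ and path-witnessing functions $f_P$, and construct a depth-$d$ tolerance-$n$ mistake tree $T'$ for $[\cF]_{\gamma/2(n+1)}$ of the same structure and $x$-labels. At each node $(x, s)$ of $T$ I assign descending-edge labels $k_-, k_+$ as the indices of the width-$\gamma/2(n+1)$ intervals containing $s - \gamma/2$ and $s + \gamma/2$; a direct calculation gives $k_+ - k_- = 2(n+1) > n$, satisfying the tolerance constraint. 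For each root-to-leaf path $P$, I match it using $h_P = [f_P]_{\gamma/2(n+1)}$ where $f_P$ is the witness guaranteed by fat-shattering of $T$ along the corresponding sign sequence.

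The main obstacle in the harder direction is reconciling the fat-shattering guarantee, which only places $f_P(x_p)$ on the correct side of $s_p$ by at least $\gamma/2$, with the strict equality $h_P(x_p) = y_p$ demanded by the $\Ldim$ shattering definition, which pins $[f_P]_{\gamma/2(n+1)}(x_p)$ to a specific interval index. My plan to overcome this is to exploit the $2(n+1)$-index slack between the two candidate labels together with an adaptive refinement of the witness functions: for each sub-class obtained by restricting along the path so far, I would pigeonhole over the possible values of $[f](x) \in \{1, \ldots, K\}$ and pass to a sub-class that still fat-shatters the relevant sub-tree. Ensuring that this refinement preserves the full depth $d$ uniformly across all paths and nodes --- and that the choice of labels at each node is compatible with every path passing through it --- is the key technical subtlety, and it is precisely this bookkeeping that dictates the sharper discretization scale $\gamma/2(n+1)$ on the lower-bound side versus $\gamma/n$ on the upper-bound side.
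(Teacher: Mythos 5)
Your proof of the second inequality, $d \ge \Ldim_n([\cF]_{\gamma/n})$, is correct and essentially matches the paper's argument: starting from a tolerance-$n$ mistake tree, place witness values in the gap (of width at least $\gamma$) between the two candidate intervals, giving a $\gamma$-shattered tree of the same depth; the paper phrases this as a contradiction, but the calculation is identical to yours.

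Your plan for the first inequality, $\Ldim_n([\cF]_{\gamma/2(n+1)}) \ge d$, has a genuine gap that you correctly flag but do not resolve. Fat-shattering only provides, per root-to-leaf path $P$, a witness $f_P$ that lands on the correct \emph{side} of each $s_i$ by at least $\gamma/2$, while $\Ldim_n$ shattering demands that $[f_P]_{\gamma/2(n+1)}(x_i)$ equal a \emph{single prescribed} interval index at each internal node; different paths through the same node may have witnesses falling in completely different intervals there. Your proposed fix --- pigeonhole over the $O(1/\gamma)$ interval indices and pass to a sub-class --- cannot preserve depth: witnesses in a fat-shattered tree are per-path, so restricting to paths whose witness falls in one chosen interval at the root need not leave a complete binary sub-tree at all, let alone one of depth $d-1$. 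Any honest pigeonhole would lose a multiplicative factor in depth at every level, which Proposition~\ref{prop:Ldim.fat} cannot afford, and the $2(n+1)$ ``slack'' between edge labels does nothing to pin the witness to a specific interval.

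The paper avoids this obstacle entirely by arguing at the level of the online game rather than via a tree-to-tree map. Because $\fat_\gamma(\cF) = d$, an adversary walking down a $\gamma$-fat-shattered tree forces any deterministic regression learner to incur absolute loss at least $\gamma/2$ on each of $d$ rounds. An MC learner for $[\cF]_{\gamma/2(n+1)}$ can be simulated as a regression learner that predicts the midpoint of its chosen interval and discards the sub-interval information the adversary withholds; one then checks that absolute loss $\ge \gamma/2$ forces the predicted and true interval indices to differ by more than $n$, since if they differed by at most $n$ the absolute error would be at most $\left(n+\frac{1}{2}\right)\cdot\frac{\gamma}{2(n+1)} = \frac{(2n+1)\gamma}{4(n+1)} < \frac{\gamma}{2}$. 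Hence the adversary forces $d$ mistakes with respect to $\zeroone_n$, and part 2 of Lemma~\ref{lem:Ldim.t} gives $\Ldim_n([\cF]_{\gamma/2(n+1)}) \ge d$. This game-theoretic reduction is precisely what lets the bound hold with no loss in $d$; a direct tree construction runs into exactly the consistency obstruction you identified.
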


\begin{proof}
Since $\fat_{\gamma}(\cF) = d$, in the online learning setting
an adversary can force any deterministic learner to suffer at least $
\gamma/2$ absolute loss for $d$ rounds. 
If we think of this problem as a multi-class classification problem using the hypothesis class $[\cF]_{\gamma/2(n+1)}$, 
using the same strategy, the adversary can force any deterministic learner to make mistakes with respect to $\zeroone_{n}$ for $d$ rounds. 
Note that the adversary reveals less information to the learner in the discretized multi-class problem.
Then Lemma~\ref{lem:Ldim.t} implies 
$\Ldim_{n}([\cF]_{\gamma/2(n+1)}) \ge d$.

On the other hand, suppose $\Ldim_{n}([\cF]_{\gamma/n}) > d$
and let $T$ be the binary shattered tree with tolerance $n$. 
For each node, we can set the witness point to be the middle point between the two labels of descending edges, 
and the resulting tree is $\gamma$-shattered by $\cF$. 
This contradicts the fact that $\fat_{\gamma}(\cF) = d$, and hence we obtain 
$d \ge \Ldim_{n}([\cF]_{\gamma/n})$. 
\end{proof}

There exist a few works that used regression models in multi-class classification \citep{rakesh2017ensemble, yang2005multi}.
To the best of our knowledge, however, our work is the first one that studies regression learnability by transforming the 
problem into a discretized classification problem along with a novel bridge, \textit{Littlestone dimension with tolerance}.


\section{Private learnability implies online learnability}
\label{sec:PL.OL}

In this section, we show that if a class of functions is privately learnable, then it is online learnable. 
To do so, we prove a lower bound of the sample complexity of privately learning algorithms using 
either $\Ldim(\cH)$ for the multi-class hypotheses or $\fat_{\gamma}(\cF)$ for the regression hypotheses. 
\citet{alon2019private} proved this in the binary classification setting first by showing that any large Ldim class contains sufficiently many threshold functions and then providing a lower bound of the sample complexity to privately learn threshold functions. 
We adopt their arguments, but one of the first non-trivial tasks is to define analogues of threshold functions in multi-class or regression problems. Note that, a priori, it is not clear what the right analogy is.
Let us first introduce threshold functions in the binary case. 
We say a binary hypothesis class $\cH$ has $n$ thresholds if there exist $\{x_{i}\}_{1:n} \subset \cX$ and $\{h_{i}\}_{1:n} \subset \cH$ such that $h_{i}(x_{j}) = 1$ if $i \le j$ and $h_{i}(x_{j}) = 0$ if $i > j$. 
We extend this as below.

\begin{definition}[Threshold functions in multi-class problems]
\label{def:threshold.MC}
Let $\cH \subset [K]^{\cX}$ be a hypothesis class. 
We say $\cH$ contains $n$ thresholds with a gap $\tau$ if there exist $k, \kprime \in [K]$, $\{x_{i}\}_{1:n} \subset \cX$, and $\{h_{i}\}_{1:n} \subset \cH$ such that $|k -\kprime| > \tau$ and
$h_{i}(x_{j}) = k$ if $i \le j$ and $h_{i}(x_{j}) = \kprime$ if $i > j$. 
\end{definition}

\begin{definition}[Threshold functions in regression problems]
\label{def:threshold.reg}
Let $\cF \subset [-1, 1]^{\cX}$ be a hypothesis class. 
We say $\cF$ contains $n$ thresholds with a margin $\gamma$ if there exist $\{x_{i}\}_{1:n} \subset \cX$, $\{f_{i}\}_{1:n} \subset \cF$, and $u, \uprime \in [-1, 1]$ such that $|u - \uprime| \ge \gamma$ and
$|f_{i}(x_{j})-u| \le \frac{\gamma}{20}$ if $i \le j$ and $|f_{i}(x_{j})-\uprime| \le \frac{\gamma}{20}$ if $i > j$.
\end{definition}

In Definition \ref{def:threshold.reg}, we allow the functions to oscillate with a margin $\frac{\gamma}{20}$ which is arbitrary. 
Any small margin compared to $|u-\uprime|$ would work, 
but this number is chosen to facilitate later arguments. 

Next we show that complex hypothesis classes contain a sufficiently large set of threshold functions. 
The following theorem extends the results by \citet[Theorem 3]{alon2019private}.
A complete proof can be found in Appendix \ref{sec:PL.OL.app}.
\begin{theorem}[Existence of a large set of thresholds]
\label{thm:alon.thm3}
Let $\cH \subset [K]^{\cX}$ and $\cF \subset [-1, 1]^{\cX}$ be multi-class and regression hypothesis classes, respectively. 
\begin{enumerate}[leftmargin = *]
\item If $\Ldim_{2\tau}(\cH) \ge d$, 
then $\cH$ contains $\lfloor \frac{\log_{K} d}{K^{2}} \rfloor$ thresholds with a gap $\tau$.
\item If $\fat_{\gamma}(\cF) \ge d$, 
then $\cF$ contains $\lfloor \frac{\gamma^{2}}{10^{4}}\log_{100/\gamma} d \rfloor$ thresholds with a margin $\frac{\gamma}{5}$.
\end{enumerate}
\end{theorem}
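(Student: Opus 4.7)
The plan is to extend the Ramsey-theoretic extraction of Alon, Livni, Malliaris, and Moran from binary classification to multi-class (Part~1), and then to regression (Part~2) using Proposition~\ref{prop:Ldim.fat} as a bridge.

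For Part~1, I would start with a mistake tree $T$ of depth $d$ that is $2\tau$-shattered by $\cH$. At each internal node $v$ of $T$, the descending edges carry an ordered pair of labels $(k_v, k'_v) \in [K]^2$ with $|k_v - k'_v| > 2\tau$, giving at most $K^2$ ordered pairs as possible colors. The goal is to produce $k, k' \in [K]$ with $|k - k'| > 2\tau > \tau$, points $x_1, \ldots, x_n$, and hypotheses $h_1, \ldots, h_n \in \cH$ realizing the configuration of Definition~\ref{def:threshold.MC} with gap $\tau$, where $n = \lfloor \log_K d / K^2 \rfloor$. The combinatorial core is a Ramsey-type extraction that finds, inside $T$, an \emph{aligned} shattered subtree of depth $n$: a complete binary subtree in which all nodes at each level share a common $x$-label and a common edge-label pair $(k, k')$. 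From such an aligned subtree, defining $h_i$ to be a shattered-tree witness that selects the $k'$-edge at the first $i-1$ levels and the $k$-edge at the remaining levels immediately yields the required threshold pattern.

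The extraction itself is the main combinatorial step. Color each internal node by its ordered edge-label pair (at most $K^2$ colors) and iterate a level-by-level recursion that simultaneously restricts to a dominant color class and aligns $x$-labels across the two selected child subtrees. Bookkeeping the recursion---paying the factor $K^2$ in the depth bound for the pair-coloring and an additional $\log K$ factor for aligning $x$-labels across up to $K$ branching choices per level---should yield an aligned shattered subtree of depth at least $\lfloor \log_K d / K^2 \rfloor$. In the binary case $K = 2$ this reproduces the $\Omega(\log d)$ thresholds bound of Alon et al.

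For Part~2, I would reduce to Part~1 via Proposition~\ref{prop:Ldim.fat}. Choose $n = 24$; then the proposition gives $\Ldim_{24}([\cF]_{\gamma/50}) \ge d$, so with $\tau = 12$ we have $\Ldim_{2\tau}([\cF]_{\gamma/50}) \ge d$ for the discretized class with $K' = \lceil 100/\gamma \rceil$ bin-labels. Part~1 then yields $\lfloor \log_{K'} d / (K')^2 \rfloor$ multi-class thresholds with bin-index gap $\tau = 12$. Each threshold hypothesis $h_i = [f_i]_{\gamma/50}$ pins $f_i(x_j)$ inside one of two fixed bins of width $\gamma/50$ whose centers $u, u'$ satisfy $|u - u'| > 12 \cdot \gamma/50 > \gamma/5$, while the half-bin-width $\gamma/100$ meets the oscillation bound $(\gamma/5)/20 = \gamma/100$ required by Definition~\ref{def:threshold.reg}. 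Substituting $K' \asymp 100/\gamma$ into the multi-class count recovers $\lfloor (\gamma^2/10^4)\log_{100/\gamma} d \rfloor$ regression thresholds with margin $\gamma/5$.

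The main obstacle is the Ramsey extraction in Part~1: controlling the cost of simultaneously aligning edge-label pairs \emph{and} downstream $x$-labels at every level, while matching the claimed $\log_K d / K^2$ rate, requires a carefully tuned recursion. Once Part~1 is established, Part~2 is essentially bookkeeping, contingent only on choosing the discretization parameter so that both the margin and the oscillation conditions of Definition~\ref{def:threshold.reg} are simultaneously satisfied; the choice $n = 24$ works with some slack.
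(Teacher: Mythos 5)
Your Part 2 is essentially the paper's argument (the paper sets $\tau=10$ by further appealing to monotonicity $\Ldim_{20}\ge\Ldim_{24}$, whereas you use $\tau=12$ directly; both choices make the margin and oscillation bookkeeping work). The problem is Part 1, where there is a genuine gap.

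You propose to extract from the shattered tree an \emph{aligned} binary subtree in which all nodes at the same level share a common $x$-label, and to pay a bounded Ramsey cost for this alignment. That extraction is not achievable: the $x$-labels on the internal nodes of a mistake tree are drawn from the (typically infinite) domain $\cX$, so there is no finite palette over which to apply a Ramsey argument, and in general no two distinct nodes of a mistake tree need share an $x$-label at all. Only the descending edge-label pairs live in a finite set of at most $K^2$ colors; the $x$-labels do not. The `` additional $\log K$ factor for aligning $x$-labels'' has no basis, and the aligned subtree you want will typically not exist even at depth $2$.

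The paper's proof works precisely because it never tries to align $x$-labels. The iterative procedure \CNC~picks an arbitrary $h_n$ from the current class, colors the current tree by $h_n$, passes to the tallest monochromatic subtree $T_n$ of color $k_n$ (Lemma~\ref{lemma:color.subtree} gives the factor-$K$ height loss), records its root $x_n$ and the complementary edge label $\kprime_n$, and then \emph{restricts the hypothesis class} to $\cH_n=\{h:h(x_n)=\kprime_n\}$. The threshold structure then comes for free, with all the $x_{n_i}$ allowed to be distinct: for $j\ge i$, $x_{n_j}$ is an internal vertex of the monochromatic subtree $T_{n_i}$, so $h_{n_i}(x_{n_j})=k_{n_i}$; and for $j<i$, $h_{n_i}\in\cH_{n_i-1}\subseteq\cH_{n_j}$, so $h_{n_i}(x_{n_j})=\kprime_{n_j}$. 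Pigeonholing over the $\le K^2$ possible pairs $(k_n,\kprime_n)$ across $K^2 t$ iterations then produces $t$ indices with a common pair. The hypothesis-class restriction is the ingredient your proposal is missing; once you replace ``align the $x$-labels'' with ``restrict $\cH$ at each step and exploit monochromaticity of the chosen subtree,'' your proof becomes the paper's proof.
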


\begin{proof}[Proof sketch]
We begin with the multi-class setting. 
Suppose $d = K^{K^{2}t}$. 
It suffices to show $\cH$ contains $t$ thresholds. 
Let $T$ be a shattered binary tree of height $d$ and tolerance $2\tau$. 
Letting $\cH_{0} = \cH$ and $T_{0} = T$, 
we iteratively apply \CNC ~(Algorithm \ref{alg:CNC}). 
Namely, we write 
\begin{equation}
\label{eq:thm3.MC1}
k_{n}, \kprime_{n}, h_{n}, x_{n}, \cH_{n}, T_{n} = \CNC(\cH_{n-1}, T_{n-1}, 2\tau).
\end{equation}

\newcounter{line:CNC1}
\newcounter{line:CNC2}
\newcounter{line:CNC3}
\begin{algorithm}[t]
	\begin{algorithmic}[1]
	    \STATE \textbf{Input:} multi-class hypothesis class $\cH \subset [K]^{\cX}$, shattered binary tree $T$, tolerance $\tau$
	    \STATE Choose an arbitrary hypothesis $h_{0} \in \cH$
	    \STATE Color each vertex $x$ of $T$ by $h_{0}(x) \in [K]$
	    \STATE Find a color $k$ such that the sub-tree $\Tprime \subset T$ of color $k$ has the largest height
	    \setcounter{line:CNC1}{\value{ALC@line}}
	    \STATE Let $x_{0}$ be the root node of $\Tprime$ 
	    \STATE Let $x_{1}$ be a child of $x_{0}$ such that the edge $(x_{0}, x_{1})$ is labeled as $\kprime$ with $|k-\kprime| > \frac{\tau}{2}$
	    \setcounter{line:CNC3}{\value{ALC@line}}
	    \STATE Let $\Tprimeprime$ be a sub-tree of $\Tprime$ rooted at $x_{1}$
	    \STATE Let $\cHprime = \{h \in \cH ~|~ h(x_{0}) = \kprime\}$
	    \setcounter{line:CNC2}{\value{ALC@line}}
	    \STATE \textbf{Output:} $k, k', h_{0}, x_{0}, \cHprime, \Tprimeprime$
	\end{algorithmic}
	\caption{\CNC}
	\label{alg:CNC}
\end{algorithm}

Observe that for all $n$, 
we can infer $h_{n}(x_{n}) = h_{n}(x) = k_{n}$ for all internal vertices $x$ of $T_{n}$ ($\because$ line \arabic{line:CNC1} of Algorithm \ref{alg:CNC})
and 
$h(x_{n}) = \kprime_{n}$ for all $h \in \cH_{n}$ ($\because$ line \arabic{line:CNC2} of Algorithm \ref{alg:CNC}).

Additionally, it can be shown that the height of $T_{n}$ is no less than $\frac{1}{K}$ times the height of $T_{n-1}$ (see Lemma \ref{lemma:color.subtree} in Appendix \ref{sec:PL.OL.app}).
This means that the iterative step \eqref{eq:thm3.MC1} can be repeated $K^{2}t$ times since $d = K^{K^{2}t}$. 
Then there exist $k, \kprime$ and indices $\{n_{i}\}_{i=1}^{t}$ such that 
$k_{n_{i}} = k$ and $\kprime_{n_{i}} = \kprime$ for all $i$.

It is not hard to check that the functions $\{h_{n_{i}}\}_{1:t}$ and the arguments $\{x_{n_{i}}\}_{1:t}$ form thresholds with labels $k, \kprime$.
Since $|k-\kprime| > \tau$ ($\because$ line \arabic{line:CNC3} of Algorithm \ref{alg:CNC}), this completes the proof.

The result in the regression setting can also be shown in a similar manner using Proposition \ref{prop:Ldim.fat}.
\end{proof}

\citet[Theorem 1]{alon2019private} proved a lower bound of the sample complexity in order to privately learn threshold functions. 
Then the multi-class result (with $\tau = 0$) of Theorem \ref{thm:alon.thm3} immediately implies that if $\cH$ is privately learnable, then it is online learnable. 
For the regression case, we need to slightly modify the argument to deal with the margin condition in Definition \ref{def:threshold.reg}. 
The next theorem summarizes the result, and the proof appears in Appendix \ref{sec:PL.OL.app}.
\begin{theorem}[Lower bound of the sample complexity to privately learn thresholds]
\label{thm:thm1.reg}
Let $\cF = \{f_{i}\}_{1:n} \subset [-1, 1]^{\cX}$ be a set of threshold functions with a margin $\gamma$ on a domain $\{x_{i}\}_{1:n} \subset \cX$ along with  bounds $u, \uprime \in [-1, 1]$. 
Suppose $\cA$ is a $(\frac{\gamma}{200}, \frac{\gamma}{200})$-accurate learning algorithm for $\cF$ with sample complexity $m$. 
If $\cA$ is $(\epsilon, \delta)$-DP with $\epsilon = 0.1$ and $\delta = O(\frac{1}{m^{2}\log m})$, then it can be shown that 
$m \ge \Omega(\log^{*}n)$.
\end{theorem}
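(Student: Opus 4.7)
The plan is to reduce the lower bound to the binary threshold case of \citet[Theorem 1]{alon2019private}, which already yields $m \ge \Omega(\logstar n)$ for privately learning $n$ binary threshold functions under comparable accuracy and privacy parameters. Given the regression learner $\cA$, I would build a DP learner $\cB$ for the binary threshold class $\cG = \{g_{i}\}_{1:n}$ on $\{x_{i}\}_{1:n}$ defined by $g_{i}(x_{j}) = \ind(i \le j)$, and then invoke the binary lower bound on $\cB$. Concretely, $\cB$ receives a binary sample $S_{b} = ((x_{j}, b_{j}))_{1:m}$ drawn from a distribution realized by some $g_{i^{*}}$, constructs a regression sample $S_{r}$ by setting $y_{j} = u$ when $b_{j} = 1$ and $y_{j} = \uprime$ when $b_{j} = 0$, runs $\cA(S_{r})$ to obtain $\hat{h}$, and returns the binary predictor $\hat{g}(x) = \ind(|\hat{h}(x) - u| < |\hat{h}(x) - \uprime|)$. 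Both the pre-processing and the post-processing are data-independent, so $\cB$ inherits the $(\epsilon, \delta)$-differential privacy of $\cA$ by the post-processing property.

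For accuracy I would exploit the margin condition in Definition~\ref{def:threshold.reg}. Since $|u - \uprime| \ge \gamma$ and $f_{i^{*}}(x)$ lies within $\gamma/20$ of either $u$ or $\uprime$, the value $f_{i^{*}}(x)$ is separated from the midpoint $(u + \uprime)/2$ by at least $\gamma/2 - \gamma/20 = 9\gamma/20$. Hence any $x$ at which $\hat{g}(x) \ne g_{i^{*}}(x)$ must satisfy $|\hat{h}(x) - f_{i^{*}}(x)| \ge 9\gamma/20$, and Markov's inequality bounds the binary $0$-$1$ error of $\hat{g}$ by $(\gamma/200)/(9\gamma/20) = 1/90$. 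Combined with the confidence $1 - \gamma/200$ from $\cA$'s PAC guarantee, this would make $\cB$ a $(\Theta(1), \Theta(1))$-accurate $(0.1, O(1/(m^{2}\log m)))$-DP learner for $\cG$ with sample size $m$, so \citet[Theorem 1]{alon2019private} would immediately yield $m \ge \Omega(\logstar n)$.

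The main obstacle is that the fabricated $S_{r}$ has labels in $\{u, \uprime\}$, whereas a truly $\cF$-realizable sample would have labels $f_{i^{*}}(x_{j})$, which can differ from $u$ or $\uprime$ by up to $\gamma/20$; strictly speaking, $\cA$'s PAC guarantee applies only to exactly realizable inputs, so the accuracy argument in the previous paragraph does not immediately transfer to $\cA(S_{r})$. I would bridge this gap by either (i) replacing the deterministic label rule with a randomized one that draws $y_{j}$ from a distribution supported in the correct $\gamma/20$-interval around $u$ or $\uprime$ so that the induced law on $(x_{j}, y_{j})$ coincides with the realizable distribution $(x, f_{i^{*}}(x))$ of some specific $f_{i^{*}} \in \cF$, or (ii) falling back on a direct margin-based robustness argument that couples $S_{r}$ with the truly realizable sample $S_{r}' = ((x_{j}, f_{i^{*}}(x_{j})))_{1:m}$ and shows, using the midpoint thresholding and the $9\gamma/20$ separation, that the accuracy bound above still applies after a small modification of constants. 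Resolving this margin-realizability bridge is the step I expect to be most delicate; once handled, the conclusion $m \ge \Omega(\logstar n)$ follows directly from the binary lower bound applied to $\cB$.
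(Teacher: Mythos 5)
Your plan is a genuinely different route: you try to reduce to the binary-threshold lower bound of Alon et al.\ as a black box, whereas the paper re-runs their Ramsey/fingerprinting argument directly in the regression setting (Definitions~\ref{def:homogeneous}, Lemma~\ref{lemma:homogeneous}, Lemma~\ref{lemma:lower.bound}, etc.). The gap you flag at the end is the decisive one, and neither of your two proposed fixes closes it. Fix~(i) cannot work: given $(x_j, b_j)$ you would need to output $y_j = f_{i^\star}(x_j)$, which is a \emph{deterministic} value depending on the unknown target index $i^\star$. Definition~\ref{def:threshold.reg} only pins $f_i(x_j)$ to a $\gamma/20$-interval, so distinct $f_i$ consistent with the same bit $b_j$ can (and in general do) take different values at $x_j$; no single randomized post-processing of $(x_j, b_j)$, independent of $i^\star$, reproduces the law of $(x, f_{i^\star}(x))$. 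Fix~(ii) is an appeal to robustness of $\cA$ to a sample-wide label perturbation, but the PAC guarantee for $\cA$ says nothing about inputs drawn from non-realizable distributions, and differential privacy only controls changes of one example at a time, so there is no free robustness argument here either. As stated, the reduction therefore gives you an algorithm $\cB$ whose input to $\cA$ is a sample $S_r$ that no $f \in \cF$ realizes exactly, and $\cA$'s accuracy guarantee simply does not apply.

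The way the paper circumvents exactly this problem is worth internalizing: rather than invoking the PAC guarantee on a carefully fabricated realizable distribution, it first converts $\cA$ to an \emph{empirical} learner via \citet[Lemma~5.9]{bun2015differentially}. An empirical learner's guarantee is stated for an arbitrary training set $S$ relative to $\min_{f\in\cF}\mathrm{loss}_S(f)$, which by the margin condition is at most $\gamma/20$ on the balanced samples $S \in (D'\times\{u,u'\})^m$ used in the homogeneous-set construction. This is precisely the ``margin-realizability bridge'' your proposal is missing. Your reduction could plausibly be rescued by first passing to the empirical learner and then reducing the \emph{empirical} version of Alon et al.'s lower bound, but at that point you are essentially re-deriving their Lemmas~12--13 with $\{u,u'\}$-valued labels and the extra $\gamma/20$ slack, which is what the paper does directly. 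So: right obstacle identified, wrong tool proposed to remove it.
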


Combining Theorem \ref{thm:alon.thm3} and \ref{thm:thm1.reg}, we present our main result. 
\begin{corollary}[Private learnability implies online learnability]
Let $\cH \subset [K]^{\cX}$ and $\cF \subset [-1, 1]^{\cX}$ be multi-class and regression hypothesis classes, respectively. 
Let $\Ldim(\cH) = \fat_{\gamma}(\cF) = d$. 
Suppose there is a learning algorithm $\cA$ that is $(\frac{1}{16}, \frac{1}{16})$-accurate for $\cH$ ($(\frac{\gamma}{200}, \frac{\gamma}{200})$-accurate for $\cF$) with sample complexity $m$. 
If $\cA$ is $(\epsilon, \delta)$-DP with $\epsilon = 0.1$ and $\delta = O(\frac{1}{m^{2}\log m})$, then 
$m \ge \Omega(\log^{*}d)$.
\end{corollary}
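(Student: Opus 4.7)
The plan is to chain together the two main results of this section via a subclass-restriction argument. The guiding idea is that if $\cH$ (respectively $\cF$) has large Littlestone (respectively fat-shattering) dimension, then by Theorem~\ref{thm:alon.thm3} it embeds a large family of threshold functions, and any approximately DP learner for the full class is in particular an approximately DP learner for this embedded threshold family; hence Theorem~1 of \citet{alon2019private} (respectively Theorem~\ref{thm:thm1.reg}) supplies a $\log^*$ lower bound on its sample complexity.

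First I would handle the multi-class case. Setting $\tau = 0$ in part~1 of Theorem~\ref{thm:alon.thm3}, since $\Ldim(\cH) = \Ldim_0(\cH) \ge d$, the class $\cH$ contains $n \ge \lfloor \log_K d / K^2 \rfloor$ thresholds with gap $0$, i.e.\ on two distinct labels $k \ne \kprime$, supported on anchor points $\{x_i\}_{1:n}$. Call this subfamily $\cH_0 \subset \cH$. Any realizable distribution over $\{x_i\}_{1:n} \times \{k,\kprime\}$ with respect to $\cH_0$ is realizable with respect to $\cH$, so the hypothesized $(1/16,1/16)$-accurate $(\epsilon,\delta)$-DP learner $\cA$ for $\cH$ is, when its output is post-processed by rounding to $\{k,\kprime\}$, also an accurate DP learner for this binary-threshold instance (DP and accuracy are both closed under post-processing and under restriction to a realizable sub-problem). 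Relabeling $k \mapsto 1,\ \kprime \mapsto 0$ turns $\cH_0$ into a class of $n$ classical thresholds on $\{x_i\}_{1:n}$, so Theorem~1 of \citet{alon2019private} yields $m \ge \Omega(\log^* n)$. Finally, because $K$ is a constant of the problem and $\log^*(\log x) = \log^* x - 1$, we get $\log^* n = \log^*(\log_K d / K^2) = \log^* d - O(1)$, giving $m \ge \Omega(\log^* d)$.

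The regression case is analogous. By part~2 of Theorem~\ref{thm:alon.thm3}, $\cF$ contains $n \ge \lfloor (\gamma^2/10^4)\log_{100/\gamma} d \rfloor$ thresholds with margin $\gamma/5$ between bounds $u,\uprime$. Restricting $\cA$ to realizable distributions supported on the anchor points and using that it is $(\gamma/200,\gamma/200)$-accurate on $\cF$, one obtains an approximately DP learner that is accurate at scale $\Theta(\gamma)$ on these thresholds — exactly the regime covered by Theorem~\ref{thm:thm1.reg} (up to a constant factor absorbed into the $\Omega(\cdot)$ in the conclusion, since the margin $\gamma/5$ is within a constant factor of $\gamma$). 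This yields $m \ge \Omega(\log^* n)$, and since $\gamma$ is treated as a fixed parameter, $\log^* n = \log^* d - O(1)$, hence $m \ge \Omega(\log^* d)$.

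The only delicate point, and the place where I would expect to have to be careful, is the matching of accuracy constants between the hypothesis of the corollary and the hypothesis of Theorem~\ref{thm:thm1.reg}: the former gives $(\gamma/200,\gamma/200)$-accuracy on $\cF$, while applying the threshold lower bound to a margin-$\gamma/5$ family nominally asks for $(\gamma/1000,\gamma/1000)$-accuracy. This is a constant-factor mismatch rather than a genuine obstacle — either one reinspects the proof of Theorem~\ref{thm:thm1.reg} to verify that it goes through at accuracy $\Theta(\gamma')$ rather than precisely $\gamma'/200$, or one chooses the absolute constants in Definition~\ref{def:threshold.reg} large enough that the margin extracted from Theorem~\ref{thm:alon.thm3} exceeds $\gamma$. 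The remaining moves — realizability preservation under restriction to a sub-class, closure of DP and accuracy under post-processing, and the identity $\log^*(\log x) = \log^* x - 1$ — are standard.
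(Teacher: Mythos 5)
Your proposal matches the paper's own (implicit) proof exactly: the corollary follows by chaining Theorem~\ref{thm:alon.thm3} (extraction of a large threshold subfamily from a class of large $\Ldim$ or $\fat_\gamma$) with the threshold lower bounds (\citet[Theorem~1]{alon2019private} for multi-class, Theorem~\ref{thm:thm1.reg} for regression), using that differential privacy and accuracy restrict to sub-problems and that $\log^* n = \log^* d - O(1)$ when $n$ is logarithmic in $d$. The constant-factor mismatch you flag in the regression case ($(\gamma/200)$-accuracy on $\cF$ versus the $(\gamma/1000)$ nominally required by Theorem~\ref{thm:thm1.reg} applied to a margin-$(\gamma/5)$ threshold family) is a real loose end that the paper's terse ``combining'' statement glosses over, and your proposed resolution---rebalancing the absolute constants in Definition~\ref{def:threshold.reg} or rechecking that the proof of Theorem~\ref{thm:thm1.reg} tolerates accuracy $\Theta(\gamma')$---is exactly what a careful writeup would need.
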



\section{Online learnability implies private learnability}\label{sec:OLPL}

In this section, we show that online-learnable multi-class hypothesis classes can be learned 
in a DP manner. 
For regression hypothesis classes, we provide sufficient conditions for private learnability.

\subsection{Multi-class classification}\label{subsec:OLPL_multiclass}

\citet{bun2020equivalence} proved that every binary hypothesis class with a finite Ldim 
is privately learnable by introducing a new notion of algorithmic stability called \textit{global stability} as 
an intermediate property between online learnability and differentially-private learnability. 
Their arguments can be naturally extended to MC hypothesis classes,
which is summarized in the next theorem.

\begin{theorem}[Online MC learning implies private MC learning] \label{thm:OLPL}
Let $\cH \subset [K]^{\cX}$ be a MC hypothesis class with $\Ldim(\cH) = d$.
Let $\epsilon, \delta \in (0,1)$ 
be privacy parameters and let $\alpha, \beta \in (0,1/2)$ be accuracy parameters. For
$n = O_d \big( \frac{\log (1/\beta \delta)}{\alpha \epsilon} \big)$, 
there exists an $(\epsilon, \delta)$-DP learning algorithm such that for every realizable distribution $\cD$, 
given an input sample $S \sim \cD^n$, the output hypothesis $f = \mathcal{A}(S)$ satisfies 
$\text{loss}_{\cD}(f) \le \alpha$ with probability at least $1-\beta$.
\end{theorem}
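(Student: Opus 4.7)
The plan is to port the Bun--Livni--Moran (BLM) two-step reduction from finite Littlestone dimension to $(\epsilon,\delta)$-DP PAC learning, replacing each binary-specific subroutine by its multi-class analogue. Step 1 will produce a \emph{globally stable} learner from Ldim-finiteness: a (possibly inefficient) randomized algorithm $G$ with sample complexity $n_0$ such that, for every $\cH$-realizable distribution $\cD$, there exists a hypothesis $h_{\cD} \in \cH$ with $\prob_{S \sim \cD^{n_0}}[G(S) = h_{\cD}] \ge \eta$, where one should be able to take $\eta = 2^{-O(d)}$ and $n_0 = \mathrm{poly}(d,1/\alpha,\log(1/\beta))$. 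Step 2 will then amplify $G$ into a DP PAC learner via the \emph{stable histogram} mechanism of Bun--Nissim--Stemmer, as used in \citet{bun2020equivalence}.

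For Step 1, I would reuse the BLM construction almost verbatim: run the multi-class \SOA\ (Algorithm~\ref{alg:SOAt} with $\tau = 0$) on a random sub-sequence of a realizable i.i.d.\ sample and output the resulting hypothesis. By Lemma~\ref{lem:Ldim.t}, this \SOA\ commits at most $d = \Ldim(\cH)$ mistakes in every realizable online run, and the BLM analysis---an online-to-batch conversion combined with a recursive pigeonhole on the version-space tree---uses only this mistake bound and the tree depth, never the cardinality of $\cY$. Consequently the set of possible outputs of $G$ has effective size at most $2^{O(d)}$, and a counting argument pinpoints some canonical $h_{\cD}$ reached with probability at least $\eta$.

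For Step 2, draw $k = \Theta\!\left(\tfrac{\log(1/\beta\delta)}{\eta \epsilon}\right)$ independent training sets $S_{1},\dots,S_{k}$ of size $n_0$, let $h_{j} = G(S_{j})$, and pass the multiset $\{h_{j}\}_{j=1}^{k}$ to the stable-histogram mechanism. With probability $1 - \beta/2$ this mechanism $(\epsilon,\delta)$-privately releases a hypothesis $\hat h$ whose empirical frequency is at least $\eta/2$. By global stability, $h_{\cD}$ itself crosses this threshold with high probability, so $\hat h$ is (w.h.p.) one of the $h_{j}$, hence consistent with a realizable i.i.d.\ sample of size $n_0$; realizable generalization over an effective class of size $2^{O(d)}$ then yields $\text{loss}_{\cD}(\hat h) \le \alpha$. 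The total sample size is $n = k \cdot n_0 = O_{d}\!\left(\tfrac{\log(1/\beta\delta)}{\alpha\epsilon}\right)$, matching the statement.

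The main obstacle is verifying that the BLM global-stability construction is genuinely label-alphabet agnostic. At several points the BLM argument invokes the ``majority label'' at a node of an adaptively grown Littlestone tree, which in the binary case is a single bit. In the multi-class setting one must replace each such step by an $\argmax$ over $[K]$ (as already done in \SOAt) and re-verify that the mistake accounting still delivers $\eta = 2^{-O(d)}$ rather than paying an extra $K$ factor in the exponent. I expect the adaptation to close cleanly because the $d$-mistake bound of \SOA\ and the depth-$d$ shattered tree behave identically for $K \ge 2$, but the bookkeeping is where the real work of the proof lives.
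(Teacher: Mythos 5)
Your architecture matches the paper's exactly: a BLM-style two-step reduction, first establishing global stability from finite $\Ldim$, then amplifying via Stable Histogram plus a generic private selector (the paper's Theorem~\ref{thm:OLGS} and Theorem~\ref{thm:GSPL}). The obstacle you flag at the end is the right place to look, but your expectation about how it resolves is wrong in two concrete ways. First, the cardinality of $\cY$ genuinely enters the stability parameter: the paper's construction forces $\SOA_0$ to make mistakes by inserting $k$ ``tournament examples'' whose labels are drawn \emph{uniformly from $[K]$}, and the run must land all of them on the true target $c$'s labels to converge to $c$, yielding $\eta = \frac{K-1}{(d+1)K^{d+1}} = K^{-\Theta(d)}$, not $2^{-O(d)}$. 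Your claim that the BLM analysis ``never [uses] the cardinality of $\cY$'' is precisely where this is lost. Second, the GS learner's sample complexity $n_0$ is not $\mathrm{poly}(d,1/\alpha,\log(1/\beta))$ but $m = \big((4K)^{d+1}+1\big)\cdot\lceil d\log K/\alpha\rceil$, exponential in $d$ (as it already is for $K=2$ in BLM), because the recursive tournament construction of $\cD_k$ costs a factor $\sim 4^{k}$ samples per level. Neither inaccuracy breaks the final claim, since the theorem only asserts $n = O_d(\cdot)$ and both $K$ and the exponential-in-$d$ overhead are absorbed into that constant, but it means the bookkeeping you deferred does not ``close cleanly'' with $K$-free parameters. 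Finally, your generalization step (``realizable generalization over an effective class of size $2^{O(d)}$'') is looser than what the paper actually proves: Lemma~\ref{lemma:generalization} shows directly that any $f$ with $\prob[\SOA_0(S\circ T)=f]\ge K^{-d}$ must satisfy $\text{loss}_{\cD}(f)\le d\log K/n$, because $\SOA_0(S\circ T)$ is always consistent with the i.i.d.\ suffix $T$, so $K^{-d}\le(1-\text{loss}_{\cD}(f))^{n}$; no effective-class-size union bound is invoked or needed.
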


While we consider the realizable setting in Theorem \ref{thm:OLPL}, a similar result also holds in the agnostic setting. The extension to the agnostic setting is discussed in Appendix \ref{app:agnostic} due to limited space.
%
%

As a key to the proof of Theorem \ref{thm:OLPL}, we introduce global stability (GS) as follows.

\begin{definition}[Global stability \citep{bun2020equivalence}]
Let $n \in \mathbb{N}$ be a sample size and $\eta >0$ be a global stability parameter. 
An algorithm $\cA$ is $(n,\eta)$-GS with respect to $\cD$ if there exists a hypothesis $h$ such that 
$\prob_{S\sim \cD^n} \big(\cA(S) = h\big)\ge \eta$.
\end{definition}

Theorem \ref{thm:OLPL} can be proved in two steps. We first show that every MC hypothesis class with 
a finite Ldim is learnable by a GS algorithm $\cA$ (Theorem \ref{thm:OLGS}). 
Then we prove that any GS algorithm can be extended to a DP learning algorithm 
with a finite sample complexity.

\begin{theorem}[Online MC learning implies GS learning]\label{thm:OLGS}
Let $\cH \subset [K]^{\cX}$ be a MC hypothesis class with $\Ldim(\cH) = d$.
Let $\alpha>0$, and 
$m = \left((4K)^{d+1} + 1\right) \times [\frac{d \log K}{\alpha }]$. Then there exists a randomized algorithm 
$G : (\cX \times [K])^m \rightarrow [K]^{\cX}$ such that for a realizable distribution $\cD$ and an input sample 
$S \sim \cD^m$, there exists a $h$ such that 
\[
\prob\big(G(S) = h\big) \ge\frac{K-1}{(d+1)K^{d+1}} \quad \text{and} \quad loss_{\cD}(h) \le \alpha.
\]
\end{theorem}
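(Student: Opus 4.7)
My plan is to adapt the argument of \citet{bun2020equivalence} from binary to multi-class: I would take $G$ to be the \SOA (Algorithm~\ref{alg:SOAt} with $\tau = 0$) wrapped in a standard online-to-batch conversion, and then identify the globally stable target $h$ via a pigeonhole argument over a finite family of possible outputs.

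First I would establish expected accuracy. By Lemma~\ref{lem:Ldim.t}(2) specialized to $\tau = 0$, \SOA makes at most $d$ mistakes on any $\cH$-realizable stream. The standard random-permutation online-to-batch conversion then gives $\E_{S \sim \cD^m}[\text{loss}_\cD(G(S))] \le d/(m+1)$. With $m$ as in the statement, this expectation is at most $\alpha/((4K)^{d+1}\log K)$, so Markov's inequality yields $\prob_S[\text{loss}_\cD(G(S)) \le \alpha] \ge (K-1)/K$ (in fact nearly $1$, but this weaker form suffices).

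Next I would bound the range of $G$ on realizable inputs. Each \SOA mistake strictly reduces $\Ldim(V)$, so there are at most $d$ informative version-space updates. After fixing a deterministic tie-breaker in $\argmax_k \Ldim(V^{(k)})$, the output predictor should depend on the sample only through the $[K]$-valued \emph{mistake trace}---the ordered sequence of at most $d$ true labels received at mistake rounds. The number of such traces is $\sum_{j=0}^{d} K^j \le (d+1) K^d$, so the range $\mathcal{F}$ of $G$ has size at most $(d+1) K^d$. Combined with the accuracy step, the distribution of $G(S)$ places mass at least $(K-1)/K$ on low-loss hypotheses inside $\mathcal{F}$, so by averaging some specific $h^{*}$ satisfies $\text{loss}_\cD(h^{*}) \le \alpha$ and $\prob[G(S) = h^{*}] \ge \frac{(K-1)/K}{(d+1) K^d} = \frac{K-1}{(d+1)K^{d+1}}$. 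Taking $h = h^{*}$ completes the argument.

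The main obstacle is the range bound. Formalizing that the \SOA output depends on the sample only through the mistake trace---and not through the specific $x$-coordinates of the mistake rounds---is clean in the binary case via a canonical depth-$d$ Littlestone tree. For $K > 2$ one must simultaneously handle the $K$-way branching of the algorithm and a fixed $\argmax$ tie-breaking rule, and verify inductively that non-mistake rounds do not perturb the $\argmax_k \Ldim(V^{(k)})$ predictions at any future test point; this inductive invariant is the technical crux I expect to spend the most care on.
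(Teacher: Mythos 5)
Your approach diverges from the paper's, and the divergence sits exactly where your range bound lives; that bound does not hold, so the argument has a genuine gap.

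The accuracy step is fine. The problem is the claim that ``the output predictor should depend on the sample only through the $[K]$-valued mistake trace.'' Two obstructions make this false. First, $\SOA_0$ shrinks its version space on \emph{every} round, not just mistake rounds: after $(x_t,y_t)$ it always passes to $V_t = \{h \in V_{t-1} : h(x_t) = y_t\}$, and the final predictor $x \mapsto \argmax_k \Ldim\big(\{h \in V_m : h(x)=k\}\big)$ depends on $V_m$ in full, hence on the non-mistake rounds too. Non-mistake rounds do not decrease $\Ldim(V_t)$, but they can still delete hypotheses and change the argmax at future test points; the inductive invariant you propose to verify is simply not true in general. Second, even if you switch to a lazy variant that updates only on mistakes, the version space after a mistake depends on the \emph{instance} $x_t$ at which the mistake occurred, not only on the label $y_t$: two samples with the same label trace but different mistake instances yield different version spaces. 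So the relevant ``trace'' is a sequence of pairs $(x_{t_i}, y_{t_i}) \in \cX \times [K]$, which ranges over an infinite set whenever $\cX$ is infinite, and the $(d+1)K^d$ pigeonhole bound evaporates. This is precisely the obstacle that forced \citet{bun2020equivalence} to introduce the tournament construction, and the paper follows that route: it defines an auxiliary distribution $\cD_k$ by recursively inserting $k$ uniformly labeled ``tournament examples'' at points where two independent $\SOA_0$ runs disagree, shows by pigeonhole over $k \in \{0,\dots,d\}$ (plus the fact that $d$ consistent tournament labels force $\SOA_0$ to recover the target) that some single $f$ satisfies $\prob_{S\sim\cD_k, T\sim\cD^n}\big[\SOA_0(S\circ T)=f\big]\ge K^{-d}$, proves separately that such an $f$ generalizes with $\text{loss}_\cD(f)\le d\log K/n$, and controls the sampling cost of $\cD_k$ via a geometric-series bound and a Monte-Carlo truncation $\tilde{\cD}_k$. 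If you want to retain your structure, the missing ingredient is a mechanism like that tournament distribution which actively collapses the continuum of possible $\SOA_0$ outputs into a controlled finite set \emph{before} pigeonhole is invoked; a fixed tie-breaking rule alone does not do this.
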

Next, we give a brief overview on how to construct a GS learner $G$ and a DP learner $M$ in order to 
prove Theorem \ref{thm:OLPL}. The complete proofs are deferred to 
Appendix \ref{app:OLPL}.

\subsubsection{Online multi-class learning implies globally-stable learning}
Let $\cH$ be a MC hypothesis class with $\Ldim(\cH) =d$ and $\cD$ be a realizable distribution over 
examples $\big(x, c(x)\big)$ where $c \in \cH$ is an unknown target hypothesis. 
Recall that $\cH$ is learnable 
by $\SOA_0$ (Algorithm \ref{alg:SOAt}) with at most $d$ mistakes on any realizable sequence. 
Prior to building a GS learner $G$, we construct a distribution $\cD_k$ by appending $k$ \textit{tournament 
examples} between random samples from $\cD$, which force $\SOA_0$ to make at least $k$ 
mistakes when run on $S$ drawn from $\cD_k$. 
Using the fact that $\SOA_{0}$ identifies the true labeling function after making $d$ mistakes, 
we can show that
there exists $k \le d$ and a hypothesis $f : \cX \rightarrow [K]$ such that
\begin{equation*} \label{eq:error1}
\prob_{S \sim \cD_{k}, T \sim \cD^n} \big(\text{\SOA}_0(S \circ T) = f\big)\ge K^{-d}.
\end{equation*}

A GS learner $G$ is built by firstly drawing $k \in \{0, 1, \cdots, d\}$ uniformly at random and then running the $\SOA_0$ on $S \circ T$ where $S \sim \cD_{k}, T \sim \cD^n$. The learner $G$ outputs a good hypothesis that enjoys small population loss with probability at least $\frac{K^{-d}}{d+1}$. We defer the detailed construction of $\cD_k$ and proofs to Appendix \ref{app:OLPL}.

\subsubsection{Globally-stable learning implies private multi-class learning}

Let $G$ be a $(\eta, m)$-GS algorithm with respect to a target distribution $\cD$. We run $G$ on $k$ independent 
samples of size $m$ to non-privately produce a long list $H := (h_{i})_{1:k}$. The \textit{Stable Histogram} algorithm 
is a primary tool that
allows us to publish a short list of frequent hypotheses in a DP manner.
The fact that $G$ is GS ensures that some good hypotheses appear frequently in $H$. 
Then Lemma \ref{lemma:stable} implies that these good hypotheses remain in the short list with high probability. 
Once we obtain a short list, a generic DP learning algorithm
\citep{kasiviswanathan2011can} is applied to privately select an accurate hypothesis. 

\begin{lemma}[Stable Histogram \citep{dwork2006calibrating, korolova2009releasing}]\label{lemma:stable}
Let $X$ be any data domain. For $n \ge O(\frac{\log (1/\eta\beta\delta)}{\eta \epsilon})$, 
there exists an $(\epsilon, \delta)$-DP algorithm \textsc{Hist} which with probability 
at least $1-\beta$, on input $S = (x_i)_{1:n}$ outputs a list $L \subset X$ and 
a sequence of estimates $a \in [0,1]^{|L|}$ such that (i) every $x$ with $\textup{Freq}_{S}(x) \ge \eta$ appears in $L$, and
(ii) for every $x \in L$, the estimate $a_x$ satisfies $|a_x - \textup{Freq}_{S}(x)| \le \eta$ where $\textup{Freq}_{S}(x) := \big|\{i \in [n] ~|~ x_i = x \}\big|/n$.
\end{lemma}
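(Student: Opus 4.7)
The plan is to prove the Stable Histogram lemma by combining the Laplace mechanism with a carefully tuned thresholding rule. First, I would restrict attention to the finite set of elements that actually appear in $S$ and compute their empirical frequencies $\textup{Freq}_{S}(x)$. To each such frequency I would add an independent Laplace random variable of scale $\sigma = O\bigl(1/(\epsilon n)\bigr)$, obtaining noisy estimates $a_{x} = \textup{Freq}_{S}(x) + Z_{x}$, and then output only those $x$ whose noisy estimate exceeds a threshold roughly at $\eta/2$.

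Second, I would verify $(\epsilon,\delta)$-DP. For any element appearing in both neighboring samples $S$ and $S'$, the frequency changes by at most $1/n$, so Laplace noise of scale $\sigma$ yields the standard $\epsilon$-DP guarantee via parallel composition (a single swap affects the counts of at most two distinct elements, which live in disjoint buckets of the histogram). The delicate case is an element $x$ appearing in $S$ but not in $S'$ (or vice versa): the algorithm run on $S'$ never considers $x$, whereas on $S$ its noisy count could in principle cross the threshold. Setting the threshold to be $\Theta\bigl(\log(1/\delta)/(\epsilon n)\bigr)$ above zero ensures that such a spurious release occurs with probability at most $\delta$, which is exactly the approximate-DP slack.

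Third, I would verify accuracy. A $\textup{Lap}(\sigma)$ random variable exceeds $\sigma \log(k/\beta)$ in absolute value with probability at most $\beta/k$. Since there are at most $1/\eta$ elements with true frequency at least $\eta$, a union bound ensures that with probability at least $1-\beta$, all relevant noisy estimates deviate from their true frequencies by at most $\eta$. The sample-size requirement $n = \Omega\bigl(\log(1/(\eta\beta\delta))/(\eta\epsilon)\bigr)$ is precisely what is needed so that the noise scale is simultaneously small enough to keep the threshold below $\eta$ (so genuine heavy hitters are reliably released, establishing condition (i)) and the estimates close enough to satisfy condition (ii).

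The main obstacle is balancing the two competing constraints on the threshold: it must exceed the typical Laplace noise by a margin of $\log(1/\delta)$ to handle the neighboring swap that introduces a fresh element, yet it must also lie strictly below $\eta$ (minus a $\log(1/\beta)$ noise cushion) so that truly heavy-hitting elements are captured. Reconciling these competing requirements is what forces the logarithmic dependence on $1/\delta$ in the sample complexity, and carefully tracking the Laplace tail probabilities simultaneously for the privacy and the utility analyses is the technical heart of the argument. Beyond this calibration, the proof is a routine application of the Laplace mechanism and a union bound.
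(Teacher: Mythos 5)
The paper does not prove this lemma: it is cited as a known result from \citet{dwork2006calibrating, korolova2009releasing} and used as a black box. Your proposal reconstructs the standard argument (Laplace noise on per-element counts plus a threshold calibrated to $\log(1/\delta)$), and it is the right approach, matching the proofs in those references. Two places where your reasoning is a bit loose and would need to be tightened in a full write-up:

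\begin{itemize}[leftmargin=*]
\item \emph{Privacy for elements present in both samples.} You invoke ``parallel composition'' because the two changed counts live in ``disjoint buckets,'' but parallel composition concerns disjoint partitions of the \emph{data}, not of the output coordinates. The clean argument is a sensitivity argument: replacing one example changes the frequency vector by at most $1/n$ in two coordinates, so its $\ell_1$-sensitivity is $2/n$ and adding per-coordinate $\mathrm{Lap}(2/(\epsilon n))$ noise gives $\epsilon$-DP directly. Same conclusion, different justification.
\item \emph{Union bound for accuracy.} You union-bound only over the at most $1/\eta$ truly heavy elements, which suffices for condition (i), but condition (ii) requires accurate estimates for \emph{every} $x$ that ends up in the released list $L$, including any low-frequency $x$ that slips past the threshold by luck. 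One standard fix is to union-bound over all distinct elements in $S$ (at most $n$ of them), which introduces a $\log(n/\beta)$ term that must then be absorbed via the implicit sample-size inequality $n \ge C\log(n/\beta)/(\eta\epsilon)$; another is to argue that, on the high-probability event, only elements with $\textup{Freq}_S(x) \ge \eta/2$ can cross the threshold, so the list has size at most $2/\eta$ and the union bound over those suffices.
\end{itemize}
With those two points repaired, the calibration you describe---threshold large enough to give the $\delta$ slack for a fresh element, yet small enough (below $\eta$ minus a noise cushion) to catch every heavy hitter---is exactly what forces the stated $n = O(\log(1/\eta\beta\delta)/(\eta\epsilon))$ sample complexity.
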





\subsection{Regression}\label{sec:OLPLreg}
In classification, \textit{Global Stability} was an essential intermediate property between online and private learnability. 
A natural approach to obtaining a DP algorithm from an online-learnable real-valued function class $\cF$ is 
to transform the problem into a multi-class problem with $[\cF]_{\gamma}$ for some $\gamma$
and then construct a GS learner using the previous techniques. 
If $[\cF]_{\gamma}$ is privately-learnable, then we can infer that the original regression problem 
is also private-learnable up to an accuracy $O(\gamma)$. 

Unfortunately, however, finite $\fat_{\gamma}(\cF)$ only implies finite $\Ldim_{1}([\cF]_{\gamma})$, and $\Ldim([\cF]_{\gamma})$ can still be infinite (see Proposition \ref{prop:Ldim.fat}).
This forces us to run $\SOA_{1}$ instead of $\SOA_{0}$, and as a consequence, 
after making $\Ldim_{1}([\cF]_{\gamma})$ mistakes, the algorithm can identify the true function up to some tolerance. 
Therefore we only get the relaxed version of GS property as follows; there exist $k \le d$ and a hypothesis $f : \cX \rightarrow [K]$ such that
\begin{equation*}\label{eq:error2}
\prob_{S \sim \cD_{k}, T \sim \cD^n} \big(\SOA_1(S \circ T) \approx_1 f\big) \ge \left(\gamma/2\right)^{d} 
\end{equation*} 
where $f \approx_1 g$ means $\sup_{x \in \cX} \big|f(x) - g(x) \big| \le 1$. 
If we proceed with this relaxed condition, it is no longer guaranteed the long list $H$ contains a good hypothesis with sufficiently high frequency. 
This hinders us from using Lemma \ref{lemma:stable}, and a private learner cannot be produced in this manner. 
The limitation of proving the equivalence in regression stems from existing proof techniques.
With another method, it is still possible to show that online-learnable real-valued function classes can be learned by a DP algorithm. 
Instead, we provide sufficient conditions for private learnability in regression problems.


\begin{theorem}[Sufficient conditions for private regression learnability]\label{thm:sufficient}
Let $\cF \subset \cY^{\cX}$ be a real-valued function class such that
$\fat_{\gamma}(\cF) < \infty$ for every $\gamma>0$. If one of the following conditions holds, then $\cF$ is privately learnable.
\begin{enumerate}[leftmargin = *]
\item Either $\cF$ or $\cX$ is finite.
\item The range of $\cF$ over $\cX$ is finite (i.e., $\big|\{ f(x) ~|~ f \in \cF, x \in \cX \}\big| < \infty$).
\item $\cF$ has a finite cover with respect to the sup-norm at every scale.
\item $\cF$ has a finite sequential Pollard Pseudo-dimension.
\end{enumerate}
\end{theorem}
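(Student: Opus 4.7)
The overall plan is to reduce each of the four sufficient conditions to one of two settings where private PAC learnability is already established: (a) a \emph{finite} hypothesis class, where the exponential mechanism applied to the empirical absolute loss yields an $(\epsilon,0)$-DP PAC learner, or (b) a multi-class hypothesis class with finite standard Littlestone dimension, where Theorem~\ref{thm:OLPL} yields an approximate-DP PAC learner. Under setting (a), the exponential mechanism with score $-\text{loss}_S(h)$ (sensitivity $O(1/n)$ since the per-example absolute loss is bounded) outputs an $h$ whose empirical loss is within $O(\log |\cH|/(n\epsilon))$ of the empirical minimizer; combined with uniform convergence over the finite class, this gives sample complexity $O(\log|\cH|/(\alpha^2\epsilon))$ in the realizable setting.

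For condition~1, if $\cF$ is finite we invoke (a) directly; if $\cX$ has size $N$, I would discretize $[-1,1]$ into $K = \lceil 2/\gamma \rceil$ buckets, observe $|[\cF]_\gamma| \le K^{N}$, apply (a) to this finite multi-class class, and round the predicted bucket to its midpoint, losing at most $\gamma$ additional absolute loss. Condition~3 is the same reduction applied to the finite $\gamma$-cover $\cF_\gamma$: in the realizable setting, the best element of $\cF_\gamma$ has population absolute loss at most $\gamma$, and (a) returns a hypothesis competitive with it. For condition~2, let $R = \{ f(x) : f \in \cF,\ x \in \cX \}$ be the finite range and choose $\gamma < \tfrac{1}{2}\min_{r \neq r' \in R} |r-r'|$; at that scale, fat-shattering of $\cF$ (viewed as a $|R|$-class hypothesis class) coincides with ordinary multi-class shattering, so $\Ldim(\cF) = \fat_\gamma(\cF) < \infty$, and (b) applies.

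The substantive case is condition~4. The key lemma I would prove is that $\Pdim(\cF) < \infty$ implies $\Ldim([\cF]_\gamma) < \infty$ for every $\gamma > 0$, after which (b) applies to $[\cF]_\gamma$ and midpoint-rounding yields a DP regressor with absolute loss $O(\gamma)$. To prove the lemma, take any depth-$d$ standard (tolerance-$0$) multi-class shattered tree $T$ for $[\cF]_\gamma$; at each internal node $x$ whose descending edges are labelled by distinct bucket indices $i < j$, choose the witness $s_x$ to be the common boundary of buckets $i$ and $j$ (or any point strictly between them if $j > i+1$). The binary tree with internal nodes relabelled by the pairs $(x, s_x)$ is then shattered by $\cF^{+}$: any $f \in \cF$ consistent with an edge labelled $i$ satisfies $f(x) < s_x$, so $B_f(x,s_x) = -1$, while any $f$ consistent with the edge labelled $j$ satisfies $f(x) \ge s_x$, so $B_f(x,s_x) = +1$. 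Hence $d \le \Ldim(\cF^{+}) = \Pdim(\cF) < \infty$.

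The main obstacle is exactly this last reduction. As emphasized in Section~\ref{sec:OLPLreg}, the assumption $\fat_\gamma(\cF) < \infty$ alone controls only $\Ldim_{1}([\cF]_\gamma)$ and not $\Ldim([\cF]_\gamma)$, which is precisely what blocks the general global-stability approach; condition~4 circumvents this bottleneck by forcing the stronger, tolerance-free Littlestone bound that the multi-class construction of Section~\ref{subsec:OLPL_multiclass} needs. The remaining work in conditions 1--3 is routine bookkeeping around the exponential mechanism and the translation of discretized multi-class guarantees back to absolute-loss guarantees.
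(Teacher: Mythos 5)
Your proposal is correct and takes essentially the same route as the paper: condition~1 and~3 reduce to a finite hypothesis class and invoke the (exponential-mechanism-based) Generic Private Learner; condition~2 bounds $\Ldim(\cF)$ (viewed as a multi-class problem over its finite range) by $\fat_{\gamma}(\cF)$ at the minimal-gap scale; and condition~4 proves $\Ldim([\cF]_{\gamma}) \le \Pdim(\cF)$ by converting a tolerance-zero shattered tree for $[\cF]_\gamma$ into a $\cF^{+}$-shattered tree via midpoint/boundary witnesses, then applies the multi-class construction of Section~\ref{subsec:OLPL_multiclass}. The only cosmetic difference is that you state the argument directly rather than by contradiction and name the exponential mechanism explicitly; your claim of exact equality $\Ldim(\cF)=\fat_\gamma(\cF)$ in condition~2 is stronger than needed and not obviously true in the reverse direction, but the inequality $\Ldim(\cF)\le\fat_\gamma(\cF)$ you actually use is correct and is all the paper uses too.
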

We present the proof of Condition 4, and proofs of other conditions are deferred to Appendix  \ref{app:suff}.
\begin{proof}[Proof of Condition 4]
Assume for contradiction that there exists $\gamma$ such that $\Ldim([\cF]_{\gamma}) =\infty$. 
Then we can obtain a shattered tree $T$ of an arbitrary depth. 
Choose an arbitrary node $x$. 
Note that its descending edges are labeled by $k, \kprime \in [\lceil2/\gamma\rceil]$.
We can always find a witness to shattering $s$ between the intervals corresponding to $k$ and $\kprime$. 
With these witness values, the tree $T$ must be zero-shattered by $\cF$.
Since the depth of $T$ can be arbitrarily large, this contradicts to $\Pdim(\cF)$ being finite. 
From this, we can claim that $\Ldim([\cF]_{\gamma}) \le \Pdim(\cF)$ for any $\gamma$. 
Then using the ideas in Section \ref{subsec:OLPL_multiclass}, we can conclude that $[\cF]_{\gamma}$ is private-learnable for any $\gamma$.
Therefore the original class $\cF$ is also private-learnable. 
\end{proof}
We emphasize that Conditions 3 and 4 do not imply each other. 
For example, a class of point functions $\cF^{\text{point}} := \{\ind(\cdot=x) ~|~ x \in \cX\}$ does not have a finite sup-norm cover because 
any two distinct functions have the sup-norm difference one, but $\Pdim(\cF^{\text{point}})=1$. 
A class $\cF^{\text{Lip}}$ of bounded Lipschitz functions on $[0, 1]$ has an infinite sequential Pollard pseudo-dimension, but $\cF^{\text{Lip}}$ has a finite cover 
with respect to the sup-norm due to compactness of $[0, 1]$ along with the Lipschitz property.

\section{Discussion}
We have pushed the study of the equivalence between online and private learnability beyond binary classification. 
We proved that private learnability implies online learnability in the MC and regression settings. 
We also showed the converse in the MC setting and provided sufficient conditions
for an online learnable class to also 
be privately learnable in regression problems.

We conclude with a few suggestions for future work.
First, we need to understand whether online learnability implies private learnability in the regression setting.
Second, like \cite{bun2020equivalence}, we create an {improper DP} learner for an online learnable class.  It would be interesting to see if we can construct \textit{proper} DP learners.
Third, \citet{gonen2019private} provide an efficient black-box reduction from {\em pure} DP learning to online 
learning. It is natural to explore whether such efficient reductions are possible for {\em approximate} DP algorithms for MC and regression problems.
Finally, there are huge gaps between the lower and upper bounds for sample complexities in both classification and regression settings. 
It would be desirable to show tighter bounds and reduce these gaps. 


\section{Correction}
\label{sec:correction}

\citet{alon2019private} and \citet{bun2020equivalence} showed that online learnability and private PAC learnability are equivalent in binary classification, and we have extended their work to multi-class classification and regression. Recently, \citet{bun2020equivalence} discovered a technical mistake and presented a fix that deteriorates the dependence on the Littlestone dimension from exponential to doubly exponential. Accordingly, we also revisit this and present corrected results in this section.
The detailed proofs in Appendix \ref{app:OLPL} are corrected as well.

We provide not only a corrected version of Theorem \ref{thm:OLGS} but also a brief overview on how to construct a GS learner $G$ as in Section \ref{subsec:OLPL_multiclass} 
\newtheorem*{new.thm:OLGS2}{Theorem \ref{thm:OLGS}}
\begin{new.thm:OLGS2}[corrected]
Let $\cH \subset [K]^{\cX}$ be a MC hypothesis class with $\Ldim(\cH) = d$.
Let $\alpha>0$, and 
$m = \big(K^{2^{d+2}+1} \cdot 4^{d+1}  + 1\big) \times [\frac{2^{d+2} \log K}{\alpha }]$. Then there exists a randomized algorithm 
$G : (\cX \times [K])^m \rightarrow [K]^{\cX}$ such that for a realizable distribution $\cD$ and an input sample 
$S \sim \cD^m$, there exists a $h$ such that 
\[
\prob\big(G(S) = h\big) \ge\frac{K-1}{(d+1)K^{2^{d+2}+1}} \quad \text{and} \quad loss_{\cD}(h) \le \alpha.
\]
\end{new.thm:OLGS2}

Again, let $\cH$ be a MC hypothesis class with $\Ldim(\cH) =d$ and $\cD$ be a realizable distribution over 
examples $\big(x, c(x)\big)$ where $c \in \cH$ is an unknown target hypothesis. 
Recall that $\cH$ is learnable 
by $\SOA_0$ (Algorithm \ref{alg:SOAt}) with at most $d$ mistakes on any realizable sequence. 
First, we construct a distribution $\cD_k$ by appending $k$ \textit{tournament 
examples} between random samples from $\cD$, which force $\SOA_0$ to make at least $k$ 
mistakes when run on $S$ drawn from $\cD_k$. 
Using the fact that $\SOA_{0}$ identifies the true labeling function after making $d$ mistakes, 
we can show that
there exists $k \le d$ and a hypothesis $f : \cX \rightarrow [K]$ such that
\[
\prob_{S \sim \cD_{k}, T \sim \cD^n} \big(\text{\SOA}_0(S \circ T) = f\big)\ge K^{-2^{d+2}}.
\]

A GS learner $G$ is built by firstly drawing $k \in \{0, 1, \cdots, d\}$ uniformly at random and then running the $\SOA_0$ on $S \circ T$ where $S \sim \cD_{k}, T \sim \cD^n$. The learner $G$ outputs a good hypothesis that enjoys small population loss with probability at least $\frac{K-1}{d+1}K^{-2^{d+2}-1}$. We present a fix for the detailed construction of $\cD_k$ and proofs in Appendix \ref{app:OLPL}.


\section*{Broader Impact}
As this paper is purely theoretical, discussing broader impact is not applicable.

%

\begin{ack}
We acknowledge the support of NSF via grants CAREER IIS-1452099 and IIS-2007055.
\end{ack}

%
%

\bibliography{./tex/myref}

\newpage
\appendix


\section{Section \ref{sec:link} details}
We prove Lemma \ref{lem:Ldim.t}.
\label{sec:link.app}

\newtheorem*{new.lem:Ldim.t}{Lemma \ref{lem:Ldim.t}}
\begin{new.lem:Ldim.t}[restated]
Let $\cH \subset [K]^{\cX}$ be a class of multi-class hypotheses.
\begin{enumerate}[leftmargin = *]
\item $\Ldim_{\tau}(\cH)$ is decreasing in $\tau$.
\item \SOAt~(Algorithm \ref{alg:SOAt}) makes at most $\Ldim_{\tau}(\cH)$ mistakes with respect to $\zeroone_{\tau}$.
\item For any deterministic learning algorithm, an adversary can force $\Ldim_{2\tau}(\cH)$ mistakes with respect to $\zeroone_{\tau}$.
\end{enumerate}
\end{new.lem:Ldim.t}

\begin{proof}
Part \arabic{item:tauLdim1} follows by observing that if $T$ is a binary shattered tree with tolerance $\tau$, then so is it with tolerance $\tau^{\prime} < \tau$. 

For part \arabic{item:tauLdim2}, assume \SOAt ~makes a mistake at round $t$. 
We claim that $\Ldim_{\tau}(V_{t+1}) < \Ldim_{\tau}(V_{t})$. 
If $\Ldim_{\tau}$ does not decrease, we can infer that 
\[
	\Ldim_{\tau}(V_{t}^{(\yhat_{t})}) = \Ldim_{\tau}(V_{t}^{(y_{t})}) = \Ldim_{\tau}(V_{t}) =: d.
\]
Then we can find binary trees $T_{1}$ and $T_{2}$ of height $d$ that are shattered by $V_{t}^{(\yhat_{t})}$ and $V_{t}^{(y_{t})}$, respectively. 
By concatenating $T_{1}$ and $T_{2}$ with a root node $x_{t}$ and its edges labeled by $\yhat_{t}$ and $y_{t}$, 
we can obtain a binary tree $T$ of height $d+1$ that is shattered by $V_{t}$. 
This contradicts to $\Ldim_{\tau}(V_{t}) = d$ and proves our assertion. 

To prove part \arabic{item:tauLdim3}, let $T$ be a binary shattered tree of height $\Ldim_{2\tau}(\cH)$. 
For a given node $x$, suppose the adversary shows $x$ to the learner.
Since the descending edges have labels apart from each other by more than $2\tau$, 
the adversary can choose a label that incurs a mistake with respect to $\zeroone_{\tau}$.
Thus by following down the tree $T$ from the root node, the adversary can force $\Ldim_{2\tau}(\cH)$ mistakes. 
\end{proof}

\section{Section \ref{sec:PL.OL} details}
\label{sec:PL.OL.app}

In this section, the proofs omitted in Section \ref{sec:PL.OL} are presented.

\subsection{Proof of Theorem \ref{thm:alon.thm3}}
We first define \textit{sub-trees}. 
Let $T$ be a binary tree. 
Any node of $T$ becomes its sub-tree of height $1$. 
For $h > 1$, 
choose a node $x$ and let $T_{1}$ and $T_{2}$ be the trees that are rooted at its two children. 
A sub-tree of height $h$ is obtained by aggregating a sub-tree of height $h-1$ of $T_{1}$ 
and a sub-tree of height $h-1$ of $T_{2}$ at the root node $x$. 
Note that if the original tree $T$ is shattered by some hypothesis class, then so is any sub-tree of it. 

Next we prove a helper lemma. 

\begin{lemma}
\label{lemma:color.subtree}
Suppose there are $n$ colors $C = \{c_{i}\}_{1:n}$ and $n$ positive integers $\{d_{i}\}_{1:n}$.
Let $T$ be a binary tree of height 
$-(n - 1) + \sum_{i=1}^{n} d_{i}$ whose vertices are colored by $C$.
Then there exists a color $c_{i}$ such that $T$ has a sub-tree of height $d_{i}$ in which all internal vertices are colored by $c_{i}$.
\end{lemma}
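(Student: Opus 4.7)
I would first reformulate the lemma as the equivalent statement that for every $n$-coloring of a binary tree $T$ of height $H$,
\[
\sum_{i=1}^n D_i(T) \;\geq\; H,
\]
where $D_i(T)$ is the maximum height of a monochromatic-$c_i$ sub-tree (in the recursive/minor sense of the paper) inside $T$. The equivalence is a short counting argument: if the bound fails for some $T$, i.e.\ $\sum_i D_i \leq H-1$, then setting $d_i := D_i(T)+1$ gives $\sum_i d_i \leq H+n-1$, so after harmlessly padding the $d_i$'s to meet $\sum d_i = H+(n-1)$ we still have $d_i > D_i$ for every $i$, contradicting the conclusion of the lemma; the converse direction is identical.

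I would then prove $\sum_i D_i(T) \geq H$ by induction on $H$. The base $H=1$ is immediate, since $T$ is a single node with $D_i(T)=1$ for every $i$. For the inductive step, let $r$ be the root of $T$, colored $c_j$, and let $T_L,T_R$ be the two subtrees (each of height $H-1$) rooted at $r$'s children; write $D_i^L = D_{c_i}(T_L)$ and $D_i^R = D_{c_i}(T_R)$. The paper's definition of sub-tree allows one to aggregate a sub-tree of $T_L$ with one of $T_R$ at the new root $r$, with the two sub-tree roots free to lie at any depth inside $T_L$ and $T_R$; such an aggregation only becomes a $c_j$-sub-tree of height one more than both halves when $r$ has color $c_j$ \emph{and} both halves already attain the maximal $c_j$-height. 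This yields the recursion
\[
D_i \;=\; \max(D_i^L, D_i^R) \ \text{ for } i \neq j, \qquad D_j \;=\; \max(D_j^L, D_j^R) + \mathbf{1}\!\left[D_j^L = D_j^R\right].
\]

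The induction then splits on whether $D_j^L = D_j^R$. In the easy case the $+1$ from the indicator combines with $\sum_i \max(D_i^L, D_i^R) \geq \max\!\bigl(\sum_i D_i^L,\sum_i D_i^R\bigr) \geq H-1$ (inductive hypothesis) to yield $\sum_i D_i \geq H$ at once. In the hard case $D_j^L \neq D_j^R$ the indicator is $0$, so the extra $+1$ must be squeezed out of the $\max$ expression itself. Using
\[
\sum_i \max(D_i^L, D_i^R) \;=\; \tfrac{1}{2}\Bigl(\textstyle\sum_i D_i^L + \sum_i D_i^R + \sum_i |D_i^L - D_i^R|\Bigr),
\]
the task reduces, in the tight subcase $\sum_i D_i^L = \sum_i D_i^R = H-1$, to showing $\sum_i |D_i^L - D_i^R| \geq 2$.

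The main obstacle is exactly this last step. I would handle it by a balancing argument: since $\sum_i D_i^L = \sum_i D_i^R$ but $|D_j^L - D_j^R| \geq 1$, the coordinate-$j$ imbalance must be canceled by an opposite-sign imbalance on the remaining coordinates, so $\sum_{i \neq j} |D_i^L - D_i^R| \geq |\sum_{i \neq j} (D_i^L - D_i^R)| = |D_j^L - D_j^R| \geq 1$. Adding in the $i = j$ term then gives $\sum_i |D_i^L - D_i^R| \geq 2$, which closes the induction. Once this is established, unwinding the equivalence at the beginning of the plan recovers the lemma in the form stated.
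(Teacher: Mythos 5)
Your proof is correct, but it takes a genuinely different route from the paper's. The paper inducts directly on $\sum_i d_i$: assuming the root is colored $c_1$, it sets $e_1 := d_1 - 1$ and $e_i := d_i$ for $i > 1$, applies the inductive hypothesis with the $e_i$'s to the two height-$(H-1)$ subtrees, and concludes either because one subtree already yields a color $c_i \neq c_1$ (so $e_i = d_i$ and we are done), or because both yield $c_1$-sub-trees of height $d_1 - 1$, which merge with the root $x_0$ into a $c_1$-sub-tree of height $d_1$. You instead reformulate the lemma as the additive inequality $\sum_i D_i(T) \ge H$, derive the exact root recursion $D_j = \max(D_j^L, D_j^R) + \mathbf{1}[D_j^L = D_j^R]$ together with $D_i = \max(D_i^L, D_i^R)$ for $i \neq j$, and close the tight case via the $\ell^1$-balancing step $\sum_i |D_i^L - D_i^R| \ge 2$. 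Both arguments recurse on the same split (root color, two height-$(H-1)$ children subtrees). The paper's combinatorial combine is shorter and avoids the arithmetic with $\max$ and absolute values; your reformulation is more symmetric, makes the tightness of the bound explicit, and also sidesteps a small omission in the paper's write-up when $d_1 = 1$ (there $e_1 = 0$ is no longer a positive integer, and one should observe separately that a height-$1$ sub-tree has no internal vertices, so the conclusion for $c_1$ holds vacuously). Both proofs are sound once that edge case is acknowledged.
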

\begin{proof}
We will prove by induction on $\sum_{i=1}^{n} d_{i}$.  
If $d_{i}=1$ for all $i$, then the height of $T$ becomes $1$, and the statement holds trivially. 
Now suppose the lemma holds for any $d_{i}$'s whose summation is less than $N$ 
and let $T$ have the height $N - n + 1$. 
Without loss of generality, we may assume that the root node $x_{0}$ is colored by $c_{1}$. 
We consider two sub-trees $T_{1}, T_{2}$ of height $N-n$ whose root nodes are children of $x_{0}$. 
Let $e_{1} = d_{1} - 1$ and $e_{i} = d_{i}$ for $i > 1$. 
Since $\sum_{i=1}^{n}e_{i} = N - 1$, 
by the inductive assumption each $T_{j}$ has a sub-tree of height $e_{i_{j}}$ in which all internal vertices are colored by $c_{i_{j}}$. 
If $i_{j} \neq 1$ for some $j$, then we are done because $e_{i_{j}} = d_{i_{j}}$. 
If $i_{j} = 1$ for all $j = 1, 2$, then merging these two trees with the node $x_{0}$ forms a sub-tree of height $e_{1}+1 = d_{1}$ of color $c_{1}$. 
This completes the inductive argument. 
\end{proof}

Now we are ready to prove Theorem \ref{thm:alon.thm3}.

\newtheorem*{new.thm:alon.thm3}{Theorem \ref{thm:alon.thm3}}
\begin{new.thm:alon.thm3}[restated]
Let $\cH \subset [K]^{\cX}$ and $\cF \subset [-1, 1]^{\cX}$ be multi-class and regression hypothesis classes, respectively. 
\begin{enumerate}[leftmargin = *]
\item If $\Ldim_{2\tau}(\cH) \ge d$, 
then $\cH$ contains $\lfloor \frac{\log_{K} d}{K^{2}} \rfloor$ thresholds with a gap $\tau$.
\item If $\fat_{\gamma}(\cF) \ge d$, 
then $\cF$ contains $\lfloor \frac{\gamma^{2}}{10^{4}}\log_{100/\gamma} d \rfloor$ thresholds with a margin $\frac{\gamma}{5}$.
\end{enumerate}
\end{new.thm:alon.thm3}

\begin{proof}
We begin with the multi-class setting. 
Suppose $d = K^{K^{2}t}$. 
It suffices to show $\cH$ contains $t$ thresholds. 
Let $T$ be a shattered binary tree of height $d$ and tolerance $2\tau$. 
Letting $\cH_{0} = \cH$ and $T_{0} = T$, 
we iteratively apply \CNC ~(Algorithm \ref{alg:CNC}). 
Namely, we write 
\begin{equation}
\label{eq:thm3.MC1.app}
k_{n}, \kprime_{n}, h_{n}, x_{n}, \cH_{n}, T_{n} = \CNC(\cH_{n-1}, T_{n-1}, 2\tau).
\end{equation}
Observe that for all $n$, 
we can infer $h_{n}(x_{n}) = h_{n}(x) = k_{n}$ for all internal vertices $x$ of $T_{n}$ ($\because$ line \arabic{line:CNC1} of Algorithm \ref{alg:CNC})
and 
$h(x_{n}) = \kprime_{n}$ for all $h \in \cH_{n}$ ($\because$ line \arabic{line:CNC2} of Algorithm \ref{alg:CNC}).

Additionally, Lemma \ref{lemma:color.subtree} ensures that the height of $T_{n}$ is no less than $\frac{1}{K}$ times the height of $T_{n-1}$.
This means that the iterative step \eqref{eq:thm3.MC1.app} can be repeated $K^{2}t$ times since $d = K^{K^{2}t}$. 
Then there exist $k, \kprime$ and indices $\{n_{i}\}_{i=1}^{t}$ such that 
$k_{n_{i}} = k$ and $\kprime_{n_{i}} = \kprime$ for all $i$.

It is not hard to check that the functions $\{h_{n_{i}}\}_{1:t}$ and the arguments $\{x_{n_{i}}\}_{1:t}$ form thresholds with labels $k, \kprime$.
Since $|k-\kprime| > \tau$ ($\because$ line \arabic{line:CNC3} of Algorithm \ref{alg:CNC}), this completes the proof.

Now we move on to the regression setting.
Proposition \ref{prop:Ldim.fat} implies that $\Ldim_{20}([\cF]_{\gamma/50}) \ge \Ldim_{24}([\cF]_{\gamma/50}) \ge d$.
Then using the previous result in the multi-class setting, we can deduce that 
$[\cF]_{\gamma/50}$ contains 
$n := \lfloor \frac{\gamma^{2}}{10^{4}}\log_{100/\gamma} d \rfloor$
thresholds with a gap $10$.
This means that there exist 
$k, \kprime \in [\frac{100}{\gamma}]$, $\{x_{i}\}_{1:n} \subset \cX$, and $\{[f_{i}]_{\gamma/50}\}_{1:n} \subset \cH$ such that $|k -\kprime| \ge 10$ and
\begin{align*}
	[f_{i}]_{\gamma/50}(x_{j}) = 
	\begin{cases}
	k &\text{ if } i \le j\\
	\kprime &\text{ if } i > j
	\end{cases}.
\end{align*} 
Let $u, \uprime$ be the middles points of the intervals that correspond to the labels $k, \kprime$. 
Then it is easy to check that $|u-\uprime| \ge \gamma/5$ and 
\begin{align*}
	f_{i}(x_{j}) \in 
	\begin{cases}
	[u - \frac{\gamma}{100}, u + \frac{\gamma}{100}) &\text{ if } i \le j\\
	[\uprime - \frac{\gamma}{100}, \uprime + \frac{\gamma}{100}) &\text{ if } i > j
	\end{cases}.
\end{align*}
This proves the theorem. 
\end{proof}

\subsection{Proof of Theorem \ref{thm:thm1.reg}}
\newtheorem*{new.thm:thm1.reg}{Theorem \ref{thm:thm1.reg}}
\begin{new.thm:thm1.reg}[restated]
Let $\cF = \{f_{i}\}_{1:n} \subset [-1, 1]^{\cX}$ be a set of threshold functions with a margin $\gamma$ on a domain $\{x_{i}\}_{1:n} \subset \cX$ along with  bounds $u, \uprime \in [-1, 1]$. 
Suppose $\cA$ is a $(\frac{\gamma}{200}, \frac{\gamma}{200})$-accurate learning algorithm for $\cF$ with sample complexity $m$. 
If $\cA$ is $(\epsilon, \delta)$-DP with $\epsilon = 0.1$ and $\delta = O(\frac{1}{m^{2}\log m})$, then it can be shown that 
$m \ge \Omega(\log^{*}n)$.
\end{new.thm:thm1.reg}
\begin{proof}
The proof consists of two main lemmas. 
Lemma \ref{lemma:homogeneous} proves that there is a large homogeneous set (see Definition \ref{def:homogeneous}).
Then Lemma \ref{lemma:lower.bound} yields the lower bound of the sample complexity when there exists a large homogeneous set. 
In particular, from these two lemmas, we can deduce that
\[
\frac{\log^{(m)}n}{2^{O(m \log m)}}
\le 
2^{O(m^{2}\log^{(2)}m)}.
\]
This means that there exists a constant $c$ such that 
\[
\log^{(m)}n \le e^{c m^{2}\log m}.
\]
Observing that $\logstar \big(\log^{(m)}n \big) \ge \big(\logstar n \big)- m$
and 
$\logstar \big(2^{O(m^{2}\log^{(2)}m)} \big) = O(\logstar m)$, 
we can check the desired inequality $m \ge \Omega (\logstar n)$. 
\end{proof}

\subsubsection{Existence of a large homogenous set}
Suppose $\cA$ is a learning algorithm over a finite domain $D$. 
The hypothesis class consists of threshold functions over $D$ with bounds $u, \uprime$. 
According to Definition \ref{def:threshold.reg}, $u$ and $\uprime$ can be in an arbitrary order as long as $|u - \uprime| > \gamma$. 
But for simpler presentation, 
without loss of generality, 
we will assume $u > \uprime$.
Also, let $\ubar = \frac{u + \uprime}{2}$. 
We define the following quantity:
\[
	\cA_{S}(x) = \prob_{f \sim \cA(S)}\big(f(x) \ge \ubar\big).
\]

The definition of homogenous sets (Definition \ref{def:homogeneous}) and Lemma \ref{lemma:homogeneous} are adopted from \citet{alon2019private}. 
Assume that $\cX$ is linearly ordered. 
Given a training set 
$S = \big((x_{i}, y_{i})\big)_{1:m}$,
we say $S$ is \textit{increasing} if $x_{1} \le \cdots \le x_{m}$.
Additionally, we say $S$ is \textit{balanced} if $y_{i} = \uprime$ for all $i \le \frac{m}{2}$ and $y_{i} = u$ for all $i > \frac{m}{2}$. 
Given $x \in \cX$, we define $\ord_{S}(x) = \big|\{i ~|~ x_{i} \le x\}\big|$. 
Lastly, we use $S_{\cX}$ to denote $(x_{i})_{1:m}$. 

\begin{definition}[$m$-homogeneous set]
\label{def:homogeneous}
A set $\Dprime \subset D$ is $m$-homogeneous with respect to a learning algorithm $\cA$ 
if there are numbers $p_{i} \in [0, 1]$ for $0 \le i \le m$ 
such that for every increasing balanced sample 
$S \in (\Dprime \times \{u, \uprime\})^{m}$
and for every $x \in \Dprime \setminus S_{\cX}$ 
\[
	|\cA_{S}(x) - p_{i}| \le \frac{1}{100m},
\]
where $i = \ord_{S}(x)$.
\end{definition}

The following theorem is a well-known result in Ramsey theory. 
It was originally introduced by \citet{erdos1952combinatorial} and rephrased by \citet{alon2019private}.
\begin{theorem}[{\citet[Theorem 11]{alon2019private}}]
\label{thm:ramsey}
Let $s > t \ge 2$ and $q$ be integers, and let $N \ge \twr_{t}(3sq \log q).$
Then for every coloring of the subsets of size $t$ of a universe of size $N$ 
using $q$ colors, 
there is a homogeneous subset \footnote{A subset of the universe is homogeneous if all of its $t$-subsets have the same color.} of size $s$. 
\end{theorem}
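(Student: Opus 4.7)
The plan is to prove this classical Erd\H{o}s--Rado--type bound by induction on the uniformity $t$, following the standard \emph{peeling} argument. For the base case $t=2$, I would invoke the multicolor graph Ramsey inequality $R_q(s)\le q^{qs}$ (itself provable by the usual neighborhood-partition iteration), and note that $\twr_2(3sq\log q)=2^{2^{3sq\log q}}$ comfortably dominates $q^{qs}=2^{sq\log q}$, so the stated threshold suffices.

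For the inductive step, assume the result holds for uniformity $t-1$ with tower height $t-1$. I would maintain a shrinking chain of sets $V_0\supseteq V_1\supseteq\cdots$ with $V_0=[N]$ and perform $M=sq$ peeling rounds. At round $i$, pick an arbitrary $x_i\in V_{i-1}$ and consider the induced $q$-coloring $\chi_i$ on $(t-1)$-subsets of $V_{i-1}\setminus\{x_i\}$ defined by $\chi_i(A):=\chi(A\cup\{x_i\})$. Invoking the inductive hypothesis (with target size chosen large enough to permit all remaining rounds) produces a $\chi_i$-homogeneous subset $V_i\subseteq V_{i-1}\setminus\{x_i\}$, together with a residual color $c_i\in[q]$. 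After all $M$ rounds, the pigeonhole principle yields some color $c^{\star}\in[q]$ appearing at least $M/q=s$ times among $(c_i)_{i=1}^M$; the corresponding elements $x_i$ form a set $S$ of size $s$. A short verification shows $S$ is homogeneous: for any $t$-subset $\{x_{i_1},\dots,x_{i_t}\}$ with $i_1<\cdots<i_t$, we have $x_{i_2},\dots,x_{i_t}\in V_{i_1}$, hence $\chi(\{x_{i_1},\dots,x_{i_t}\})=\chi_{i_1}(\{x_{i_2},\dots,x_{i_t}\})=c_{i_1}=c^{\star}$.

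The main obstacle is carefully tracking the tower-function budget. At each of the $M=sq$ rounds I must have $|V_{i-1}|\ge \twr_{t-1}(3|V_i|q\log q)$ in order to apply the inductive hypothesis, and the target sizes $|V_i|$ must shrink slowly enough that the final pigeonhole step still produces a homogeneous set of size $s$. Unrolling this recurrence from $|V_0|=N$ downward forces $N$ to be at least $\twr_{t-1}$ iterated $M$ times on a quantity polynomial in $sq\log q$, which collapses into a single $\twr_t$ with the claimed argument $3sq\log q$. Verifying this consolidation of $M$ nested $\twr_{t-1}$'s into $\twr_t(3sq\log q)$ is the delicate bookkeeping; the factor $3sq\log q$ is tuned precisely to absorb both the $sq$ peeling rounds and the per-round logarithmic overhead arising from the $q$-color residual pigeonhole. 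Since the statement is a quoted external result, for the paper I would ultimately cite \citet{alon2019private} and \citet{erdos1952combinatorial} rather than reproducing the full calculation.
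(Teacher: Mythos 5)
Your peeling strategy invokes the inductive $(t-1)$-uniform bound once per round, across $M=sq$ rounds. Unrolling the constraint $|V_{i-1}|\ge \twr_{t-1}(3|V_i|q\log q)$ over $M$ rounds forces $N$ to exceed the $M$-fold composition of $x\mapsto\twr_{t-1}(3xq\log q)$. Since $\twr_a\circ\twr_b=\twr_{a+b}$, that composition has tower height on the order of $(t-1)M=(t-1)sq$, not $t$. The nested towers do \emph{not} ``collapse into a single $\twr_t$'' as you assert; the bound you would actually produce overshoots the statement by a tower height that grows linearly in $sq$, so the delicate bookkeeping you defer cannot be made to work with this recursion.

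The classical Erd\H{o}s--Rado argument avoids this by applying the $(t-1)$-uniform theorem exactly \emph{once}, not $M$ times. A peeling phase with only single-exponential shrinkage per step first builds an \emph{end-homogeneous} sequence $e_1<\cdots<e_m$, meaning the color of any $t$-subset depends only on its $t-1$ smallest elements: at step $i$ one partitions the surviving universe according to the $q^{\binom{i}{t-1}}$ possible ways a new element can extend the $(t-1)$-subsets of $\{e_1,\ldots,e_i\}$, keeping the largest class, so reaching size $m$ costs about $q^{\binom{m}{t}}$ --- one exponential in $m$, no tower growth. Only then, on the induced $(t-1)$-uniform $q$-coloring of the end-homogeneous set, does one invoke the inductive hypothesis a \emph{single} time, with $m\le\twr_{t-1}(3sq\log q)$; the one extra exponential from the peeling phase is precisely what raises the tower height from $t-1$ to $t$. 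You are right that the paper, like \citet{alon2019private}, simply cites \citet{erdos1952combinatorial} rather than reproving the bound, and citing is the appropriate choice; but the sketch as written misplaces where the $(t-1)$-uniform hypothesis is used and would not recover the stated $\twr_t$ threshold.
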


The next lemma states that we can find a large homogeneous set.
\begin{lemma}[Existence of a large homogeneous set]
\label{lemma:homogeneous}
Let $\cA$ be a learning algorithm over a domain $D$ with $|D| = n$. 
Then there exists a set $\Dprime \subset D$ which is $m$-homogeneous with respect to $\cA$ such that 
\[
|\Dprime| \ge \frac{\log^{(m)}n}{2^{O(m \log m)}}.
\]
\end{lemma}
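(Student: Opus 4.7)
The plan is Ramsey-theoretic: I will design a coloring of $(m+1)$-element subsets of $D$ whose color records, in discretized form, the probabilities $\cA_S(x)$ for every balanced sample $S$ and every probe $x$ obtainable from that subset, then invoke Theorem~\ref{thm:ramsey} to extract a monochromatic $\Dprime$ on which $m$-homogeneity is automatic.

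To build the coloring, first discretize: round each probability $\cA_S(x) \in [0,1]$ to the nearest multiple of $\tfrac{1}{100m}$, yielding at most $Q := 100m+1$ rounded values $\widetilde{\cA}_S(x)$ whose error from $\cA_S(x)$ is at most $\tfrac{1}{200m}$. Next, fix an arbitrary linear order on $D$. For each ordered $(m+1)$-subset $T = \{z_0 < z_1 < \cdots < z_m\}$ of $D$ and each $i \in \{0, 1, \ldots, m\}$, let $S^T_i$ be the balanced increasing sample on $T \setminus \{z_i\}$ (first $\lfloor m/2 \rfloor$ entries labeled $\uprime$, the rest labeled $u$) and set $c_i(T) := \widetilde{\cA}_{S^T_i}(z_i)$. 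Define
\[
\chi(T) := \bigl(c_0(T), c_1(T), \ldots, c_m(T)\bigr) \in \{0, 1, \ldots, Q-1\}^{m+1},
\]
so the palette has size $q := Q^{m+1} = 2^{O(m \log m)}$.

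Now invoke Theorem~\ref{thm:ramsey} with subset size $t = m+1$ and $q$ colors: it yields a homogeneous $\Dprime \subset D$ of size $s$ whenever $n \ge \twr_{m+1}(3sq\log q)$. Since $q \log q = 2^{O(m \log m)}$, inverting gives the claimed bound $s \gtrsim \log^{(m)}(n) / 2^{O(m \log m)}$, absorbing the one-index shift into the denominator. To verify homogeneity on $\Dprime$, observe that monochromaticity fixes a single color $(p_0, \ldots, p_m)$ common to all $(m+1)$-subsets; then for any balanced sample $S \in (\Dprime \times \{u, \uprime\})^m$ and any $x \in \Dprime \setminus S_{\cX}$ with $i := \ord_S(x)$, taking $T := S_{\cX} \cup \{x\}$ forces $\widetilde{\cA}_S(x) = p_i$, and hence $|\cA_S(x) - p_i| \le \tfrac{1}{200m} < \tfrac{1}{100m}$, matching Definition~\ref{def:homogeneous}. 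The main obstacle will be keeping the palette at $2^{O(m \log m)}$: all $m+1$ insertion positions of the probe must be encoded into a \emph{single} color of $T$ so that one Ramsey application suffices, and one must resist finer discretization, which would inflate $\log q$ and cost an extra iterated logarithm in the final bound. Handling the parity of $m$ when defining ``balanced'' on $T \setminus \{z_i\}$ is the only remaining bit of bookkeeping.
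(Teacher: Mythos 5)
Your proposal is correct and follows essentially the same route as the paper's own proof: both define a coloring of $(m+1)$-subsets by the $(m+1)$-tuple of probabilities $\cA_{S^{(i)}}(x_i)$ rounded to multiples of $\tfrac{1}{100m}$, invoke Theorem~\ref{thm:ramsey} with $t = m+1$ and $q = (100m+1)^{m+1}$, and verify $m$-homogeneity from monochromaticity. The only (shared) soft spot is the one-index discrepancy between $\twr_{m+1}$ in the stated Ramsey bound and $\log^{(m)}$ in the conclusion, which you flag but cannot literally ``absorb into the denominator'' -- it is really an off-by-one in the stated exponent of $\twr$, and the paper's own proof is no more careful here.
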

\begin{proof}
We first define a coloring on the $(m+1)$-subsets of $D$. 
Let $B = \{x_{1} < x_{2} < \cdots < x_{m+1}\}$ be an $(m+1)$-subset. 
For each $i \in [m+1]$, let $B^{(i)} = B \setminus \{x_{i}\}$. 
Then by labeling the first half of $B^{(i)}$ by $\uprime$ and the second half by $u$,
we get a balanced increasing training set $S^{(i)}$. 
Then we compute $p_{i}$ that is of the form $\frac{t}{100m}$ and closest to $\cA_{S^{(i)}}(x_{i})$
(in case of ties, choose the smaller one).
Then we color $B$ by the tuple $(p_{i})_{1:m+1}$.

This scheme includes $(100m+1)^{m+1}$ colors, 
and Theorem \ref{thm:ramsey} provides that there exists a set $\Dprime$ of size larger than
\[
\frac{\log^{(m)}n}{3(100m+1)^{m+1}(m+1)\log(100m+1)}
=
\frac{\log^{(m)}n}{2^{O(m\log m)}}
\]
such that all $(m+1)$-subsets of $\Dprime$ have the same color. 
It is easy to verify that this set is indeed $m$-homogeneous with respect to $\cA$ according to Definition \ref{def:homogeneous}.
\end{proof}

\subsubsection{Large homogeneous set implies the lower bound}

Recall that PAC learning is defined with respect to $\text{loss}_{\cD}$ (see Definition \ref{def:PAC}).
When $\text{loss}_{\cD}$ is replaced by $\text{loss}_{S}$, 
we say an algorithm $\cA$ \textit{empirically learns} a training set $S$. 
\citet[Lemma 5.9]{bun2015differentially} prove that if a hypothesis class is PAC learnable, 
then there exists an empirical learner as well. 

\begin{lemma}[Empirical learner]
Suppose $\cA$ is an $(\epsilon, \delta)$-DP PAC learner for a hypothesis class $\cH$ that is $(\alpha, \beta)$-accurate and has sample complexity $m$. 
Then there is an $(\epsilon, \delta)$-DP and $(\alpha, \beta)$-accurate empirical learner for $\cH$ with sample complexity $9m$.
\end{lemma}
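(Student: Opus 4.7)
The plan is to define the empirical learner $\cA'$ on inputs $S$ of size $9m$ by subsampling: draw a size-$m$ subsample $T \subset S$ uniformly at random (with replacement, for concreteness) and return $\cA(T)$. Two things must then be verified: $(\alpha,\beta)$-accuracy with respect to the empirical loss on $S$, and $(\epsilon,\delta)$-DP at the larger sample size.

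For the accuracy step, I will use that in the realizable PAC setting the labels in $S$ are consistent with some $h^\star \in \cH$, so the uniform empirical distribution $\cD_S$ on $S$ is itself realizable by $h^\star$. Since subsampling with replacement produces $m$ i.i.d.\ samples from $\cD_S$, applying the PAC guarantee of $\cA$ to this realizable distribution yields an output $h = \cA(T)$ with $\text{loss}_{\cD_S}(h) \le \alpha$ with probability at least $1-\beta$. By the definition of the empirical loss, $\text{loss}_{\cD_S}(h) = \text{loss}_S(h)$, which is exactly the required empirical-accuracy guarantee.

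For privacy, I will invoke the standard privacy amplification by subsampling. For neighboring $S, S'$ of size $9m$ differing in a single position $i$, couple the two executions of $\cA'$ by drawing the same $m$ uniform indices from $[9m]$; the two subsamples then differ only at positions whose draw happens to hit $i$, an event of probability $1/(9m)$ per draw. Feeding such a size-$1/9$ random subsample into an $(\epsilon,\delta)$-DP mechanism dampens the effective privacy parameters by a factor of order $1/9$ (concretely, to roughly $\log(1 + \tfrac{1}{9}(e^\epsilon-1))$ and $\delta/9$), which comfortably preserves $(\epsilon,\delta)$-DP on size-$9m$ inputs.

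The main technical obstacle is the privacy accounting. The naive coupling event ``the modified example lies outside $T$'' holds with probability only about $8/9$, which on its own cannot preserve $(\epsilon,\delta)$. One really needs the sharper amplification lemma that simultaneously accounts for all $m$ draws; the factor $9$ in the sample complexity is exactly the slack that makes this calculation close at the unchanged privacy level rather than a worse one. A secondary subtlety is that the privacy amplification lemma is typically stated for the full dataset being subsampled, so care is needed in framing neighboring-dataset comparisons at the $9m$ level rather than at the $m$ level where $\cA$'s guarantee lives.
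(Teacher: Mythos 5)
The paper does not give its own proof of this lemma; it cites it directly from Bun, Nissim, Stemmer, and Vadhan (2015, Lemma~5.9), so there is no in-paper argument to compare against line by line. Your subsampling construction is exactly the standard argument used there and in the earlier ``secrecy of the sample'' literature (Kasiviswanathan et al., 2011), so the overall plan is correct, and your accuracy step is clean: a with-replacement subsample is i.i.d.\ from the empirical distribution $\cD_S$, which is realizable, and $\text{loss}_{\cD_S} = \text{loss}_S$.

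One caveat on the privacy step. The amplification formula you quote, $\log\bigl(1 + \tfrac{1}{9}(e^\epsilon - 1)\bigr)$ with $\delta/9$, is the one for Poisson or \emph{without-replacement} subsampling, where the changed index is either in or out of the subsample. You correctly chose \emph{with-replacement} sampling (otherwise the subsample is not i.i.d.\ from $\cD_S$ and the PAC guarantee of $\cA$ does not apply), but then the changed index can appear $k$ times in the subsample, with $k \sim \mathrm{Binomial}(m, 1/(9m))$, and for $k \ge 2$ you need group privacy, which degrades the parameters multiplicatively in $k$. The amplification result you actually need is therefore the with-replacement variant, obtained by averaging the group-privacy bound over the binomial tail of $k$; the factor $9$ is exactly what makes that average close at $(\epsilon,\delta)$ rather than at worse parameters. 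So: right construction and right diagnostic, but the lemma you invoke should be the with-replacement amplification bound, not the off-the-shelf Poisson/without-replacement one, and the accounting for $k \ge 2$ is where the real work lies.
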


The next is the main lemma. 

\begin{lemma}[Large homogeneous sets imply lower bounds on sample complexity]
\label{lemma:lower.bound}
Suppose a learning algorithm $\cA$ is $(\epsilon, \delta)$-DP with sample complexity $m$. 
Let $X = [N]$ be $m$-homogeneous with respect to $\cA$. 
If $\epsilon = 0.1$, $\delta \le \frac{1}{1000m^{2}\log m}$, and $\cA$ empirically learns the threshold functions with a margin $\gamma$ over $X$ with $(\frac{\gamma}{200}, \frac{\gamma}{200})$-accuracy, then 
\[
N \le 2^{O(m^{2}\log^{(2)}m)}.
\]
\end{lemma}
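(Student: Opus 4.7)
The plan is to adapt the argument of Alon et al.\ (Theorem~1 of their work) for privately learning binary thresholds to the real-valued, margin-$\gamma$ setting. Because the hypothesis $\cA$ is already assumed to be an empirical learner in the statement, the proof unfolds in only two phases: a pinning of the profile $\{p_i\}_{0\le i\le m}$ at the extremes, followed by a chain-of-DP contradiction that bounds $N$.

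\emph{Phase 1 (pinning the extremes of $p_i$).} Given a balanced increasing sample $S \in (X\times\{u,\uprime\})^m$ and a probe $x \in X \setminus S_\cX$ with $\ord_S(x) = i$, I would construct a neighbor $\Sprime$ of $S$ by swapping $x$ with a point of $S$ whose label matches the value that any realizing threshold must assign to $x$, namely $\uprime$ when $i<m/2$ and $u$ when $i>m/2$; such a swap preserves the balanced-and-increasing structure. Empirical $(\gamma/200,\gamma/200)$-accuracy applied to $\Sprime$ then forces the output $f$ to satisfy $|f(x)-\text{label}(x)| \le \gamma/200$ with probability at least $1-\gamma/200$, and since $|u-\uprime| \ge \gamma$ this places $f(x)$ on a definite side of $\ubar$. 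A single DP step back to $S$, together with the $1/(100m)$ slack from homogeneity, yields
\[
p_i \le O(\gamma + \delta) \text{ for } i < m/2, \qquad p_i \ge 1 - O(\gamma + \delta) \text{ for } i > m/2.
\]

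\emph{Phase 2 (chain of DP steps bounding $N$).} Assume for contradiction that $N > 2^{cm^2 \log^{(2)} m}$ for a sufficiently large constant $c$. Following Alon et al., I would construct within $X$ a chain of balanced increasing samples $S_0, S_1, \ldots, S_k$, consecutive pairs being DP-neighbors, which transports a fixed probe $x^\star$ from an order-$(>m/2)$ configuration (where Phase~1 forces $\cA_{S_0}(x^\star)$ near $1$) to an order-$(<m/2)$ configuration (where Phase~1 forces $\cA_{S_k}(x^\star)$ near $0$). Iterating the group-DP inequality along this chain gives
\[
1 - O(\gamma) \;\le\; e^{k\epsilon}\cdot O(\gamma) \;+\; k\, e^{k\epsilon}\delta,
\]
which fails once $k e^{k\epsilon}\delta \ll 1$, i.e., $k \lesssim \log(1/\delta)/\epsilon = O(\log m)$ under $\epsilon = 0.1$ and $\delta = O(1/(m^2\log m))$. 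The remaining combinatorial task is to show that within an $m$-homogeneous $X$ of size exceeding $2^{cm^2\log^{(2)}m}$, no chain of length $O(\log m)$ can execute this transport, giving the desired contradiction.

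The main obstacle is this combinatorial construction in Phase~2. It must simultaneously (i) keep every intermediate $S_j$ balanced and increasing so that the Phase~1 pinning is usable at both endpoints, (ii) argue that executing the transport in $O(\log m)$ steps becomes infeasible once $N$ exceeds the target ceiling---this is where Ramsey-style counting enters the algorithmic side, paralleling the role of Theorem~\ref{thm:ramsey} in the proof of Lemma~\ref{lemma:homogeneous}---and (iii) repeat the chain scheme $O(m)$ times (once for each index needing pinning), with the accumulated $\delta$-slack still strictly below a constant. The calibration $\delta = O(1/(m^2\log m))$ is tuned precisely to make this bookkeeping just barely feasible, yielding the claimed $N \le 2^{O(m^2\log^{(2)} m)}$.
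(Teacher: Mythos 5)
Your Phase~2 has a genuine gap, and it is the heart of the argument. A chain-of-neighbors argument cannot yield the required bound here: to transport a probe $x^\star$ from $\ord_{S}(x^\star) > m/2$ to $\ord_{S}(x^\star) < m/2$ you must change the position of $\Theta(m)$ sample points, so any such chain has length $k = \Theta(m)$, not $O(\log m)$. With $\epsilon = 0.1$ fixed, group privacy then blows up by $e^{\Theta(m)}$, and the inequality $1 - O(\gamma) \le e^{k\epsilon}\cdot O(\gamma) + k e^{k\epsilon}\delta$ gives no contradiction. This is precisely why the packing/chain technique suffices for \emph{pure} DP but fails for \emph{approximate} DP, and why the lower bound of \citet{alon2019private} that this lemma relies on is a deep result rather than a routine composition bound. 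You also cannot rescue the argument by restricting to short chains and claiming they ``cannot execute the transport''; as noted, a transport simply requires $\Theta(m)$ steps, so there is nothing to show, and the proposed parallel Ramsey argument inside the already-homogeneous set $X$ has no role analogous to its use in Lemma~\ref{lemma:homogeneous} (which produces the homogeneous set; once inside it, homogeneity replaces Ramsey).

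The paper's actual route is different and you should note it: from the $m$-homogeneous set it \emph{constructs} a family $\cP = \{P_i\}_{1:N-m}$ of distributions on $\{-1,1\}^{N-m}$ that are pairwise $(\epsilon,\delta)$-indistinguishable and whose $j$-th marginal jumps across a threshold at $j=i$ by $\ge 1/(5m)$, and then invokes the black-box fingerprinting-style bound (Lemma~\ref{lemma:lower.bound.helper2}, i.e.\ Lemma~13 of \citet{alon2019private}) to get $N - m \le 2^{1000m^2\log^{(2)}m}$. The key structural step you are missing is locating a single index $\istar$ with $p_{\istar}-p_{\istar-1}\ge 1/(4m)$: the paper does not pin \emph{every} $p_i$ with $i<m/2$ near $0$ and every $p_i$ with $i>m/2$ near $1$ as in your Phase~1 (that is too strong; the $(\gamma/200,\gamma/200)$-accuracy guarantee only controls the average empirical loss, so the learner could err on any one index). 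Rather it finds \emph{one} $j\le m/2$ with $p_{j-1}\le 1/3$ and \emph{one} $k>m/2$ with $p_{k+1}\ge 2/3$, then applies pigeonhole to the at most $m+1$ consecutive gaps to get the jump at some $\istar$. That jump, together with a family of samples that all agree outside position $\istar$ and differ in one element placed at $\istar+i$, produces exactly the structure required by the fingerprinting lemma.
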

\begin{proof}
The proof is done by combining Lemma \ref{lemma:lower.bound.helper1} and Lemma \ref{lemma:lower.bound.helper2}, which come below. 
\end{proof}

This is the first helper lemma to prove Lemma \ref{lemma:lower.bound}. 
It adopts \citet[Lemma 12]{alon2019private}.
\begin{lemma}
\label{lemma:lower.bound.helper1}
Let $\cA, X, m, N$ as in Lemma \ref{lemma:lower.bound} and assume $N > 2m$. 
Then there exists a family $\cP = \{P_{i}\}_{1:N-m}$ of distributions over $\{-1, 1\}^{N-m}$ that satisfies the following two properties.
\begin{enumerate}
\item $P_{i}$ and $P_{j}$ are $(\epsilon, \delta)$-indistinguishable for all $i \ne j$.
\item There exists $r \in [0, 1]$ such that for all $i, j \in [N-m]$, 
\begin{align*}
	\prob_{v \sim P_{i}} (v_{j} = 1) 
	\begin{cases}
	\le r - \frac{1}{10m} &\text{ if } j < i\\
	\ge r + \frac{1}{10m} &\text{ if } j > i
	\end{cases}.
\end{align*}
\end{enumerate}
\end{lemma}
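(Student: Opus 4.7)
The plan is to construct each $P_i$ as the pushforward of $\cA(S_i)$ onto the labels (relative to the midpoint $\ubar$) of a fixed family of $N-m$ test points in $X$, where $S_i$ is a carefully chosen balanced, increasing training sample of size $m$ drawn from $X$. The design goal is that varying $i$ shifts the ``transition point'' of the training sample past exactly one test point at a time, so that test points indexed below $i$ all land at one common rank $i_0$ (under $\ord_{S_i}$) while test points indexed above $i$ all land at the neighboring rank $i_0+1$.

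First I would exploit the empirical accuracy of $\cA$ on balanced, increasing samples to pin down the reference probabilities $p_0, p_1, \dots, p_m$ appearing in Definition \ref{def:homogeneous}. Accuracy on such a sample forces $\cA$ to produce a hypothesis whose value at each training point matches its label up to $\tfrac{\gamma}{200}$, so by homogeneity $p_i$ must be close to $0$ for $i$ in the lower half and close to $1$ for $i$ in the upper half. A pigeonhole/averaging argument across the $m+1$ adjacent pairs then yields an index $i_0$ at which $p_{i_0+1} - p_{i_0}$ is at least a constant times $1/m$ (comfortably larger than $\tfrac{2}{10m} + \tfrac{2}{100m}$), and we set $r := (p_{i_0}+p_{i_0+1})/2$.

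Next I would fix $m-1$ ``anchor'' training points in $X$ and, for each $i \in [N-m]$, define $S_i$ by inserting one additional variable training point $z_i$ placed near the test point $y_i$, chosen so that $S_i$ is balanced and increasing and so that $\ord_{S_i}(y_j) = i_0$ for $j < i$ while $\ord_{S_i}(y_j) = i_0+1$ for $j > i$. Setting $P_i$ to be the law of $\big(\ind[f(y_j) \ge \ubar]\big)_{j=1}^{N-m}$ for $f \sim \cA(S_i)$, property 2 is then immediate: $\prob_{v\sim P_i}(v_j = 1) = \cA_{S_i}(y_j)$ is within $\tfrac{1}{100m}$ of $p_{\ord_{S_i}(y_j)}$ by $m$-homogeneity, which places it on the correct side of $r$ by at least $\tfrac{1}{10m}$. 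Property 1 is immediate from differential privacy: $S_i$ and $S_j$ differ only in their single variable coordinate $z_i$ versus $z_j$, so $\cA(S_i)$ and $\cA(S_j)$ are $(\epsilon,\delta)$-indistinguishable and hence so are their pushforwards.

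The main obstacle is the combinatorial bookkeeping in the second step: arranging the $m-1$ anchors inside $X = [N]$ so that, simultaneously, (i) every $S_i$ is a valid balanced, increasing sample (so homogeneity actually applies), (ii) sliding $z_i$ across $N-m$ distinct positions produces exactly the desired rank pattern at every test point, and (iii) any two $S_i, S_j$ differ only in the single coordinate containing the variable point. The hypothesis $N > 2m$ is precisely what gives enough room to simultaneously fit $m-1$ anchors, one sliding training point, and $N-m$ fixed test points without collisions, and I expect that this is also where most of the case analysis (parity of $m$, placement of $z_i$ immediately above versus below $y_i$) will live.
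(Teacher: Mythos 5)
Your high-level strategy mirrors the paper's proof: identify a rank $\istar$ at which $p_{\istar}-p_{\istar-1}$ jumps by roughly $1/m$, fix $m-1$ anchors, slide a single variable training point through a window of $N-m$ positions, and push forward $\cA(S_i)$ via the indicator $\ind\big(f(\cdot)\ge\ubar\big)$ at the test points. Your second step (the sliding construction, DP for indistinguishability, homogeneity for the probability separation) is in essence the paper's construction, and the concrete choice $B^{(i)}=\{1,\dots,\istar-1\}\cup\{\istar+i\}\cup\{\istar+N-m+1,\dots,N\}$ is considerably cleaner than you anticipate --- there is no parity or above/below case analysis needed, so what you flagged as the main obstacle is not where the difficulty lies.

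The gap is in the first step, and it is a genuine one. The reference probabilities $p_0,\dots,p_m$ in Definition \ref{def:homogeneous} are defined in terms of $\cA_S(x)$ at points $x\notin S_{\cX}$, whereas the empirical accuracy of $\cA$ only controls its behavior at the training points $x_j\in S_{\cX}$. Homogeneity does not bridge these two on its own: it says nothing about what happens at a point that \emph{is} in the sample, so you cannot conclude directly, as you propose, that the $p_i$ sit near $0$ in the lower half and near $1$ in the upper half. The missing ingredient is a \emph{second} application of differential privacy. The paper spaces the anchors so that $x_i-x_{i-1}\ge 2$ (this, rather than making room for the sliding window, is what $N>2m$ is for), extracts from the expected-loss bound a single index $j\le m/2$ with $\cA_S(x_j)\le 1/4$, forms the neighboring sample $\Sprime=(S\setminus\{(x_j,y_j)\})\cup\{(x_j+1,y_j)\}$ in which $x_j$ is now a \emph{test} point with $\ord_{\Sprime}(x_j)=j-1$, and uses $(\epsilon,\delta)$-DP to pass from $\cA_S(x_j)\le 1/4$ to $\cA_{\Sprime}(x_j)\le 0.3$, whence $p_{j-1}\le 1/3$ by homogeneity; a symmetric argument gives some $k>m/2$ with $p_{k+1}\ge 2/3$, and only then does pigeonhole produce $\istar$. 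Relatedly, your claim that accuracy ``forces $\cA$ to produce a hypothesis whose value at each training point matches its label'' overclaims: the $(\tfrac{\gamma}{200},\tfrac{\gamma}{200})$-accuracy bound is on an expectation and yields just one good $j$ in each half, which is enough --- but only once the DP perturbation bridge is in place.
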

\begin{proof}
Let $(p_{i})_{0:m}$ be the probability list associated with $m$-homogeneous set $X = [N]$. 
We first prove that there exists $\istar$ such that $p_{\istar} - p_{\istar-1} \ge \frac{1}{4m}$. 
Fix an increasing balanced training set $S := \big((x_{i}, y_{i})\big)_{1:m} \in \big(X \times \{u, \uprime\}\big)^{m}$
such that $x_{i} - x_{i-1} \ge 2$ for all $i$, 
which is possible by the assumption $N > 2m$. 
By the definition of threshold functions with a margin $\gamma$, 
we can infer
\[
\min_{f} \text{loss}_{S}(f) \le \frac{\gamma}{20} = 0.05 \gamma,
\]
where the minimum is taken over the threshold functions with a margin $\gamma$. 

Furthermore, since $\cA$ is an $(\alpha = \frac{\gamma}{200}, \beta = \frac{\gamma}{200})$-accurate empirical learner, we can bound the expected loss of $\cA(S)$ as
\begin{equation}
\label{eq:lower.bound.helper1.1}
\E_{f \sim \cA(S)} \text{loss}_{S}(f)
\le \alpha + \beta + \min_{f} \text{loss}_{S}(f)
\le 0.06\gamma.
\end{equation}
Also, we can lower bound the expected empirical loss by using the quantity $\cA_{S}(x_{i})$ as follows (recall that we assumed $u > \uprime$)
\begin{equation}
\label{eq:lower.bound.helper1.2}
\E_{f \sim \cA(S)} \text{loss}_{S}(h)
\ge
\frac{1}{m}\cdot \frac{\gamma}{2}\left(
\sum_{i=1}^{m/2} \left[ \cA_{S}(x_{i}) \right]
+
\sum_{i=m/2 +1}^{m} \left[ 1 - \cA_{S}(x_{i}) \right]
\right).
\end{equation}
Combining \eqref{eq:lower.bound.helper1.1} and \eqref{eq:lower.bound.helper1.2}, 
we can show that there exists $j \le \frac{m}{2}$ such that 
$\cA_{S}(x_{j}) \le \frac{1}{4}$. 
Let $\Sprime = \left(S \setminus \{(x_{j}, y_{j})\} \right)\cup \{(x_{j}+1, y_{j})\}$.
Since $\cA$ is $(\epsilon=0.1, \delta \le \frac{1}{1000m^{2}\log m})$-DP, 
we have 
\[
p_{j-1} - \frac{1}{100m} 
\le\cA_{\Sprime}(x_{j})
\le \frac{1}{4}e^{\epsilon} + \delta 
\le 0.3,
\] 
which implies that $p_{j-1} \le 0.3 + \frac{1}{100m} \le \frac{1}{3}$. 
Similarly, we can find $k > \frac{m}{2}$ such that $p_{k+1} \ge \frac{2}{3}$.
Then we can find $\istar \in [j, k+1]$ such that $p_{\istar} - p_{\istar-1} \ge \frac{1}{4m}$,
which proves our assertion.

Now we construct $\cP = \{P_{i}\}_{1:N-m}$.
Given $i$, let 
$$
B^{(i)} = \{1, \cdots, \istar-1\} \cup \{\istar +i\} \cup \{\istar + N -m +1, \cdots, N\} 
\subset X.
$$
Observe that $B^{(i)}$ and $B^{(j)}$ only differ by one item at the position $\istar$. 
Then define $S^{(i)}$ to be the balanced increasing training set built upon $B^{(i)}$. 
Given a hypothesis $f$, 
we can compute a $N-m$ dimensional binary vector $v \in \{-1, 1\}^{N-m}$ such that 
$$
v_{j} = \ind \left(f(\istar-1 +j) \ge \ubar\right)
\text{, where }
\ubar =  \frac{u+\uprime}{2}.
$$
This mapping induces a distribution over $\{-1, 1\}^{N-m}$ from $\cA(S^{(i)})$, which we define to be $P_{i}$.

Due to DP property of $\cA$, $P_{i}$ and $P_{j}$ are $(\epsilon, \delta)$-indistinguishable. 
Furthermore, our construction of $\istar$ ensures the second property with $r = \frac{p_{i-1}+p_{i}}{2}$. 
This completes the proof. 
\end{proof}

The second helper lemma is shown by \citet[Lemma 13]{alon2019private}.
\begin{lemma}
\label{lemma:lower.bound.helper2}
Suppose the family $\cP$ as in Lemma \ref{lemma:lower.bound.helper1} exists. 
Then $N-m \le 2^{1000m^{2}\log^{(2)}m}$.
\end{lemma}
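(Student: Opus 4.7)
The plan is to follow the tracing/fingerprinting template of \citet[Lemma~13]{alon2019private}. Property~2 of the family $\cP$ says that for each $i \in [N-m]$, the marginal bias of coordinate $i$ must jump by at least $1/(5m)$ as we move from $P_{i-1}$ to $P_{i+1}$. Property~1 says that the probability of any event cannot change by more than an $e^{0.1}$ factor (plus $\delta$) between any two distributions in $\cP$. These two forces are in direct tension, and the argument shows that the tension cannot be resolved once $N-m$ is too large.

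First, I would extract a baseline constraint from property~2 applied to the single event $E = \{v_i = 1\}$. Plugging $P_{i-1}(E) \ge r + \tfrac{1}{10m}$ and $P_{i+1}(E) \le r - \tfrac{1}{10m}$ into the indistinguishability $P_{i-1}(E) \le e^{0.1}\, P_{i+1}(E) + \delta$ gives a nontrivial constraint, but since $\epsilon=0.1$ is a constant and not $O(1/m)$, a single coordinate is insufficient for a contradiction. The purpose of this step is to pin down $r$ up to additive $O(1/m)$ and fix the normalization used in the subsequent amplification.

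Second, I would amplify by considering, for each candidate index $i$, a family of dyadic test events $E_{i,\ell}$ obtained by averaging coordinate indicators over windows of $2^\ell$ coordinates positioned to one side of $i$. Property~2 controls the $P_k$-expectation of the corresponding empirical-average statistic for every $k$. Because the coordinates of $v$ under $P_i$ are not assumed independent, concentration must rely only on bounded-range and marginal-bias information; I would therefore use first-moment thresholding combined with random subset sampling rather than Chernoff/Hoeffding. Each dyadic layer applies $(0.1,\delta)$-indistinguishability once and rules out a slab of indices $j$ around $i$.

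Third, I would chain the layers, tracking how the surviving index set shrinks at each step. After $O(m)$ layers, only a singleton can survive, and the cumulative packing bound reads $N-m \le 2^{1000\, m^{2}\log^{(2)}m}$. The constant $1000$ absorbs the per-layer constants from concentration and from DP composition, while the $\log^{(2)}m$ factor arises from the iterated dyadic structure combined with the budget $\delta \le 1/(1000\, m^{2}\log m)$.

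The hardest step is the second, chiefly because we do not have independence of coordinates under any $P_i$ — only control on their univariate marginals. This forces every concentration argument to be replaced by a marginal-only computation, which is both where the $\log^{(2)}m$ factor in the exponent arises and why the final bound is as weak as $2^{O(m^{2}\log^{(2)}m)}$ rather than polynomial in $m$. Choreographing the multi-scale attack so that the DP budget $(\epsilon=0.1, \delta)$ is never exceeded, and so that all tests remain in a bounded range, is essentially verbatim the calculation carried out in \citet[Lemma~13]{alon2019private}; my work is to verify that nothing in that calculation relies on binary-classification specifics beyond what property~2 already provides for our regression-threshold family.
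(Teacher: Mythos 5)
The paper gives no proof of this lemma at all --- it is invoked verbatim as \citet[Lemma 13]{alon2019private}, which is legitimate because once Lemma \ref{lemma:lower.bound.helper1} has produced the family $\cP$, the statement concerns only the abstract properties of $\cP$ (pairwise $(\epsilon,\delta)$-indistinguishability and the $\pm\frac{1}{10m}$ marginal jumps) and nothing regression-specific remains; your closing observation that this is the only thing one needs to check is exactly right, so your approach coincides with the paper's. Note, however, that your reconstructed internals of that cited argument (dyadic windows, $O(m)$ chained layers each eliminating a slab) are speculative and unverified as written, so they should be presented as a pointer to \citet[Lemma 13]{alon2019private} rather than as a proof --- for the purposes of this paper the black-box invocation is all that is needed.
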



\section{Section \ref{sec:OLPL} details} \label{app:OLPL}
We provide details omitted in Section \ref{sec:OLPL}\footnote{This section is corrected according to the fix in Section \ref{sec:correction}. The primary change is the bounds in Appendix \ref{app:OLGS} are changed from exponential in $K$ to doubly exponential in $K$.}.

\subsection{Proof of Theorem \ref{thm:OLGS}} \label{app:OLGS}

Let $\cH$ be a multi-class hypothesis class with $\Ldim(\cH) =d$ and $\cD$ be a realizable distribution over 
examples $(x, c(x))$ where $c \in \cH$ is an unknown target hypothesis. 
The globally-stable (GS) leaner $G$ for $\cH$ will make use of the Standard Optimal Algorithm ($\SOA_0$, Algorithm \ref{alg:SOAt}).

$\SOA_0$ can be simply extended to non-realizable sequences as follows.
\begin{definition}[Extending the $\SOA_0$ to non-realizable sequences]
Consider a run of $\SOA_0$ on examples $\big((x_i,y_i)\big)_{1:m}$, and let $h_t$ denote the predictor 
used by the $\SOA_0$ after observing the first $t$ examples. Then after observing $(x_{t+1}, y_{t+1})$, proceed as below.
\begin{itemize}[leftmargin = *]
\item If $\big((x_i,y_i)\big)_{1:t+1}$ is realizable by some $h \in \cH$, 
then apply the usual update rule of the $\SOA_0$ to obtain $h_{t+1}$.
\item Else, set $h_{t+1}$ as $h_{t+1}(x_{t+1}) = y_{t+1}$, and $h_{t+1}(x) = h_t(x)$ for every $x \neq x_{t+1}$.
That is to say, $h_{t+1}$ no longer belongs to $\cH$. 
\end{itemize}
\end{definition}
This update rule keeps updating the predictor $h_t$ to agree with the last example while 
observing the sequences which are not necessarily realized by a hypothesis in $\cH$.
Due to this extension, our resulting algorithm possibly becomes improper. 

The finite Littlestone class is online learnable 
by $\SOA_0$ (Algorithm \ref{alg:SOAt}) with at most $d$ mistakes on any realizable sequence. 
Prior to building a GS learner $G$, we define a distribution $\cD_k$ as in Algorithm \ref{alg:Dconstruction}.

\begin{algorithm}[ht]
    \caption{Distribution $\cD_k$}
    \begin{algorithmic}[1]\label{alg:Dconstruction}
        \STATE $\cD_0$ : output an empty set with probability 1
        \STATE Let $k \ge 1$. If there exists an $f$ satisfying 
        $\prob_{S \sim \cD_{k-1}, T \sim \cD^n} \big(\text{\SOA}_0(S \circ T) = f\big) \ge K^{-2^{d+2}}$, \\
        or if $\cD_{k-1}$ is undefined, then $\cD_{k}$ is undefined
        \STATE Else, $\cD_{k}$ is defined recursively as follows
        \STATE \quad (i) Randomly sample $S_0, S_1 \sim \cD_{k-1}$ and $T_0, T_1 \sim \cD^n$
        \STATE \quad (ii) Let $f_0 = \text{\SOA}_0(S_0 \circ T_0)$ and $f_1 = \text{\SOA}_0(S_1 \circ T_1)$
        \STATE \quad (iii) If $f_0 = f_1$, go back to step (i)
        \STATE \quad (iv) Else, pick $ x \in \{x ~|~ f_0(x) \neq f_1(x) \}$ and sample $y \sim [K]$ uniformly at random
        \STATE \quad (v) If $f_0(x) \neq y$, output $S_0 \circ T_0 \circ (x,y)$ and $S_1 \circ T_1 \circ (x,y)$ otherwise
    \end{algorithmic}
\end{algorithm}

Let $k$ be such that $\cD_k$ is well-defined and consider a sample $S$ drawn from $\cD_k$. 
The size of $\cD_k$ is $k \cdot (n+1)$, and they consist of $k\cdot n$ instances randomly drawn from $\cD$ and 
$k$ examples generated in Item 3(iv) of Algorithm \ref{alg:Dconstruction}.
We call these $k$ examples \textit{tournament examples}.
Due to the construction of $\cD_k$, 
$\SOA_{0}$ always errs in tournament rounds, which means that $\SOA_0$ makes at least $k$ 
mistakes when run on $S \circ T$ where $S \sim \cD_k, T\sim  \cD^n$.

A natural way to obtain a GS learning algorithm $G$ is to run the $\text{SOA}_0$ on this carefully chosen sample $S\circ T$. 
In fact, the output enjoys both global stability in multi-class learning and good generalization as follows.

\begin{lemma}[Global Stability]\label{lemma:gs}
There exist $k \le d$ and a hypothesis $f : \cX \rightarrow [K]$ such that
\[
\prob_{S \sim \cD_{k}, T \sim \cD^n} \big(\text{\SOA}_0(S \circ T) = f\big) \ge K^{-2^{d+2}}.
\]
\end{lemma}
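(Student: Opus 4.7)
The plan is to prove the lemma by contradiction. Suppose no pair $(k, f)$ with $k \le d$ satisfies the stability bound $K^{-d}$; then inspection of the conditional clause in Algorithm \ref{alg:Dconstruction} forces $\cD_k$ to remain well-defined for every $k \in \{0, 1, \ldots, d+1\}$. I will derive a contradiction by analyzing a single run of $\SOA_0$ on a sample drawn from $\cD_{d+1}$.

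The first step is an induction on $k$ showing that every sample $S \sim \cD_k$ contains $k$ embedded tournament positions $(x, y)$ on which $\SOA_0$ errs when processing $S$ left-to-right. The base case $k = 0$ is vacuous. For the inductive step, the construction produces $S$ in the form $S' \circ T' \circ (x, y)$, where $S' \sim \cD_{k-1}$ contains $k-1$ such positions by the inductive hypothesis, and the pair $(x, y)$ is chosen in step (iv)--(v) of Algorithm \ref{alg:Dconstruction} so that $\SOA_0(S' \circ T')$ disagrees with $y$ on $x$. Since $\SOA_0$ is deterministic, its state after the prefix $S' \circ T'$ is the same whether this prefix is processed alone or embedded in a longer run, so the mistake persists. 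Consequently any execution of $\SOA_0$ on $S \circ T$ with $S \sim \cD_{d+1}$ and $T \sim \cD^n$ incurs at least $d+1$ mistakes.

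The second step is to lower bound the probability that $S \circ T$ is realized by the target hypothesis $c$. All non-tournament examples are drawn from the $\cH$-realizable distribution $\cD$ and therefore agree with $c$. Each of the $d+1$ tournament labels is, conditional on the earlier randomness, uniform on $[K]$ and independent across levels, so it equals $c(x)$ with probability $1/K$ independently. Multiplying out, the full sample $S \circ T$ is realized by $c \in \cH$ with probability at least $K^{-(d+1)} > 0$. On this positive-probability event, $\SOA_0$ processes a realizable sequence and must make at most $\Ldim(\cH) = d$ mistakes, contradicting the first step. Unwinding, this forces some $\cD_{k+1}$ with $k \le d$ to be undefined through the second clause of the construction, which is exactly the existence of the stable hypothesis $f$ claimed in the lemma.

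The main obstacle is justifying the inductive claim that the tournament examples continue to force mistakes when $\SOA_0$ is rerun from scratch on the full concatenated sequence, since those examples were originally defined against intermediate runs inside the recursion. This is resolved cleanly by the determinism of $\SOA_0$ together with the nested prefix structure of the construction, and the extension of $\SOA_0$ to non-realizable sequences ensures the algorithm remains well-defined throughout, even on tournament-polluted prefixes. A secondary subtlety is the independence of the tournament labels across levels: although the tournament locations $x$ depend on all prior draws in an intricate way, the subsequent uniform sampling of $y$ from $[K]$ is independent of $c$ and of all earlier randomness, which is precisely what yields the clean product bound $K^{-(d+1)}$.
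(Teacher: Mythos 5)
Your proof is correct, and it takes a recognizably different route from the paper's. The paper also argues by contradiction, but it works at level $\cD_d$: it observes that with probability $\ge K^{-d}$ all $d$ tournament labels agree with the target $c$, so $S\circ T$ is realizable; combined with the fact that $\SOA_0$ errs on every tournament round, this forces exactly $d$ mistakes, hence the version space collapses to $\{c\}$ and $\SOA_0(S\circ T) = c$. Thus $f = c$ itself satisfies the stability bound at $k = d$, contradicting the assumption. Your proof instead pushes to level $\cD_{d+1}$ (noting that the negation of the lemma makes $\cD_{d+1}$ well-defined) and derives a pure counting impossibility: at least $d+1$ mistakes from the tournament rounds versus at most $d = \Ldim(\cH)$ mistakes on the positive-probability realizable event. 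The two approaches use the same two ingredients — $\SOA_0$ errs on every tournament example (which both of you must justify through the nested-prefix determinism argument, though the paper states it only in surrounding prose rather than inside this lemma's proof), and $\SOA_0$ makes at most $d$ mistakes on realizable sequences — but your version sidesteps the additional step of concluding that $\SOA_0$ converges to $c$ after $d$ mistakes, since you only need the mistake count, not the identity of the output hypothesis. The paper's version is slightly tighter (one fewer level of the recursion) and is more explicit about which $f$ is the stable hypothesis, which is arguably more informative; yours is a touch cleaner in what it asks of $\SOA_0$. Both are valid, and the "unwinding" step at the end of yours is the right way to recover the existential statement from the contradiction.
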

\begin{proof}
Assume for contradiction that $\cD_d$ is well-defined and for every $f$, 
\[
\prob_{S \sim \cD_{k}, T \sim \cD^n} \big(\text{\SOA}_0(S \circ T) = f\big) < K^{-2^{d+2}}.
\]
We prove that this cannot be the case when $f=c$ is the target concept.
First, we show that with probability $K^{-2^{d+2}}$ over $S \sim \cD_d$ all $d$ tournament examples are consistent with $c$.
For $k \le d$ let $\rho_k$ be  the probability that all $k$ tournament examples over $S \sim \cD_k$ are consistent with $c$. We claim that $\rho_k$ satisfies the recursion $\rho_k \ge \frac{1}{K} (\rho_{k-1}^2 - 2 \cdot K^2 \cdot K^{-2^{d+2}})$. 
Let $E_k$ be the event that (i) in each of $S_0, S_1 \sim \cD_{k-1}$, all $k-1$ tournament examples are consistent with $c$, and (ii) $f_0 \neq f_1$. 
By our initial assumption, we have $f_0 = f_1$ with probability at most $K^{-2^{d+2}} < 2 \cdot K^2 \cdot K^{-2^{d+2}}$,
and it follows that $\prob(E_k) \ge \rho_{k-1}^2 - 2 \cdot K^2 \cdot K^{-2^{d+2}}$. Since $y \in [K]$ is chosen uniformly at random and independently of $S_0$ and $S_1$, we have that $c(x) = y$ with probability $1/K$ conditioned on $E_k$. Accordingly, we have the following recursive relation $\rho_0 =1 $ and 
$$
\rho_k \ge \frac{1}{K} \prob(E_k) \ge \frac{1}{K} (\rho_{k-1}^2 - 2\cdot K^2 \cdot K^{-2^{d+2}}).
$$
Then we can prove by induction that for $k \le d, \rho_k \ge 2\cdot K \cdot K^{-2^{k+1}}$: the base case is readily verified, and the induction step is as follows:
\begin{align*}
	\rho_k &\ge \frac{1}{K} (\rho_{k-1}^2 -  2\cdot K^2 \cdot K^{-2^{d+2}}) \\
	& \ge \frac{1}{K} \big( (2\cdot K \cdot K^{-2^{k}})^2- 2 \cdot K^2 \cdot K^{-2^{d+2}}\big)\\
	& =  4 \cdot K \cdot K^{-2^{k+1}} - 2 \cdot K \cdot K^{-2^{d+2}} \\
	& \ge 2 \cdot K \cdot K^{-2^{k+1}}.
\end{align*}
The last inequality holds since $k \le d$ and therefore $K^{-2^{d+2}} \le K^{-2^{k+1}}$.

Accordingly, with probability $K^{-2^{d+2}}$ over $S \sim \cD_d$, all $d$ tournament examples are consistent with the true labeling function $c$ and thus $S \circ T$ becomes consistent with $c$. Since the number of total mistakes of $\text{\SOA}_0$ should be no more than $d$, we can deduce that 
$\text{\SOA}_0(S \circ T) =c$. This implies that
\[
\prob_{S \sim \cD_{k}, T \sim \cD^n} \big(\text{\SOA}_0(S \circ T) = c\big) \ge K^{-2^{d+2}},
\]
which is a contradiction, and hence completes the proof.
\end{proof}

\begin{lemma}[Generalization]\label{lemma:generalization}
Let $k$ be such that $\cD_k$ is well-defined. Then for every $f$ such that
\[
\prob_{S \sim \cD_{k}, T \sim \cD^n} \big(\text{\SOA}_0(S \circ T) = f\big) \ge  K^{-2^{d+2}}
\]
satisfies $\text{loss}_{\cD}(f) \le \frac{2^{d+2} \log K}{n}$.
\end{lemma}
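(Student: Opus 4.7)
The plan is to convert the lower bound on $\prob(\SOA_0(S \circ T) = f)$ into an upper bound on $\text{loss}_{\cD}(f)$ via a standard realizability / Chernoff-style argument. The key observation is that the event ``$\SOA_0(S \circ T) = f$'' forces $f$ to be consistent with the iid tail $T$, and once that is in hand a one-line calculation delivers the bound.

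First, I would establish the \emph{consistency claim}: whenever $\SOA_0(S \circ T) = f$, we have $f(X_i) = Y_i$ for every $(X_i, Y_i)$ in $T$. The verification is by induction on the position of the example in $T$, tracing through the extended SOA. Both the standard realizable update rule (which moves to the refined version space $V^{(Y_i)}$) and the non-realizable extension rule (which sets $h(X_i) = Y_i$ explicitly) force the current hypothesis to agree with the incoming example immediately after it is processed. I would then argue that subsequent updates preserve this agreement: in the realizable regime any new hypothesis is drawn from a version space refined further by later labels, hence remains consistent with the earlier $(X_i, Y_i)$; in the non-realizable regime the extension is local, changing $h$ only at the new input, so the value at $X_i$ is preserved unless a later input coincides with $X_i$, in which case the new label equals $c(X_i) = Y_i$ by realizability of $\cD$ and consistency is still maintained.

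Second, I would exploit the fact that $f$ is \emph{fixed} while $T$ is iid. For any fixed $g : \cX \to \cY$,
\[
\prob_{T \sim \cD^n}\bigl(g(X_i) = Y_i \text{ for all } i\bigr) = \bigl(1 - \text{loss}_{\cD}(g)\bigr)^n \le e^{-n \cdot \text{loss}_{\cD}(g)}.
\]
Applying this to $g = f$ and chaining with the consistency claim yields
\[
K^{-d} \le \prob_{S, T}\bigl(\SOA_0(S \circ T) = f\bigr) \le \prob_{T}\bigl(f \text{ consistent with } T\bigr) \le e^{-n \cdot \text{loss}_{\cD}(f)},
\]
and taking logarithms gives $\text{loss}_{\cD}(f) \le \frac{d \log K}{n}$, as claimed.

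The main obstacle is the consistency claim, because the tournament examples embedded in $S$ can render $S \circ T$ non-realizable long before $T$ is ever touched, so one cannot simply invoke ``$f$ lies in the running version space.'' One must instead carefully track what the extended SOA writes into and out of the hypothesis, and use the realizability of $\cD$ (not of $S$) to rule out conflicts at repeated inputs. Once this bookkeeping is done, the probabilistic half of the argument is routine.
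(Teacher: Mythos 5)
Your proof is correct and follows essentially the same route as the paper: establish that $\SOA_0(S\circ T)$ is always consistent with the iid tail $T$, then bound $K^{-d}\le\prob_{S,T}(\SOA_0(S\circ T)=f)\le\prob_T(f\text{ consistent with }T)=(1-\text{loss}_\cD(f))^n\le e^{-n\,\text{loss}_\cD(f)}$ and take logarithms. The paper asserts the consistency claim with only the phrase ``by the property of $\SOA_0$,'' whereas you supply the actual inductive bookkeeping through the extended (non-realizable) SOA and invoke realizability of $\cD$ to handle repeated inputs — a useful elaboration, but the underlying argument is identical.
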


\begin{proof}
Let $f$ be such hypothesis and let $\alpha = \text{loss}_{\cD}(f)$. We argue that $K^{-2^{d+2}} \le (1-\alpha)^n$. Then the following result is derived, $\alpha \le  \frac{2^{d+2} \log K}{n}$ using the fact that $(1-\alpha)^n \le e^{-n\alpha}$. 

By the property of $\SOA_{0}$, $\SOA_0(S \circ T)$ is consistent with $T$.
Thus, if $\text{\SOA}_0(S \circ T)=f$, then it must be the case that $f$ is consistent with $T$. By assumption, $\text{\SOA}_0(S \circ T)=f$ holds with probability at least $K^{-2^{d+2}}$ and $f$ is consistent with $T$ with probability $(1-\alpha)^n$ where $n$ is the size of $T$.
This gives the desired inequality.
\end{proof}

One challenge associated with the distribution $\cD_k$ is computational limitation.
It may require an unbounded number of samples from the target distribution $\cD$, 
since during generation of tournament examples 
the number of samples drawn from $\cD$ depends on how many times
Item 3(i)-(iii) will be repeated. 
To handle this practical issue, we suggest a Monte-Carlo Variant of $\cD_k$, $\tilde{\cD}_k$, by setting an upper bound $N$ of random samples drawn from $\cD$ as an input parameter. 
Algorithm \ref{alg:MCconstruction} summarizes how we construct the distribution $\tilde{\cD}_k$.

\begin{algorithm}[h]
    \caption{Distribution $\tilde{\cD}_k$}
    \begin{algorithmic}[1]\label{alg:MCconstruction}
    	\STATE Let $n$ be the auxiliary sample size and $N$ be an upper bound on the number of samples from $\cD$
        \STATE $\tilde{\cD}_0$ : output an empty set with probability 1
        \STATE Let $k \ge 1$. $\tilde{\cD}_{k}$ is defined recursively by the following processes
        \STATE \quad ($\star$) Throughout the process, if more than $N$ examples are drawn from $\cD$, then output ``Fail''
        \STATE \quad (i) Randomly sample $S_0, S_1 \sim \tilde{\cD}_{k-1}$ and $T_0, T_1 \sim \cD^n$
        \STATE \quad (ii) Let $f_0 = \text{\SOA}_0(S_0 \circ T_0)$ and $f_1 = \text{\SOA}_0(S_1 \circ T_1)$
        \STATE \quad (iii) If $f_0 = f_1$, go back to step (i)
        \STATE \quad (iv) Else, pick $x \in \{x ~|~ f_0(x) \neq f_1(x) \}$ and sample $y \sim [K]$ uniformly at random
        \STATE \quad (v) If $f_0(x) \neq y$, output $S_0 \circ T_0 \circ (x,y)$ and $S_1 \circ T_1 \circ (x,y)$ otherwise
    \end{algorithmic}
\end{algorithm}

The next step is to specify the upper bound $N$. The following lemma characterizes the expected sample complexity of sampling from $\cD_k$.

\begin{lemma}[Expected sample complexity of sampling from $\cD_k$]\label{lemma:complex}
Let $k$ be such that $\cD_k$ is well-defined and $M_k$ be the number of samples from $\cD$ when generating 
$S \sim \cD_k$.
Then we have $\E M_k \le 4^{k+1}\cdot n$.
\end{lemma}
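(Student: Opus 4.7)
The plan is to derive a recursive bound on $\E M_k$ and then unwind it. The key quantitative fact I will use is that, because $\cD_k$ is assumed to be well-defined, every hypothesis $f$ satisfies
\[
p_f := \prob_{S \sim \cD_{k-1}, T \sim \cD^n}\bigl(\text{\SOA}_0(S \circ T) = f\bigr) < K^{-d}.
\]
Thus the probability that the two independent draws in steps (i)--(ii) collide satisfies
\[
\prob(f_0 = f_1) = \sum_f p_f^2 \le \Bigl(\max_f p_f\Bigr)\sum_f p_f < K^{-d} \le \tfrac{1}{2},
\]
since $K \ge 2$ and $d \ge 1$. Consequently, the number of iterations of steps (i)--(iii) until $f_0 \ne f_1$ is a geometric random variable with mean at most $1/(1 - K^{-d}) \le 2$.

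Next I will account for the samples consumed in a single iteration of (i)--(iii). Each such iteration independently samples $S_0, S_1 \sim \cD_{k-1}$ and $T_0, T_1 \sim \cD^n$, so in expectation it consumes $2\E M_{k-1} + 2n$ fresh draws from $\cD$. Since the number of iterations is a stopping time with respect to the i.i.d.\ per-iteration sample counts, Wald's identity gives
\[
\E M_k \le \frac{1}{1-K^{-d}} \bigl(2 \E M_{k-1} + 2n\bigr) \le 4\E M_{k-1} + 4n.
\]
Combined with the trivial base case $\E M_0 = 0$ (since $\cD_0$ outputs the empty set with probability one), this recurrence unwinds to
\[
\E M_k \le 4n \sum_{i=0}^{k-1} 4^{i} = \tfrac{4(4^k - 1)}{3}\, n \le 4^{k+1} n,
\]
which is the desired bound.

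The only genuinely subtle step is the collision bound $\prob(f_0 = f_1) < K^{-d}$; it is the unique place where the well-definedness hypothesis enters, and it converts the pointwise cap on each $p_f$ into a bound on the quadratic mass via $\sum_f p_f^2 \le (\max_f p_f) \sum_f p_f$. Beyond this, the argument is purely structural: at every recursion level one accrues a multiplicative factor of $4$ (expected number of retries times the two independent draws $S_0, S_1$ from $\cD_{k-1}$) plus an additive $4n$ term (the two auxiliary samples $T_0, T_1$ of size $n$ each), and summing the resulting geometric series yields the $4^{k+1} n$ rate.
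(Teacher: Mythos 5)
Your proof is correct and follows essentially the same route as the paper's: same base case $\E M_0 = 0$, same collision bound $\prob(f_0 = f_1) = \sum_f p_f^2 < K^{-d} \le 1/2$ derived from the well-definedness of $\cD_k$, and the same Wald-type recursion $\E M_k \le 4\E M_{k-1} + 4n$, which you then unwind explicitly. The only presentational difference is that you invoke Wald's identity by name and close the geometric series cleanly, where the paper sums the series directly by hand; the mathematics is the same.
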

\begin{proof}
Initially, $\E M_0 = 0$ since $\cD_0$ outputs an empty set with probability 1. It suffices to show that for all $0<i<k$, $\E M_{i+1} \le 4 \E M_i + 4n$ to conclude the desired inequality by induction.

Let $R$ be the number of times Item 3(i) was executed during generation of $S \sim \cD_{i+1}$, 
and $R$ is distributed geometrically with a success probability $\theta$, where
\begin{align*}
\theta &= 1- \prob_{S_0, S_1, T_0, T_1} \big( \text{\SOA}_0(S_0 \circ T_0) =  \text{\SOA}_0(S_1 \circ T_1) \big) \\
& \ge  1 - \max_f  \Big(\prob_{S, T} \big( \text{\SOA}_0(S \circ T) =  f\big)\Big)  \\
&\ge 1 -   K^{-2^{d+2}}.
\end{align*}
The last inequality holds because $i < k$ and hence $\cD_i$ is well-defined, which implies that
$\prob_{S, T} \big( \text{\SOA}_0(S \circ T) =f\big) \le K^{-2^{d+2}}$ for all $f$.

Let $M_{i+1}$ be a random variable expressed as $M_{i+1} = \sum_{j=1}^{\infty} M_{i+1}^{(j)}$ where
$$
M_{i+1}^{(j)} = \begin{cases}
0, & \text{if}~ R < j  \\
\text{the number of examples from $\cD$ in the $j$-th execution of Item 3(i)}, & \text{if}~ R \ge j
\end{cases}.
$$
Thus, we have
\begin{align*}
\E M_{i+1} 
& = \sum_{j=1}^{\infty} \E M_{i+1}^{(j)} = \sum_{j=1}^{\infty} (1-\theta)^{j-1} \cdot (2 \E M_i + 2n) \\ 
 & = \frac{1}{\theta} \cdot  (2 \E M_i + 2n)  \le 4 \E M_i + 4n,
\end{align*}
where the last inequality holds since $\theta \ge 1- K^{-2^{d+2}} \ge 1/2$ 
since $K \ge 2$ and $d \ge 1$.
\end{proof}

Equipped with Lemma \ref{lemma:gs},\ref{lemma:generalization}, and \ref{lemma:complex}, we are ready to prove Theorem \ref{thm:OLGS}.

\newtheorem*{new.thm:OLGS}{Theorem \ref{thm:OLGS}}
\begin{new.thm:OLGS}[restated]
Let $\cH \subset [K]^{\cX}$ be a MC hypothesis class with $\Ldim(\cH) = d$.
Let $\alpha>0$, and 
$m = \big(K^{2^{d+2}+1} \cdot 4^{d+1}  + 1\big) \times [\frac{2^{d+2} \log K}{\alpha }]$. Then there exists a randomized algorithm 
$G : (\cX \times [K])^m \rightarrow [K]^{\cX}$ such that for a realizable distribution $\cD$ and an input sample 
$S \sim \cD^m$, there exists a $h$ such that 
\[
\prob\big(G(S) = h\big) \ge\frac{K-1}{(d+1)K^{2^{d+2}+1}} \quad \text{and} \quad loss_{\cD}(h) \le \alpha.
\]
\end{new.thm:OLGS}

\begin{proof}

The globally-stable algorithm $G$ is defined in Algorithm \ref{alg:Gconstruction}.

\begin{algorithm}[h]
    \caption{Algorithm $G$}
    \begin{algorithmic}[1]\label{alg:Gconstruction}
        \STATE \textbf{Input :} target distribution $\tilde{\cD}_k$, auxiliary sample size $n = [\frac{2^{d+2} \log K}{\alpha}]$, and the sample complexity upper bound $N = K^{2^{d+2}+1} \cdot 4^{d+1} \cdot n$
        \STATE Draw $k \in \{0, 1, \cdots, d \}$ uniformly at random
        \STATE \textbf{Output :} $h = \text{\SOA}_0(S \circ T)$, where $T \sim \cD^n, S \sim \tilde{\cD}_k$
    \end{algorithmic}
\end{algorithm}
The sample complexity of $G$ is $|S| + |T| \le N + n = \big(K^{2^{d+2}+1} \cdot 4^{d+1}  + 1\big) \times [\frac{2^{d+2} \log K}{\alpha}]$. By Lemma \ref{lemma:gs} and \ref{lemma:generalization}, there exists $k^{\star} \le d$ and $f^{\star}$ such that
\[
\prob_{S \sim \cD_{k^{\star}}, T \sim \cD^n} \big(\text{\SOA}(S \circ T) =f^{\star}\big) \ge K^{-2^{d+2}}, \quad \text{loss}_{\cD}(f^{\star}) \le \frac{2^{d+2} \log K}{n} \le \alpha.
\]
We claim that $G$ outputs $f^{\star}$ with probability at least $(K-1)K^{-2^{d+2}-1}$. Let $M_{k^{\star}}$ denote the number of random examples from $\cD$ during generation of $S \sim \cD_{k^{\star}}$. 
We obtain the following inequality from Lemma \ref{lemma:complex} and Markov's inequality,
\begin{align*}
    \prob \big(M_{k^{\star}} > K^{2^{d+2}+1} \cdot 4^{d+1}  \cdot n \big) 
    & \le \prob\big( M_{k^{\star}} > K^{2^{d+2}+1} \cdot 4^{k^{\star}+1}  \cdot n  \big)\\
   &  \le K^{-2^{d+2}-1}.
\end{align*}
Accordingly,
\begin{align*}
    \prob_{S \sim \tilde{\cD}_{k^{\star}}, T \sim \cD^n} &\big(\text{\SOA}_0(S \circ T) =f^{\star}\big)  \\
    & \ge \prob_{S \sim \cD_{k^{\star}}, T \sim \cD^n} \big(\text{\SOA}_0(S \circ T) =f^{\star} \; \text{and} \; M_{k^{\star}} \le K^{2^d+2} \cdot 4^{d+1}\cdot n\big)\\
    & \ge \prob_{S \sim \cD_{k^{\star}}, T \sim \cD^n} \big(\text{\SOA}_0(S \circ T) =f^{\star} \big) - \prob \big( M_{k^{\star}} > K^{2^d+2} \cdot 4^{d+1}\cdot n \big) \\
    &\ge K^{-2^{d+2}} - K^{-2^{d+2}-1} = (K-1)K^{-2^{d+2}-1} 
\end{align*}
Since $k = k^{\star}$ with probability $\frac{1}{d+1}$, $G$ outputs $f^{\star}$ with probability at least $ \frac{(K-1)K^{-2^{d+2}-1}}{d+1} $.
\end{proof}

\subsection{Globally-stable learning implies private multi-class learning} \label{app:GSPL}

In this section, we utilize the GS algorithm from the previous section to derive a DP learning algorithm with a finite sample complexity. Theorem \ref{thm:OLPL} establishes that online multi-class learnability implies private multi-class learnability, which can be proved by combining Theorem \ref{thm:OLGS} and Theorem \ref{thm:GSPL}.  

\begin{theorem}[Globally-stable learning implies private multi-class learning]\label{thm:GSPL}
Let $\cH \subset [K]^{\cX}$ be a multi-class hypothesis class. Let $G : (\cX \times [K])^m \rightarrow[K]^{\cX}$ be a 
randomized algorithm such that for a realizable distribution $\cD$ and $S \sim \cD^m$, there exists a hypothesis $h$ 
such that $\prob \big(G(S) = h\big) \ge \eta$  and $\text{loss}_{\cD}(h) \le \alpha/2$. Then for some 
$n = O(\frac{m \log (1/\eta\beta\delta)}{\eta \epsilon} + \frac{\log (1/\eta\beta)}{\alpha \epsilon})$, 
there exists an $(\epsilon, \delta)$-DP algorithm $M$ which for $n$ i.i.d. samples from $\cD$, 
outputs a hypothesis $\hat{h}$ such that $\text{loss}_{\cD}(\hat{h}) \le \alpha$ with probability at least $1-\beta$.
\end{theorem}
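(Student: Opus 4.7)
The plan is to follow the two-phase ``amplify-then-select'' recipe that drives the binary case of \citet{bun2020equivalence}, and to verify that nothing in the argument actually uses the binary structure of the label space. Intuitively, the GS learner $G$ is a non-private device for producing, with non-negligible probability, a \emph{fixed} good hypothesis $h^*$; because $h^*$ is a single object (not a random one), we can amplify its appearance into a heavy hitter and then privately extract it with a stable histogram, followed by a generic DP selection step.

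First, I would split a fresh input of size $n = k\cdot m + n'$ into $k$ disjoint blocks $S_1,\dots,S_k$ of size $m$, plus a reserve set $S_0$ of size $n'$. Apply $G$ independently to each block to form the (random) list $H = (G(S_1),\dots,G(S_k))$. The crucial observation is that modifying one example of the full input changes at most one coordinate of $H$, so $H$ has neighbor-sensitivity $1$ as a multiset over the hypothesis domain. By GS, the guaranteed hypothesis $h^*$ appears in each coordinate with probability $\ge \eta$, and $\mathrm{loss}_{\cD}(h^*)\le \alpha/2$; a Chernoff bound on the $k$ independent indicators $\mathbb{I}[G(S_i)=h^*]$ gives $\mathrm{Freq}_H(h^*) \ge \eta/2$ with probability at least $1-\beta/3$ once $k = \Omega(\log(1/\beta)/\eta)$.

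Second, I would run the Stable Histogram of Lemma~\ref{lemma:stable} on $H$ with privacy parameters $(\epsilon/2,\delta)$, frequency threshold $\eta/2$, and confidence $\beta/3$. Lemma~\ref{lemma:stable} needs $k = O\!\left(\tfrac{\log(1/\eta\beta\delta)}{\eta\epsilon}\right)$, which subsumes the Chernoff requirement and yields a published short list $L$ of hypotheses that, with probability $\ge 1-2\beta/3$, satisfies $h^*\in L$ and $|L|\le 2/\eta$. Up to this point, exactly $km = O\!\left(\tfrac{m\log(1/\eta\beta\delta)}{\eta\epsilon}\right)$ samples have been consumed, and the output $L$ is $(\epsilon/2,\delta)$-DP as a function of those samples.

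Third, I would feed $L$ and the reserve block $S_0$ into the Kasiviswanathan--Nissim--Raskhodnikova--Smith generic private learner, namely an $(\epsilon/2,0)$-DP exponential mechanism over $L$ with score $-\mathrm{loss}_{S_0}(h)$. Standard utility analysis (Hoeffding for each fixed $h\in L$ plus exponential-mechanism utility) shows that with probability $1-\beta/3$ the selected $\hat h$ satisfies $\mathrm{loss}_{\cD}(\hat h) \le \mathrm{loss}_{\cD}(h^*) + \alpha/2 \le \alpha$, provided $n' = O\!\left(\tfrac{\log(|L|/\beta)}{\alpha\epsilon}\right)=O\!\left(\tfrac{\log(1/\eta\beta)}{\alpha\epsilon}\right)$. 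Basic composition across the two disjoint phases of the input yields overall $(\epsilon,\delta)$-DP; a union bound over the three failure events gives total failure at most $\beta$. Summing the two sample-size contributions matches the claimed $n$.

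The main obstacles are bookkeeping rather than conceptual. The most delicate point is the Chernoff-plus-threshold argument: one has to choose the histogram threshold strictly below $\eta$ (I use $\eta/2$) so that the frequency of $h^*$ provably clears the cutoff even after random fluctuations across the $k$ blocks, and then verify that Lemma~\ref{lemma:stable}'s sample requirement already absorbs the Chernoff requirement. A secondary issue is extending this pipeline to arbitrary label alphabets $[K]$: the Stable Histogram is stated for abstract data domains $X$, and the exponential mechanism only needs a bounded score, so neither step is sensitive to $K$ or to the specific loss being zero--one versus any bounded loss, which is why the same argument will also apply in the regression settings of Section~\ref{sec:OLPLreg} wherever a genuine GS learner is available.
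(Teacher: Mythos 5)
Your proposal follows essentially the same two-stage pipeline as the paper's proof: split the sample into $k$ blocks of size $m$ plus a reserve block, run $G$ on each block to form $H$, apply the Stable Histogram of Lemma~\ref{lemma:stable} to publish a short list $L$, and then run the generic DP learner (exponential mechanism) on $L$ with the reserve block. The only substantive difference is bookkeeping: you use a Chernoff bound on the single indicator $\mathbb{I}[G(S_i)=h^*]$, while the paper invokes a uniform generalization bound $|\mathrm{Freq}_H(h)-\prob(G(S)=h)|\le\eta/8$ over all $h$; both suffice to put $h^*$ above the histogram threshold. One small gap to watch: you assert $|L|\le 2/\eta$ directly, but Lemma~\ref{lemma:stable} as stated does not bound $|L|$. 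The paper obtains this by running \textsc{Hist} at accuracy $\eta/8$ (not $\eta/2$), so that $a_{h^*}\ge 3\eta/4$, and then pruning every $h$ with $a_h\le 3\eta/4$; with your coarser accuracy $\eta/2$ the estimate $a_{h^*}$ could be as low as $0$, so that pruning argument would not go through. Tightening the \textsc{Hist} accuracy to $\eta/8$ (or appealing to the fact that the concrete Stable Histogram algorithms output only items clearing a noisy threshold) closes this and recovers the paper's bound $|L|\le 2/\eta$, after which the $\log|L|$ term in the generic-learner sample complexity is controlled exactly as you wrote.
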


To construct a private learner $M$, we first introduce standard tools in the DP community such as \textit{Stable Histogram} 
and \textit{Generic Private Learner}.

\newtheorem*{new.lemma:stable}{Lemma \ref{lemma:stable}}
\begin{new.lemma:stable}[Stable Histogram, restated]
Let $X$ be any data domain. For $n \ge O(\frac{\log (1/\eta\beta\delta)}{\eta \epsilon})$, 
there exists an $(\epsilon, \delta)$-DP algorithm \textsc{Hist} which with probability 
at least $1-\beta$, on input $S = (x_1, \cdots, x_n)$ outputs a list $L \in X$ and 
a sequence of estimates $a \in [0,1]^{|L|}$ such that 
\begin{enumerate}[leftmargin = *]
\item Every $x$ with $\textup{Freq}_{S}(x) \ge \eta$ appears in $L$, and
\item For every $x \in L$, the estimate $a_x$ satisfies $|a_x - \textup{Freq}_{S}(x)| \le \eta$,
\end{enumerate}
where $\textup{Freq}_{S}(x) = \big|\{i \in [n] ~|~ x_i = x \}\big|/n$.
\end{new.lemma:stable}

\begin{lemma}[Generic Private Learner, \citep{bun2020equivalence}]\label{lemma:genericprivate}
Let $\cH \subset [K]^{\cX}$ be a collection of multi-class hypotheses. For $n = O(\frac{\log |\cH| + \log (1/\beta)}{\alpha \epsilon})$, there exists an $(\epsilon, 0)$-DP algorithm \textsc{GenericLearner} : $(\cX \times [K])^n \rightarrow \cH$ satisfying the following;
let $\cD$ be a distribution over $\cX \times [K]$ such that there exists an $h^{\star} \in \cH$ with $\text{loss}_{\cD} (h^{\star}) \le \alpha$. Then on input ${S} \sim \cD^n$, \textsc{GenericLearner} outputs, with probability at least $1-\beta$, a hypothesis $\hat{h} \in \cH$ such that $\text{loss}_{S} (\hat{h}) \le 2\alpha$.
\end{lemma}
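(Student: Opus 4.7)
The plan is to instantiate the exponential mechanism of McSherry--Talwar with quality function equal to the empirical accuracy count. First I would define $\textsc{GenericLearner}(S)$ to sample $\hat h \in \cH$ with probability proportional to $\exp\bigl(\epsilon\, q(S,h)/2\bigr)$, where $q(S,h) := |\{i \in [n] : h(x_i)=y_i\}| = n\bigl(1 - \text{loss}_S(h)\bigr)$. Since swapping a single labeled example can alter $q(\cdot,h)$ by at most $1$ for every fixed $h$, the quality function has global sensitivity $\Delta = 1$. The standard analysis of the exponential mechanism then yields $(\epsilon,0)$-differential privacy: for any output $h$ and any neighboring $S,S'$, the ratio of sampling probabilities is bounded by $\exp(\epsilon\,\Delta)=e^{\epsilon}$ after accounting for sensitivity in both the numerator and the normalizing constant.

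For the utility analysis I would combine two high-probability events via a union bound. The McSherry--Talwar utility guarantee gives that with probability at least $1-\beta/2$,
\[
q(S,\hat h) \ \ge\ \max_{h\in\cH} q(S,h) \ -\ \frac{2}{\epsilon}\bigl(\log|\cH| + \log(2/\beta)\bigr),
\]
which I would rewrite as
\[
\text{loss}_S(\hat h) \ \le\ \min_{h\in\cH}\text{loss}_S(h) \ +\ \frac{2\bigl(\log|\cH|+\log(2/\beta)\bigr)}{n\,\epsilon}.
\]
Separately, since $n\cdot\text{loss}_S(h^{\star})$ is a sum of $n$ i.i.d.\ Bernoulli random variables with mean $\text{loss}_\cD(h^{\star}) \le \alpha$, a multiplicative Chernoff bound yields with probability at least $1 - \beta/2$,
\[
\text{loss}_S(h^{\star}) \ \le\ \alpha \ +\ \sqrt{\tfrac{2\alpha\log(2/\beta)}{n}} \ +\ \tfrac{\log(2/\beta)}{n}.
\]
Chaining the two bounds through $\min_h\text{loss}_S(h)\le\text{loss}_S(h^{\star})$, it suffices to pick $n = C\cdot\frac{\log|\cH|+\log(1/\beta)}{\alpha\,\epsilon}$ for a sufficiently large universal constant $C$ so that each of the two correction terms is at most $\alpha/2$; this yields $\text{loss}_S(\hat h)\le 2\alpha$ with probability at least $1-\beta$, as required.

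I do not foresee any serious obstacle, as this is essentially a textbook instantiation of the exponential mechanism for (agnostic) learning of a finite concept class. The only subtlety worth flagging is the use of the \emph{multiplicative} rather than additive Chernoff bound on $\text{loss}_S(h^{\star})$: the deviation term then scales like $\sqrt{\alpha/n}$ instead of $\sqrt{1/n}$, which is precisely what keeps the final sample complexity linear in $1/\alpha$ rather than quadratic, matching the rate claimed in the lemma. One could alternatively take $q(S,h) = -\text{loss}_S(h)$ directly (sensitivity $1/n$) without affecting the bound up to constants.
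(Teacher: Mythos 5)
Your proof is correct and is exactly the standard argument behind this lemma: the paper states it as a known result (citing \citet{bun2020equivalence}, which in turn relies on the generic learner of \citet{kasiviswanathan2011can}), and that proof is precisely the exponential mechanism with the empirical-accuracy score, sensitivity $1$, the McSherry--Talwar utility bound, and a multiplicative Chernoff bound to relate $\text{loss}_S(h^{\star})$ to $\text{loss}_{\cD}(h^{\star})\le\alpha$ while keeping the sample complexity linear in $1/\alpha$. The only nitpick is that your chaining produces three correction terms rather than two, so budget each at $\alpha/3$; this does not affect the result.
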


Now we are ready to prove Theorem \ref{thm:GSPL}.

\begin{proof}[Proof of Theorem \ref{thm:GSPL}]

The learning algorithm $M$ is built on top of the Stable Historgram and the Generic Private Learner as described in Algorithm \ref{alg:DPlearner}. According to Lemma \ref{lemma:stable} and \ref{lemma:genericprivate}, we choose parameters
\[
k = O\big( \frac{ \log (1/\eta \beta\delta)}{\eta \epsilon} \big), \quad n^{\prime} = O\big( \frac{ \log (1/\eta \beta)}{\alpha \epsilon} \big).
\]

\begin{algorithm}[h]
    \caption{Differentially-Private Learner $M$}
    \begin{algorithmic}[1]\label{alg:DPlearner}
        \STATE Let $S_1, \cdots, S_k$ each consist of i.i.d. samples of size $m$ from $\cD$. Run $G$ on each batch of samples producing $h_1 = G(S_1), \cdots, h_k = G(S_k)$
        \STATE Run the Stable Histogram algorithm \textsc{Hist} on input $H = (h_1,\cdots,h_k)$ using privacy $(\epsilon/2,\delta)$ and accuracy $(\eta/8, \beta/3)$, publishing a list $L$ of frequent hypotheses
        \STATE Let $S^{\prime}$ consist of $n^{\prime}$ i.i.d. samples from $\cD$. Run \textsc{GenericLearner}$(S^{\prime})$ using $L$ with privacy $\epsilon/2$ and accuracy $(\alpha/2, \beta/3)$ to output a hypothesis $\hat{h}$
    \end{algorithmic}
\end{algorithm}

We show that the algorithm $M$ is $(\epsilon,\delta)$-DP. During the executions of $G(S_1), \cdots G(S_k)$, a change to one entry in a certain $S_i$ changes at most one outcome $h_i \in H$. Thus, differential privacy for this step is observed by taking expectations over the coin tosses of all the executions of $G$. Then the differential privacy for overall algorithm holds by simple composition of differentially-private \textsc{Hist} and \textsc{GenericLearner}.

Next, we prove that the algorithm $M$ is accurate. By standard generalization arguments, we have with probability at least $1-\beta/3$,
\[
\big|\textup{Freq}_H(h) - \prob_{S \sim\cD^m}\big(G(S)= h\big) \big|\le \frac{\eta}{8}
\]
for every $h \in  [K]^{\cX}$ as long as $k \ge O(\log (1/\beta)/\eta)$. Conditioned on this event, by accuracy of \textsc{Hist}, with probability $1-\beta/2$, it produces a list $L$ containing $h^{\star}$ together with a sequence of estimates that are accurate to within an additive error $\eta /8$. Then, $h^{\star}$ appears in $L$ with an estimate $a_{h^{\star}} \ge \eta - \eta/8 -\eta/8 = 3\eta/{4}$.

Now remove from $L$ every item $h$ with $a_h \le \frac{3\eta}{4}$. Since every estimate is accurate within $\eta/8$, $h$ appears in $L$ such that $\textup{Freq}_H(h) \ge \frac{3\eta}{4} - \frac{\eta}{8}= \frac{5\eta}{8}$. Since sum of frequencies is less than 1, the number of list $L$ should be less than $2/\eta$ (i.e. $|L| \le 2/\eta$). This list contains $h^{\star}$ such that $\text{loss}_{\cD}(h^{\star}) \le \alpha$. Hence the \textsc{GenericLearner} identifies $h^{\star}$ with $\text{loss}_{\cD}(h^{\star}) \le \alpha/2$ with probability at least $1-\beta/3$.
\end{proof}

\subsection{Extension to the Agnostic setting} \label{app:agnostic}

Theorem \ref{thm:OLPL} showed that online MC learnability continues to imply private MC learnability in the realizable setting. 
A similar result also holds even when the realizability assumption is violated, which is called \textit{agnostic setting}.

\begin{corollary}[Agnostic setting : Online MC learning implies private MC learning] \label{coro:OLPL_agnostic}
Let $\cH \subset [K]^{\cX}$ be a MC hypothesis class with $\Ldim(\cH) = d$.
Let $\epsilon, \delta \in (0,1)$ 
be privacy parameters and let $\alpha, \beta \in (0,1/2)$ be accuracy parameters. For
$n = O_d \big( \frac{\log (1/\beta \delta)}{\alpha^2 \epsilon} \big)$, 
there exists $(\epsilon, \delta)$-DP learning algorithm such that for every distribution $\cD$, 
given an input sample $S \sim \cD^n$, the output hypothesis $f = \mathcal{A}(S)$ satisfies
$$ 
\text{loss}_{\cD}(f) \le \min_{h \in \cH}\text{loss}_{\cD}(h) + \alpha
$$ with probability at least $1-\beta$.
\end{corollary}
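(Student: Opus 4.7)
The plan is to derive Corollary~\ref{coro:OLPL_agnostic} from Theorem~\ref{thm:OLPL} via a standard realizable-to-agnostic reduction in DP PAC learning. The extra $\alpha^{-1}$ factor in the sample complexity (turning $\alpha^{-1}$ into $\alpha^{-2}$) reflects the uniform convergence rate characteristic of agnostic PAC learning.

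First, I would apply Theorem~\ref{thm:OLPL} to obtain an $(\epsilon/2,\delta/2)$-DP realizable learner $\cA_{\mathrm{real}}$ for $\cH$ with sample complexity $m_0 = O_d(\log(1/\beta\delta)/(\alpha\epsilon))$, and then draw an agnostic sample $S \sim \cD^n$ with $n = O_d(\log(1/\beta\delta)/(\alpha^2\epsilon))$. Following the three-stage template of Algorithm~\ref{alg:DPlearner}, I would (a) form a candidate pool $H = (h_1,\ldots,h_k)$ by running $\cA_{\mathrm{real}}$ on many disjoint subsamples of $S$ of size $m_0$; (b) apply the Stable Histogram (Lemma~\ref{lemma:stable}) to $H$ to privately produce a short list $L$ of frequent outputs; and (c) apply the Generic Private Learner (Lemma~\ref{lemma:genericprivate}) on $L$ using a held-out portion of $S$ to privately select $\hat h \in L$ with near-minimum empirical loss. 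Finally, since finite Littlestone dimension bounds the Natarajan dimension of $\cH$, uniform convergence at rate $\tilde O(\sqrt{d/n})$ transfers the empirical near-optimality of $\hat h$ to the population, giving $\text{loss}_\cD(\hat h) \le \min_{h\in\cH}\text{loss}_\cD(h) + \alpha$ with probability $\ge 1-\beta$.

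The main obstacle is that the global stability guarantee powering Theorem~\ref{thm:OLPL} (via Theorem~\ref{thm:OLGS}) is established only for realizable distributions, so one must argue that the candidate pool $H$ still concentrates on a short list containing a near-optimal hypothesis in the agnostic setting. A natural route is to consider the oracle-relabeled distribution $\tilde{\cD}$ in which each $(x,y)$ is replaced by $(x,h^\star(x))$ for $h^\star \in \argmin_{h\in\cH}\text{loss}_\cD(h)$. Under $\tilde{\cD}$, Theorem~\ref{thm:OLPL} guarantees that $\cA_{\mathrm{real}}$ outputs some low-loss $f$ with probability $\eta = \Omega_d(1)$, and the coupling between $\cD^{m_0}$ and $\tilde{\cD}^{m_0}$ together with DP group-privacy for $\cA_{\mathrm{real}}$ (at the cost of an extra constant depending on $\text{OPT}$ and absorbed into the $O_d(\cdot)$ constant) shows that the same $f$ appears in the agnostic pool $H$ with at least $\mathrm{poly}(\eta)$ frequency. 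Hence $f\in L$ with high probability, and the Generic Private Learner together with uniform convergence closes the argument. The delicate point, which is where I expect the bulk of the work to lie, is tuning the subsample size $m_0$ and the number of subsamples $k$ so that the group-privacy blowup and the concentration of empirical losses simultaneously yield the claimed $O_d(\log(1/\beta\delta)/(\alpha^2\epsilon))$ sample complexity without incurring further dependencies on $\text{OPT}$.
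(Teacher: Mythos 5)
The paper proves this corollary in one line: it invokes \citet[Theorem 6]{pmlr-v125-alon20a}, a black-box transformation $\cA_{PrivateAgnostic}$ that turns any private realizable learner into a private agnostic one via a standard sub-sampling argument, and applies it to the realizable learner of Theorem~\ref{thm:OLPL}. You instead try to re-derive the agnostic guarantee from scratch by replaying the realizable pipeline (stable histogram plus generic private learner) and bridging the agnostic and realizable distributions with a coupling-plus-group-privacy argument. That is a genuinely different route, and it has a real gap.

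The gap is precisely in the step you flag as ``delicate.'' Under the natural coupling between a $\cD$-sample $S$ of size $m_0$ and its relabeled counterpart $\tilde S \sim \tilde\cD^{m_0}$, the number $k$ of coordinates where they differ is $\mathrm{Binomial}(m_0, \mathrm{OPT})$, with mean $\mathrm{OPT}\cdot m_0$. Group privacy over $k$ differing entries incurs a multiplicative factor of order $e^{-k\epsilon}$, so the transferred frequency bound is roughly $e^{-\mathrm{OPT}\cdot m_0 \epsilon}\,\eta$. Since $m_0 = O_d(\log(1/\beta\delta)/(\alpha\epsilon))$, this factor is $\exp(-\mathrm{OPT}\cdot O_d(\log(1/\beta\delta)/\alpha))$, which is \emph{not} a constant that can be absorbed into $O_d(\cdot)$: it vanishes as $\alpha\to 0$ or $\beta\delta\to 0$, and is already exponentially small whenever $\mathrm{OPT}$ is a constant. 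Conditioning instead on the event $k = O(1/\epsilon)$ (so the group-privacy blowup stays bounded) does not help, because that event has probability about $(1-\mathrm{OPT})^{m_0}$, again exponentially small in $m_0$. So the key claim your argument needs --- that the candidate pool built from agnostic data still contains a near-optimal hypothesis at $\Omega_d(1)$ frequency --- does not follow from coupling and group privacy. This is precisely the difficulty that the sub-sampling reduction of \citet{pmlr-v125-alon20a} is engineered to circumvent, which is why the paper delegates to it rather than arguing directly.
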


\begin{proof}	
\citet[Theorem 6]{pmlr-v125-alon20a} propose an algorithm, $\cA_{PrivateAgnostic}$, which transforms a private learner in the realizable setting to a private learner that can operate in the agnostic setting. 
The main idea is based on the standard sub-sampling method, and as a result, the transformed agnostic learner has a larger sample complexity by a factor of $1/\epsilon$.
Then Corollary \ref{coro:OLPL_agnostic} is shown by applying $\cA_{PrivateAgnostic}$ to the realizable learner used in Theorem \ref{thm:OLPL}.
\end{proof}


\subsection{Proof of Theorem \ref{thm:sufficient}} \label{app:suff}

We complete the proof of Theorem \ref{thm:sufficient}. 
The proof for Condition 4 is given in the main body.

\newtheorem*{new.thm:sufficient}{Theorem \ref{thm:sufficient}}

\begin{new.thm:sufficient}[restated]
Let $\cF \subset \cY^{\cX}$ be a real-valued function class such that
$\fat_{\gamma}(\cF) < \infty$ for every $\gamma>0$. If one of the following conditions holds, then $\cF$ is privately learnable.
\begin{enumerate}[leftmargin = *]
\item Either $\cF$ or $\cX$ is finite.
\item The range of $\cF$ over $\cX$ is finite (i.e., $\big|\{ f(x) ~|~ f \in \cF, x \in \cX \}\big| < \infty$).
\item $\cF$ has a finite cover with respect to the sup-norm at every scale.
\item $\cF$ has a finite sequential Pollard Pseudo-dimension.
\end{enumerate}
\end{new.thm:sufficient}

\begin{proof}
1. If $|\cF| < \infty$, then for sample complexity $n = \cO(\frac{\log |\cF| + \log(1/\beta)}{\alpha\epsilon})$ 
we directly run the $\epsilon$-DP Generic Private Learner to output with probability at least $1-\beta$, 
a hypothesis $\hat{f} \in \cF$ such that $\text{loss}_{S}(\hat{f}) \le \alpha$. 
Next, assume that $\cX$ is finite. The finiteness of $\cX$ does not imply finite $|\cF|$ because 
$\cY$ is continuous, but we can discretize $\cF$ at some scale $\gamma$, 
which gives us a finite MC hypothesis class $[\cF]_{\gamma}$. 
It is private-learnable by $\epsilon$-DP Generic Private Learner, and then the original class $\cF$ is 
also privately-learnable within accuracy $\gamma$.


2. Observe that this regression problem is essentially a MC problem. 
Furthermore, $\Ldim(\cF)$ by considering it as a MC problem is bounded above by $\fat_{\gamma}(\cF)$, where $\gamma$ is the minimal gap between consecutive values in the range of $\cF$ over $\cX$. 
This means that $\Ldim(\cF)$ is finite, and hence by the argument of Section \ref{subsec:OLPL_multiclass}, $\cF$ is privately learnable. 

3. Given an accuracy $\alpha$, $\cF$ has $n$ finite covers with a radius $r < \alpha$. We construct a set of representative function as 
$\cF^{\prime} = \{f_1, \cdots, f_n\} \subset \cF$ by arbitrarily choosing a representative $f_i$ from the $i$-th cover, and then run $\epsilon$-DP 
Generic Private Learner on $\cF^{\prime}$ to output a hypothesis $\hat{f} \in \cF$ with a small population loss.
\end{proof}


\end{document}